\theoremstyle{plain}
\newtheorem{thm}{Theorem}[section]
\theoremstyle{definition}
\theoremstyle{remark}
\newenvironment{nospaceflalign*}
 {\setlength{\abovedisplayskip}{10pt}\setlength{\belowdisplayskip}{10pt}%
  \csname flalign*\endcsname}
 {\csname endflalign*\endcsname\ignorespacesafterend}
\title{PD-MORL: Preference-Driven Multi-Objective Reinforcement Learning Algorithm}
\newcommand{\aamaketitle}{%
	\vbox{%
		\hsize\textwidth
		\linewidth\hsize
		\vskip 0.1in
		\noindent\rule{\textwidth}{4pt}
		\vskip 3mm
		\centering
		{\LARGE\bf Supplementary Material: PD-MORL: Preference-Driven Multi-Objective Reinforcement Learning Algorithm \par}
	\vskip 1mm
		\noindent\rule{\textwidth}{1pt}
		\vskip 0.2in
	}
}
\author{%
  Toygun Basaklar\\
  UW-Madison\\
  Madison, WI 53706 \\
  \texttt{basaklar@wisc.edu} \\
  \And
  Suat Gumussoy\\
  Siemens Corporate Technology\\
  Princeton, NJ 08540 \\
  \texttt{suat.gumussoy@siemens.com}
  \And
  Umit Y. Ogras\\
  UW-Madison\\
  Madison, WI 53706 \\
      \texttt{uogras@wisc.edu}
}
\begin{document}

\maketitle
\vspace{-5mm}
\begin{abstract}
Multi-objective reinforcement learning (MORL) approaches have emerged to tackle many real-world problems with multiple conflicting objectives by maximizing a joint objective function weighted by a preference vector. These approaches find \textit{fixed customized policies corresponding to preference vectors specified during training}. 
However, the design constraints and objectives typically change dynamically in real-life scenarios. Furthermore, storing a policy for each potential preference is not scalable. Hence, obtaining a set of Pareto front solutions for the entire preference space in a given domain with a single training is critical. To this end, we propose a novel MORL algorithm that trains a \textit{single universal network to cover the entire preference space scalable to continuous robotic tasks}. The proposed approach, Preference-Driven MORL (PD-MORL), utilizes the preferences as guidance to update the network parameters. It also employs a novel parallelization approach to increase sample efficiency.
We show that PD-MORL achieves up to 25\% larger hypervolume for challenging continuous control tasks and uses an order of magnitude fewer trainable parameters compared to prior approaches.
\vspace{-3mm}
\end{abstract}

\section{Introduction}
\label{sec:introduction}

Reinforcement learning (RL) has emerged as a promising approach to solve various challenging problems including board/video games~\citep{silver2016mastering,mnih2015human,shao2019survey}, robotics~\citep{nguyen2019review}, smart systems~\citep{gupta2020smarthome,yu2021reviewsmart}, and chip design/placement~\citep{zheng2019energy,mirhoseini2020chip}.   
The main objective in a standard RL setting is to obtain a policy that maximizes a single cumulative reward by interacting with the environment. 
However, many real-world problems involve multiple, possibly conflicting, objectives.
For example, robotics tasks should maximize speed while minimizing energy consumption.
In contrast to single-objective environments, performance is measured using multiple objectives. 
Consequently, there are multiple Pareto-optimal solutions as a function of the preference between objectives~\citep{navon2020learning}.
Multi-objective reinforcement learning (MORL) approaches~\citep{hayes2022review} have emerged to tackle these problems by maximizing a vector of rewards depending on the preferences. 

Existing approaches for multi-objective optimization generally transform the multidimensional objective space into a single dimension by statically assigning weights (preferences) to each objective~\citep{liu2014multiobjective}. Then, they use standard RL algorithms to obtain a policy optimized for the given preferences.   
These approaches suffer when the objectives have widely varying magnitudes since setting the preference weights requires application domain expertise.
More importantly, they can find only a single solution for a given set of goals and constraints. Thus, they need to repeat the training progress when the constraints or goals change.
However, repetitive retraining is impractical since the constraints and design can change frequently depending on the application domain.
Therefore, obtaining a set of Pareto front solutions that covers the entire preference space 
with a single training is critical~\citep{xu2020prediction,yang2019morl,abdolmaleki2020distributional}.

This paper presents a novel multi-objective reinforcement learning algorithm \textit{using a single policy network that covers the entire preference space scalable to continuous robotic tasks}. 
At its core, it uses a multi-objective version of Q-Learning, where we approximate the Q-values with a neural network. 
This network takes the states and preferences as inputs during training.
Making the preferences input parameters allows the trained model to produce the optimal policy for any user-specified preference at run-time. 
Since the user-specified preferences effectively drive the policy decisions, it is called preference-driven (PD) MORL. For each episode during training, we randomly sample a preference vector ($\boldsymbol{\omega}\in\Omega : \sum_{i=0}^L \omega_{i}=1$) from a uniform distribution.
Since the number of collected transitions by interacting with the environment for some preferences may be underrepresented,
we utilize hindsight experience replay buffer (HER)~\citep{andrychowicz2017hindsight}. 
As a key insight, we observe that the preference vectors
have similar directional angles to the corresponding vectorized Q-values for a given state. 
Using the insight,
we utilize the cosine similarity between the preference vector and vectorized Q-values in the Bellman's optimality operator to guide the training. 
However, not every Pareto front perfectly aligned with the preference vectors.
To mitigate this adverse effect, we fit a multi-dimensional interpolator to project the original preference vectors ($\boldsymbol{\omega}\in\Omega$) to normalized solution space to align preferences with the multi-objective solutions.
The projected preference vectors are used in our novel preference-driven optimality operator to obtain the target Q-values.
Additionally, to increase the sample efficiency of the algorithm, we divide the preference space into sub-spaces and assign a child process to these sub-spaces. Each child process is responsible for its own preference sub-space to collect transitions. This parallelization provides efficient exploration during training, assuring that there is no bias towards any preference sub-space.


PD-MORL can be employed in any off-policy RL algorithm. 
We develop a multi-objective version of the double deep Q-network algorithm with hindsight experience replay buffer (MO-DDQN-HER)~\citep{schaul2015prioritized} for problems with discrete action spaces and evaluate PD-MORL's performance on two commonly used MORL benchmarks: Deep Sea Treasure~\citep{hayes2022review} and Fruit Tree Navigation Task~\citep{yang2019morl}.
We specifically choose these two benchmarks to make a fair comparison with the prior approach~\citep{yang2019morl} that also aims to achieve a unified policy network.
Additionally, we develop a multi-objective version of the Twin Delayed Deep Deterministic policy gradient algorithm with hindsight experience replay buffer (MO-TD3-HER)~\citep{TD3} for problems with continuous action spaces.
Using  MO-TD3-HER, we evaluate PD-MORL on the multi-objective continuous control tasks such as MO-Walker2d-v2, MO-HalfCheetah-v2, MO-Ant-v2, MO-Swimmer-v2, MO-Hopper-v2 that are presented by~\citet{xu2020prediction}.
With the combination of the use of the cosine similarity term, the HER, and the parallel exploration,
PD-MORL achieves up to 78\% larger hypervolume for simple benchmarks and 25\% larger hypervolume for continuous control tasks 
and uses an order of magnitude fewer trainable parameters compared to prior approaches 
while achieving broad and dense Pareto front solutions. 
We emphasize that it achieves these results with a single policy network.

\vspace{-1mm}
\section{Related Work}
\label{sec:related_work}
\vspace{-1mm}

Existing MORL approaches can be classified as single-policy,  multi-policy, and meta-policy approaches.
The main difference among them is the number of policies learned during training.
%
Single-policy approaches transform a multi-objective problem into a single-objective problem by combining the rewards into a single scalar reward using a scalarization function. Then, they use standard RL approaches to maximize the scalar reward~\citep{roijers2013survey}.
Most of the previous studies find the optimal policy for \textit{a given preference} between the objectives using scalarization~\citep{van2013scalarized}.
Additionally, recent work takes an orthogonal approach and encodes preferences as constraints instead of scalarization~\citep{abdolmaleki2020distributional}. 
These approaches have two primary drawbacks: they require domain-specific knowledge, and preferences must be set beforehand.

Multi-policy approaches aim to obtain a set of policies that approximates the Pareto front of optimal solutions.
The most widely used approach is to repeatedly perform a single-policy algorithm over various preferences~\citep{roijers2014linear,mossalam2016multi,zuluaga2016varepsilon}.
However, this approach suffers from a large number of objectives and dense Pareto solutions for complex control problems.
In contrast,~\citet{pirotta2015multi} suggest a manifold-based policy search MORL approach which assumes policies to be sampled from a manifold. This manifold is defined as a parametric distribution over the policy parameter space.
Their approach updates the manifold according to an indicator function such that sample policies yield an improved Pareto front.
\citet{parisi2017manifold} extend this method with hypervolume and non-dominance count indicator functions using importance sampling to increase the sample efficiency of the algorithm.
However, the number of parameters to model the manifold grows quadratically~\citep{chen2019meta} as the number of policy parameters increases. 
Therefore, these approaches are not scalable for complex problems such as continuous control robotics tasks to achieve a dense Pareto front where deep neural networks with at least thousands of parameters are needed.

\citet{abels2019dynamic} and~\citet{yang2019morl} use a multi-objective Q-learning approach that simultaneously learns a set of policies over multiple preferences.
These studies use a single network that takes preferences as inputs and uses vectorized value function updates in contrast to standard scalarized value function updates.
An orthogonal approach by~\citet{chen2019meta} is the meta-policy approach that frames MORL as a meta-learning problem using a task distribution given by a distribution over the preference space.
The authors first train a meta-policy to approximate the Pareto front implicitly. 
Then, they obtain the Pareto optimal solution of a given preference by only fine-tuning the meta-policy with a few gradient updates.
A more recent study proposes an efficient evolutionary learning algorithm to update a population of policies simultaneously in each run to improve the approximation of the Pareto optimal solutions~\citep{xu2020prediction}. 
All non-dominated policies for each generation of policies are stored as an external Pareto front archive during the training. 
Finally, it outputs this Pareto front archive as approximated Pareto front solutions. 
However, this approach obtains a policy network for each solution on the Pareto front. 
Moreover, there is no specific knowledge of correspondence between preference vectors and solutions in this Pareto front. 

Distinct from the prior work, PD-MORL utilizes the relation between the Q-values and the preferences and proposes a novel update rule, including the cosine similarity between the Q-values and preferences. Additionally, PD-MORL increases its sample efficiency by using a novel parallelization approach with HER and provides efficient exploration. The combination of the cosine similarity term, the HER, and the parallel exploration achieve up to 25\% larger hypervolume for challenging continuous control tasks compared to prior approaches using an order of magnitude fewer trainable parameters. 


\vspace{-3mm}
\section{Background}
\label{sec:background}
\vspace{-1mm}
MORL requires learning several tasks with different rewards simultaneously. Each objective has an associated reward signal, transforming the reward from a scalar to a vector, $\mathbf{r}=[r_1, r_2, ..., r_L]^T$, where $L$ is the number of objectives.
A multi-objective problem can be formulated as a multi-objective Markov decision process (MOMDP) defined by the tuple $\langle \mathcal{S}, \mathcal{A}, \mathcal{P}, \mathbf{r}, \Omega, f_\omega \rangle$, where $\mathcal{S}$, $\mathcal{A}$, $\mathcal{P}(s^\prime|s,a)$, $\mathbf{r}$, and $\Omega$ represents state space, action space, transition distribution, reward vector, and preference space, respectively.
The function $f_\omega(\mathbf{r}) =\omega^{T}\mathbf{r}$ yields a scalarized reward using a preference of $\omega \in \Omega$.
If $\omega$ is taken as a fixed vector, the MOMDP boils down to a standard MDP, which can be solved using standard RL techniques.
However, if we consider all possible returns from a MOMDP and all possible preferences in $\Omega$, a set of non-dominated policies called the Pareto front can be obtained.
A policy $\pi$ is Pareto optimal if there is no other policy $\pi^\prime$ that improves its expected return for an objective without degrading the expected return of any other objective.
For complex problems, such as continuous control tasks, obtaining an optimal Pareto front using an RL setting is an NP-hard problem~\citep{xu2020prediction}.
Hence, the main goal of the MORL algorithms is to obtain an approximation of the Pareto front.
The quality of the approximated Pareto front is typically measured by two metrics: (i) hypervolume and (ii) sparsity~\citep{hayes2022review}.

\begin{wrapfigure}{r}{0.5\textwidth}
 \vspace{-5mm}
  \begin{center}
    \includegraphics[width=0.45\textwidth]{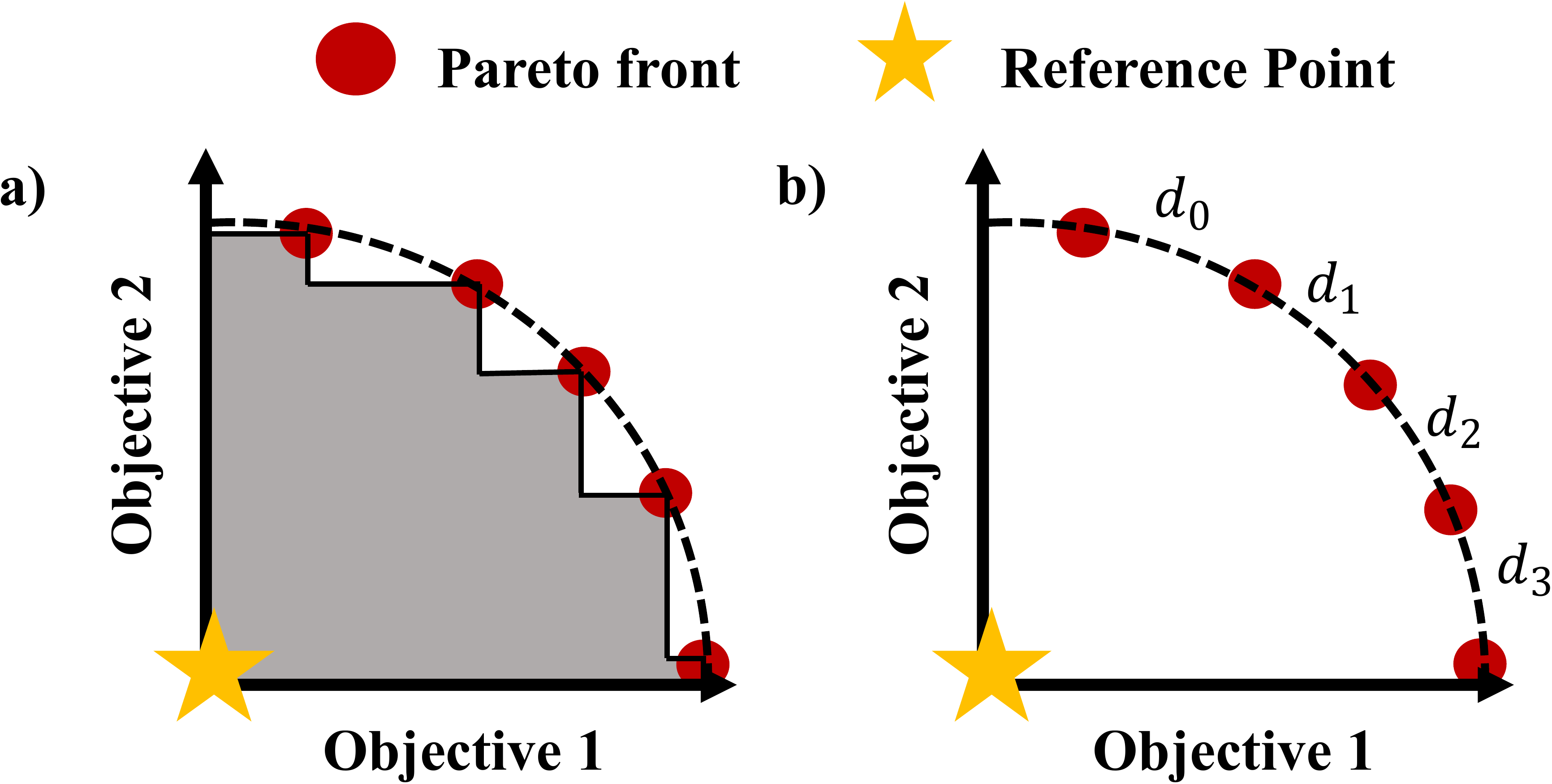}
  \end{center}
  \vspace{-4mm}
  \caption{(a) The hypervolume between a reference point and solutions in Pareto front is shown with the shaded area. (b) The sparsity is the average square distance ($d_i, i=\{0,\hdots,3\}$) between consecutive solutions in the Pareto front.}
  \label{fig:hv_sp}
  \vspace{-7mm}
\end{wrapfigure}
\noindent\textbf{Definition 1} (Hypervolume Indicator).
Let $P$ be a Pareto front approximation in an L-dimensional objective space and contains $N$ solutions.
Let $\mathbf{r}_0 \in \mathbb{R}^L$ be the reference point. Then, the hypervolume indicator is defined as:
\begin{equation}
  I_{H}(P) := \Lambda(H(P,\mathbf{r}_0))
\end{equation}
\noindent where $H(P,\mathbf{r}_0)=\{\mathbf{z}\in\mathbb{R}^L |~\exists~1\leq\ i \leq N : \mathbf{r}_0 \preceq \mathbf{z} \preceq P_i\}$ with $P_i$ being the $i^{th}$ solution in $P$ and $\preceq$ is the relation operator of multi-objective dominance. $\Lambda(\cdot)$ denotes the Lebesgue measure with $\Lambda(H(P,\mathbf{r}_0)) = \int_{\mathbb{R}^L} \mathbbm{1}_{H(P,\mathbf{r}_0)}(\mathbf{z})d\mathbf{z}$ and $\mathbbm{1}_{H(P,\mathbf{r}_0})$ being the characteristic function of ${H(P,\mathbf{r}_0)}$.

\noindent\textbf{Definition 2} (Sparsity).
  For the same $P$, sparsity is defined as:
  \begin{equation}
      Sp(P) := \frac{1}{N-1} \sum_{j=1}^{L} \sum_{i=1}^{N-1} (P_{i_{j}} - P_{{i+1}_{j}})^2
  \end{equation}
  where $P_i$ is the $i^{th}$ solution in $P$ and $P_{i_{j}}$ is the sorted set for the $j^{th}$ objective.

Improvement in any objective manifests as an increase in the hypervolume, as illustrated in Figure~\ref{fig:hv_sp}.
However, the hypervolume alone is insufficient to assess the quality of the Pareto front.
For instance, hypervolume may increase due to an increase in only one of the objectives which indicates that the algorithm does not improve other objectives.
The sparsity metric is introduced to assess whether the Pareto front solutions are dense or sparse. 
A larger hypervolume indicates that a more desired Pareto front approximation is achieved.
Lower sparsity values imply that a dense set of solutions is achieved.


\vspace{-1mm}
\section{PD-MORL: Preference Driven MORL Algorithm}
\label{sec:method}
\vspace{-1mm}
This section introduces the proposed preference-driven MORL algorithm. 
We first provide a theoretical analysis of PD-MORL based on the multi-objective version of Q-learning~\citep{watkins1992q} by following the framework provided by ~\citet{yang2019morl}.
The proofs for this theoretical analysis are available in Appendix~\ref{sec:theoretical_proof_app}.
We then introduce multi-objective double deep Q-networks with a preference-driven optimality operator for problems with discrete action space. 
Finally, we extend the Twin Delayed Deep Deterministic Policy Gradient (TD3) algorithm~\citep{fujimoto2018addressing} to a multi-objective version utilizing our proposed approach for problems with continuous action space.

\subsection{Theoretical Analysis of PD-MORL}
\label{sec:method_theoretical}
In standard Q-learning, the value space is defined as $\mathcal{Q} \in \mathbb{R}^{\mathcal{S} \times \mathcal{A}}$, containing all bounded functions $Q(s,a)$ which are the estimates of the total expected rewards when the agent is at state $s$, taking action $a$. We extend it to a multi-objective value space by defining the value space as $\mathcal{Q}\in\mathbb{R}^{{L}^{\mathcal{S}\times\mathcal{A}}}$, containing all bounded functions $\mathbf{Q}(s,a,\boldsymbol{\omega})$ which are the estimates of expected total rewards under preference $\boldsymbol{\omega}\in\mathbb{R}^{L}_{\geq0} : \sum_{i=0}^L \omega_{i}=1$. We then define a metric in this value space as:
\begin{equation} \label{eq:metric_method}
d(\mathbf{Q},\mathbf{Q}^\prime) := \sup_{\substack{s \in \mathcal{S} , a \in \mathcal{A} , \boldsymbol{\omega} \in \Omega}}
|\boldsymbol{\omega}^T(\mathbf{Q}(s,a,\boldsymbol{\omega})-\mathbf{Q}^\prime(s,a,\boldsymbol{\omega}))|. 
\end{equation}
This metric gives the distance between $\mathbf{Q}$ and $\mathbf{Q}^{\prime}$ as the supremum norm of the scalarized distance between these two vectors where the identity of indiscernables ($d(\mathbf{Q},\mathbf{Q}^\prime)=0 \Leftrightarrow \mathbf{Q}=\mathbf{Q}^\prime$) does not hold and thus, makes metric $d$ a pseudo-metric. 
Further details for the axioms of this pseudo-metric are given in Appendix~\ref{sec:theoretical_proof_app}.
In the following Theorem, we show that the metric space equipped with this metric is complete since the limit of the sequence of operators is required to be in this space.

\begin{thm}[Multi-objective Metric Space $(\mathcal{Q},d)$ is Complete] \label{thm:completeness}
The metric space $(\mathcal{Q},d)$ is complete and every Cauchy sequence $\mathbf{Q}_n(s,a,\boldsymbol{\omega})$ is convergent in metric space $(\mathcal{Q},d)$ $\forall~s,a,\boldsymbol{\omega}\in \mathcal{S},\mathcal{A},\Omega$. 
\end{thm}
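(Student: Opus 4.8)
The plan is to reduce completeness of $(\mathcal{Q}, d)$ to completeness of a space of real-valued bounded functions equipped with a supremum norm, which is a standard fact. Concretely, given a Cauchy sequence $\mathbf{Q}_n$ in $(\mathcal{Q},d)$, I would first observe that the scalarized functions $g_n(s,a,\boldsymbol{\omega}) := \boldsymbol{\omega}^T \mathbf{Q}_n(s,a,\boldsymbol{\omega})$ form a Cauchy sequence in the Banach space $\bigl(B(\mathcal{S}\times\mathcal{A}\times\Omega), \|\cdot\|_\infty\bigr)$ of bounded real functions with the sup norm, since by definition $d(\mathbf{Q}_n,\mathbf{Q}_m) = \|g_n - g_m\|_\infty$. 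Hence $g_n$ converges uniformly to some bounded $g$. The remaining work is to exhibit a limit \emph{vector-valued} function $\mathbf{Q} \in \mathcal{Q}$ whose scalarization is $g$, and to check $d(\mathbf{Q}_n, \mathbf{Q}) \to 0$.

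For the second step I would argue componentwise. Fix $(s,a)$ and note that for each fixed $\boldsymbol{\omega}\in\Omega$ the real sequence $\boldsymbol{\omega}^T\mathbf{Q}_n(s,a,\boldsymbol{\omega})$ is Cauchy in $\mathbb{R}$, hence converges; this already pins down $g$. To get a genuine vector limit, I would evaluate at enough preference directions: since $\Omega$ spans (it contains, e.g., the normalized standard basis directions or more simply since the map $\mathbf{v}\mapsto (\boldsymbol{\omega}_k^T\mathbf{v})_k$ for $L$ affinely independent $\boldsymbol{\omega}_k\in\Omega$ is invertible), the convergence of the scalar projections along these directions forces each coordinate $Q_{n}^{(i)}(s,a,\boldsymbol{\omega})$ to converge — but here one must be careful because the metric $d$ only controls the scalarization \emph{at the matching} $\boldsymbol{\omega}$, not at other preferences. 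The clean way around this: define $\mathbf{Q}(s,a,\boldsymbol{\omega})$ as \emph{any} bounded vector function whose scalarization equals $g(s,a,\boldsymbol{\omega})$ — for instance $\mathbf{Q}(s,a,\boldsymbol{\omega}) := g(s,a,\boldsymbol{\omega})\cdot \boldsymbol{\omega}/\|\boldsymbol{\omega}\|_2^2$, which is bounded because $g$ is bounded and $\|\boldsymbol{\omega}\|_2$ is bounded below on the simplex $\Omega$ by $1/\sqrt{L}$. Then $\boldsymbol{\omega}^T\mathbf{Q}(s,a,\boldsymbol{\omega}) = g(s,a,\boldsymbol{\omega})$ by construction, so $d(\mathbf{Q}_n,\mathbf{Q}) = \|g_n - g\|_\infty \to 0$, and $\mathbf{Q}\in\mathcal{Q}$.

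This last construction also dovetails with the pseudo-metric remark in the text: the limit is only unique up to the equivalence "same scalarization for every matching $\boldsymbol{\omega}$," so choosing the convenient representative is legitimate. The main obstacle — and the place I expect the argument to need the most care — is exactly this point: because $d$ is a pseudo-metric that only sees $\boldsymbol{\omega}^T\mathbf{Q}(\cdot,\cdot,\boldsymbol{\omega})$, one cannot naively conclude that the full vector-valued sequence $\mathbf{Q}_n$ converges pointwise in $\mathbb{R}^L$; completeness holds only in the quotient sense, and the proof must either work in that quotient or (as above) hand-pick a canonical lift of the scalar limit back into $\mathcal{Q}$. Once that is set up, the rest is the routine transfer of Cauchyness and uniform limits through the identity $d(\mathbf{Q},\mathbf{Q}') = \|\boldsymbol{\omega}^T(\mathbf{Q}-\mathbf{Q}')\|_\infty$.
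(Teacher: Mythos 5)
Your proposal is correct, and at the one step that actually matters it takes a different (and more careful) route than the paper. The paper's own proof proceeds by arguing that $d$-Cauchyness gives, for each fixed $(s,a,\boldsymbol{\omega})$, that ``the sequence of $L$-dimensional real numbers $\mathbf{Q}_n(s,a,\boldsymbol{\omega})$ is a Cauchy sequence'' in $\mathbb{R}^L$, and then invokes completeness of $\mathbb{R}^L$ to get a pointwise vector limit $\mathbf{Q}$. As you correctly observe, that inference does not follow: the metric only controls the scalar projection $\boldsymbol{\omega}^T\mathbf{Q}_n(s,a,\boldsymbol{\omega})$ along the single matching direction $\boldsymbol{\omega}$, so the vector sequence need not converge at all (e.g.\ one can add to $\mathbf{Q}_n$ any bounded non-convergent perturbation orthogonal to $\boldsymbol{\omega}$ without changing any distance). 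Your workaround --- pass to the scalarized functions $g_n$, use completeness of bounded real functions under the sup norm, and then lift the limit $g$ back into $\mathcal{Q}$ via the explicit bounded representative $\mathbf{Q}(s,a,\boldsymbol{\omega}) = g(s,a,\boldsymbol{\omega})\,\boldsymbol{\omega}/\|\boldsymbol{\omega}\|_2^2$ (bounded since $\|\boldsymbol{\omega}\|_2 \geq 1/\sqrt{L}$ on the simplex) --- is exactly what the pseudo-metric structure permits and requires; the limit is only determined up to the indiscernibility relation, so any lift with the right scalarization works. The trade-off is that the paper's version, were it justified, would produce a ``canonical'' pointwise limit, whereas yours makes explicit that completeness here is really a statement about the quotient; your version is the one that actually closes the argument.
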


Given a policy $\pi$ and sampled transition $\tau$, we can define multi-objective Bellman's evaluation operator $\mathcal{T}_{\pi}$ using the metric space ($\mathcal{Q},d$) as:
\begin{equation} \label{eq:evaluation_operator_method}
(\mathcal{T}_{\pi}\mathbf{Q})(s,a,\boldsymbol{\omega}) := \mathbf{r}(s,a) + \gamma \mathbb{E}_{\mathcal{T}\sim(\mathcal{P},\pi)}\mathbf{Q}(s^\prime,a^\prime,\boldsymbol{\omega})
\end{equation}
where $(s^\prime,a^\prime,\boldsymbol{\omega})$ denotes the next state-action-preference pair and $\gamma\in(0,1)$ is the discount factor.

We define a \textit{preference-driven optimality operator} $\mathcal{T}$
by adding a cosine similarity term between preference vectors and state-action values to the multi-objective Bellman's optimality operator as:
\begin{equation} \label{eq:optimality_operator_method}
(\mathcal{T}\mathbf{Q})(s,a,\boldsymbol{\omega}) := \mathbf{r}(s,a) + \gamma \mathbb{E}_{s^\prime \sim \mathcal{P}(\cdot |s,a)}\mathbf{Q}(s^\prime,\sup_{\substack{a^{\prime}\in \mathcal{A}}}(S_c(\boldsymbol{\omega},\mathbf{Q}(s^\prime,a^{\prime},\boldsymbol{\omega}))\cdot(\boldsymbol{\omega}^T\mathbf{Q}(s^\prime,a^{\prime},\boldsymbol{\omega}))),\boldsymbol{\omega})
\end{equation}
where $S_c(\boldsymbol{\omega},\mathbf{Q}(s^\prime,a^{\prime},\boldsymbol{\omega}))$ denotes the cosine similarity between preference vector and Q-value. This term enables our optimality operator to choose actions that align the preferences with the Q-values and  maximize the target value, as elaborated in Section~\ref{sec:modqn} under the preference alignment subtitle.

\begin{thm}[Multi-objective Bellman's Evaluation Operator is Contraction]
\label{theorem:evaluation_contraction_method}
Let $(\mathcal{Q},d)$ be a complete metric space (as in Theorem~\ref{thm:completeness}). Let $\mathbf{Q}$ and $\mathbf{Q}^{\prime}$ be any two multi-objective Q-value functions in this space. The multi-objective Bellman's evaluation operator is a contraction and $d(\mathcal{T}_{\pi}\mathbf{Q},\mathcal{T}_{\pi}\mathbf{Q}^{\prime})\leq \gamma d(\mathbf{Q},\mathbf{Q}^{\prime})$ holds for the Lipschitz constant $\gamma$ (the discount factor).
\end{thm}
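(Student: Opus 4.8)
The plan is to exploit the linearity of the scalarization map $\mathbf{Q}\mapsto\boldsymbol{\omega}^T\mathbf{Q}$ and of the expectation, together with the fact that the reward term $\mathbf{r}(s,a)$ does not depend on the $\mathbf{Q}$-function and so cancels in the difference. First I would fix an arbitrary triple $(s,a,\boldsymbol{\omega})\in\mathcal{S}\times\mathcal{A}\times\Omega$ and write out $\boldsymbol{\omega}^T\big((\mathcal{T}_{\pi}\mathbf{Q})(s,a,\boldsymbol{\omega})-(\mathcal{T}_{\pi}\mathbf{Q}^{\prime})(s,a,\boldsymbol{\omega})\big)$ using \eqref{eq:evaluation_operator_method}. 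The reward terms subtract out, leaving $\gamma\,\boldsymbol{\omega}^T\,\mathbb{E}_{\tau\sim(\mathcal{P},\pi)}\big[\mathbf{Q}(s^\prime,a^\prime,\boldsymbol{\omega})-\mathbf{Q}^\prime(s^\prime,a^\prime,\boldsymbol{\omega})\big]$. Since $\boldsymbol{\omega}^T(\cdot)$ is a bounded linear functional it commutes with the expectation, so this equals $\gamma\,\mathbb{E}_{\tau\sim(\mathcal{P},\pi)}\big[\boldsymbol{\omega}^T(\mathbf{Q}(s^\prime,a^\prime,\boldsymbol{\omega})-\mathbf{Q}^\prime(s^\prime,a^\prime,\boldsymbol{\omega}))\big]$.

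Next I would take absolute values and apply Jensen's inequality (equivalently, the triangle inequality for integrals) to pull $|\cdot|$ inside the expectation, obtaining $\big|\gamma\,\mathbb{E}[\cdots]\big|\le\gamma\,\mathbb{E}\big[\,\big|\boldsymbol{\omega}^T(\mathbf{Q}(s^\prime,a^\prime,\boldsymbol{\omega})-\mathbf{Q}^\prime(s^\prime,a^\prime,\boldsymbol{\omega}))\big|\,\big]$. The crucial observation is that the preference vector $\boldsymbol{\omega}$ is carried through the transition unchanged, so for every realization $(s^\prime,a^\prime)$ the integrand is exactly of the form appearing inside the supremum in the definition of $d$ in \eqref{eq:metric_method}; it is therefore bounded pointwise by $d(\mathbf{Q},\mathbf{Q}^\prime)$. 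Because this bound is uniform in $(s^\prime,a^\prime)$, the expectation of it is also at most $d(\mathbf{Q},\mathbf{Q}^\prime)$, which yields $\big|\boldsymbol{\omega}^T\big((\mathcal{T}_{\pi}\mathbf{Q})(s,a,\boldsymbol{\omega})-(\mathcal{T}_{\pi}\mathbf{Q}^{\prime})(s,a,\boldsymbol{\omega})\big)\big|\le\gamma\,d(\mathbf{Q},\mathbf{Q}^\prime)$.

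Finally, since the right-hand side no longer depends on $(s,a,\boldsymbol{\omega})$, I would take the supremum over all $(s,a,\boldsymbol{\omega})\in\mathcal{S}\times\mathcal{A}\times\Omega$ on the left, which by definition equals $d(\mathcal{T}_{\pi}\mathbf{Q},\mathcal{T}_{\pi}\mathbf{Q}^{\prime})$, and conclude $d(\mathcal{T}_{\pi}\mathbf{Q},\mathcal{T}_{\pi}\mathbf{Q}^{\prime})\le\gamma\,d(\mathbf{Q},\mathbf{Q}^{\prime})$ with Lipschitz constant $\gamma\in(0,1)$. I expect the only delicate point to be the interchange of the linear scalarization and the absolute value with the expectation over $(\mathcal{P},\pi)$: this requires the $\mathbf{Q}$-functions to be bounded — which holds since $\mathcal{Q}$ consists of bounded functions — so that all integrals are finite and Jensen's inequality applies; everything else is a routine chain of (in)equalities. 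Note that completeness of $(\mathcal{Q},d)$ from Theorem~\ref{thm:completeness} is not actually used in the contraction estimate itself, but it is what later licenses invoking the Banach fixed-point theorem for $\mathcal{T}_{\pi}$.
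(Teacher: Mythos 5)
Your proof is correct and follows essentially the same route as the paper's: the reward terms cancel, $\gamma$ factors out, $|\mathbb{E}[\cdot]|\le\mathbb{E}[|\cdot|]$ pulls the absolute value inside the expectation, and the pointwise bound of the integrand by $d(\mathbf{Q},\mathbf{Q}^\prime)$ followed by a supremum over $(s,a,\boldsymbol{\omega})$ yields the contraction. If anything, your ordering---commuting $\boldsymbol{\omega}^T$ with the expectation first and then applying Jensen to the scalar $\boldsymbol{\omega}^T(\mathbf{Q}-\mathbf{Q}^\prime)$---is slightly cleaner than the paper's intermediate step, which passes through a component-wise absolute value $\boldsymbol{\omega}^T\mathbb{E}[|\mathbf{Q}-\mathbf{Q}^\prime|]$ before rewriting it as $\mathbb{E}[|\boldsymbol{\omega}^T(\mathbf{Q}-\mathbf{Q}^\prime)|]$.
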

\begin{thm}[Preference-driven Multi-objective Bellman's Optimality Operator is Contraction]\label{theorem:optimality_contraction_method}
Let $(\mathcal{Q},d)$ be a complete metric space. Let $\mathbf{Q}$ and $\mathbf{Q}^{\prime}$ be any two multi-objective Q-value functions in this space. The preference-driven multi-objective Bellman's optimality operator is a contraction and $d(\mathcal{T}\mathbf{Q},\mathcal{T}\mathbf{Q}^{\prime})\leq \gamma d(\mathbf{Q},\mathbf{Q}^{\prime})$ holds for the Lipschitz constant $\gamma$ (the discount factor). 
\end{thm}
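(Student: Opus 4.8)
The plan is to prove the contraction inequality directly, reusing the bookkeeping from the proof of Theorem~\ref{theorem:evaluation_contraction_method} and isolating the one genuinely new ingredient: the greedy action selection built into the preference-driven optimality operator~\eqref{eq:optimality_operator_method}. Completeness of $(\mathcal{Q},d)$ is not needed for the contraction property itself; it is what later allows the Banach fixed-point theorem to be applied to $\mathcal{T}$. Fix an arbitrary $(s,a,\boldsymbol{\omega})\in\mathcal{S}\times\mathcal{A}\times\Omega$. Since $(\mathcal{T}\mathbf{Q})(s,a,\boldsymbol{\omega})$ and $(\mathcal{T}\mathbf{Q}^{\prime})(s,a,\boldsymbol{\omega})$ carry the same immediate reward $\mathbf{r}(s,a)$, that term cancels inside $\boldsymbol{\omega}^{T}\big((\mathcal{T}\mathbf{Q})-(\mathcal{T}\mathbf{Q}^{\prime})\big)(s,a,\boldsymbol{\omega})$, leaving only the discounted expectation over $s^{\prime}\sim\mathcal{P}(\cdot\,|\,s,a)$. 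Writing $a^{\star}_{\mathbf{Q}}(s^{\prime},\boldsymbol{\omega})$ for an action attaining $\sup_{a^{\prime}}S_c(\boldsymbol{\omega},\mathbf{Q}(s^{\prime},a^{\prime},\boldsymbol{\omega}))\,(\boldsymbol{\omega}^{T}\mathbf{Q}(s^{\prime},a^{\prime},\boldsymbol{\omega}))$, and similarly $a^{\star}_{\mathbf{Q}^{\prime}}$ for $\mathbf{Q}^{\prime}$, then pushing $\boldsymbol{\omega}^{T}$ through the expectation by linearity and using $|\mathbb{E}[\cdot]|\le\mathbb{E}[|\cdot|]$, it suffices to bound, for every fixed $s^{\prime}$, the quantity $\big|\boldsymbol{\omega}^{T}\mathbf{Q}(s^{\prime},a^{\star}_{\mathbf{Q}},\boldsymbol{\omega})-\boldsymbol{\omega}^{T}\mathbf{Q}^{\prime}(s^{\prime},a^{\star}_{\mathbf{Q}^{\prime}},\boldsymbol{\omega})\big|$ by $d(\mathbf{Q},\mathbf{Q}^{\prime})$; multiplying by $\gamma$ and taking the supremum over $(s,a,\boldsymbol{\omega})$ then delivers $d(\mathcal{T}\mathbf{Q},\mathcal{T}\mathbf{Q}^{\prime})\le\gamma\,d(\mathbf{Q},\mathbf{Q}^{\prime})$.

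To bound that quantity I would argue by cases on which of the two scalarized greedy values is larger; by symmetry assume $\boldsymbol{\omega}^{T}\mathbf{Q}(s^{\prime},a^{\star}_{\mathbf{Q}},\boldsymbol{\omega})\ge\boldsymbol{\omega}^{T}\mathbf{Q}^{\prime}(s^{\prime},a^{\star}_{\mathbf{Q}^{\prime}},\boldsymbol{\omega})$. Adding and subtracting $\boldsymbol{\omega}^{T}\mathbf{Q}^{\prime}(s^{\prime},a^{\star}_{\mathbf{Q}},\boldsymbol{\omega})$ splits the (nonnegative) difference as $\big[\boldsymbol{\omega}^{T}\mathbf{Q}(s^{\prime},a^{\star}_{\mathbf{Q}},\boldsymbol{\omega})-\boldsymbol{\omega}^{T}\mathbf{Q}^{\prime}(s^{\prime},a^{\star}_{\mathbf{Q}},\boldsymbol{\omega})\big]+\big[\boldsymbol{\omega}^{T}\mathbf{Q}^{\prime}(s^{\prime},a^{\star}_{\mathbf{Q}},\boldsymbol{\omega})-\boldsymbol{\omega}^{T}\mathbf{Q}^{\prime}(s^{\prime},a^{\star}_{\mathbf{Q}^{\prime}},\boldsymbol{\omega})\big]$. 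The first bracket is at most $\sup_{s^{\prime},a^{\prime},\boldsymbol{\omega}}|\boldsymbol{\omega}^{T}(\mathbf{Q}-\mathbf{Q}^{\prime})|=d(\mathbf{Q},\mathbf{Q}^{\prime})$ by the definition of the metric in~\eqref{eq:metric_method}. For the second bracket I would use that $a^{\star}_{\mathbf{Q}^{\prime}}$ maximizes the cosine-weighted scalarized value of $\mathbf{Q}^{\prime}$, together with positivity of the similarity factor — on the bounded-reward benchmarks the preference vector lies in the nonnegative orthant and the $\mathbf{Q}$-estimates are likewise componentwise nonnegative, so $S_c(\boldsymbol{\omega},\cdot)\in(0,1]$ — to conclude that swapping $a^{\star}_{\mathbf{Q}^{\prime}}$ for $a^{\star}_{\mathbf{Q}}$ cannot increase $\boldsymbol{\omega}^{T}\mathbf{Q}^{\prime}$, i.e.\ this bracket is $\le 0$. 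Hence the whole difference is $\le d(\mathbf{Q},\mathbf{Q}^{\prime})$, and by the WLOG assumption so is its absolute value, which closes the reduction from the first paragraph.

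The main obstacle is precisely the second bracket. In the plain multi-objective optimality operator of~\citet{yang2019morl} the greedy filter maximizes $\boldsymbol{\omega}^{T}\mathbf{Q}$ itself, so this bracket is nonpositive by the elementary inequality $|\max_{a}f(a)-\max_{a}g(a)|\le\max_{a}|f(a)-g(a)|$ applied to the scalarized values. Here the filter instead maximizes the product $S_c(\boldsymbol{\omega},\mathbf{Q})\,(\boldsymbol{\omega}^{T}\mathbf{Q})$, so one must show that the $S_c$ reweighting does not reorder the maximizer in a way that destroys the inequality; making this step precise — pinning down the regime in which $S_c$ stays strictly positive so that the argmax of the product still controls $\boldsymbol{\omega}^{T}\mathbf{Q}$ for the purposes of this bound — is the crux of the argument. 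Once that step is secured, everything else is the same contraction bookkeeping (reward cancellation, linearity of $\boldsymbol{\omega}^{T}$, $|\mathbb{E}[\cdot]|\le\mathbb{E}[|\cdot|]$, and the final supremum) already used in Theorems~\ref{thm:completeness} and~\ref{theorem:evaluation_contraction_method}.
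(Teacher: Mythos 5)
Your proposal follows the paper's proof of this theorem essentially line for line: the same cancellation of $\mathbf{r}(s,a)$, the same use of $|\mathbb{E}[\cdot]|\le\mathbb{E}[|\cdot|]$ and $\mathbb{E}[|\cdot|]\le\sup|\cdot|$, the same reduction to bounding $|\boldsymbol{\omega}^{T}\mathbf{Q}(s^\prime,a^{\star}_{\mathbf{Q}},\boldsymbol{\omega})-\boldsymbol{\omega}^{T}\mathbf{Q}^\prime(s^\prime,a^{\star}_{\mathbf{Q}^\prime},\boldsymbol{\omega})|$, the same w.l.o.g.\ sign assumption, and the same final supremum. Your add-and-subtract of $\boldsymbol{\omega}^{T}\mathbf{Q}^\prime(s^\prime,a^{\star}_{\mathbf{Q}},\boldsymbol{\omega})$ is just a rewriting of the paper's step ``since $\boldsymbol{\omega}^{T}\mathbf{Q}^\prime(s^\prime,a^{\prime},\boldsymbol{\omega})\le\boldsymbol{\omega}^{T}\mathbf{Q}^\prime(s^\prime,a^{\star}_{\mathbf{Q}^\prime},\boldsymbol{\omega})$,'' so there is no methodological divergence to report.

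The step you single out as the crux is, however, a genuine gap --- in your sketch and in the paper's proof alike, where it is simply asserted. The needed inequality says that the action maximizing the cosine-weighted product $S_c(\boldsymbol{\omega},\mathbf{Q}^\prime)\cdot(\boldsymbol{\omega}^{T}\mathbf{Q}^\prime)$ also weakly maximizes the plain scalarization $\boldsymbol{\omega}^{T}\mathbf{Q}^\prime$. That is true for the unweighted operator (where the greedy criterion is $\boldsymbol{\omega}^{T}\mathbf{Q}^\prime$ itself), but it is exactly the property the cosine reweighting is designed to break: the paper's own motivating example in Section~\ref{sec:modqn}, with $Q_1=\{0.9,\,1\}$, $Q_2=\{0.1,\,10\}$ and $\boldsymbol{\omega}=\{0.9,\,0.1\}$, exhibits a $\mathbf{Q}^\prime$ whose product-argmax is $a_1$ even though $\boldsymbol{\omega}^{T}Q_2=1.09>0.91=\boldsymbol{\omega}^{T}Q_1$; if $a^{\star}_{\mathbf{Q}}$ happened to be $a_2$, your second bracket would be strictly positive. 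Positivity of the similarity factor does not repair this: $S_c\in(0,1]$ only yields $\boldsymbol{\omega}^{T}\mathbf{Q}^\prime(a^{\star}_{\mathbf{Q}^\prime})\ge S_c(a)\cdot\boldsymbol{\omega}^{T}\mathbf{Q}^\prime(a)$, which is weaker than what you need. Closing the argument genuinely requires something extra --- e.g.\ an assumption forcing the product-argmax and the scalarized argmax to coincide, or replacing $d$ by a metric built on the cosine-weighted scalarization so that the standard ``greedy action dominates'' inequality applies --- and neither your proposal nor the paper supplies it.
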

Theorem~\ref{theorem:evaluation_contraction_method} and Theorem~\ref{theorem:optimality_contraction_method}
state that multi-objective evaluation and optimality operators are contractions.
They ensure that we can apply our optimality operator in Equation~\ref{eq:optimality_operator_method} iteratively to obtain the optimal multi-objective value function given by Theorem~\ref{theorem:fixed_point_method} and Theorem~\ref{theorem:optimal_fixed_point_method} below, respectively. %
\begin{thm}[Preference-driven Multi-objective Optimality Operator Converges to a Fixed-Point]
\label{theorem:fixed_point_method}
Let $(\mathcal{Q},d)$ be a complete metric space which is proved above and let $\mathcal{T} : \mathcal{Q} \rightarrow \mathcal{Q}$ be a contraction on $\mathcal{Q}$ with modulus $\gamma$. Then, $\mathcal{T}$ has a unique fixed point $\mathbf{Q}^* \in \mathcal{Q}$ such that $\mathcal{T}(\mathbf{Q}) = \mathbf{Q}^*$.
\end{thm}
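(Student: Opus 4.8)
The statement is precisely the Banach fixed-point theorem specialized to the complete pseudo-metric space $(\mathcal{Q},d)$, so the plan is to run the classical contraction-mapping argument while carefully tracking that $d$ is only a pseudo-metric. First I would fix an arbitrary $\mathbf{Q}_0 \in \mathcal{Q}$ and define the Picard iterates $\mathbf{Q}_{n+1} := \mathcal{T}\mathbf{Q}_n$. Iterating the contraction bound from Theorem~\ref{theorem:optimality_contraction_method} gives $d(\mathbf{Q}_{n+1},\mathbf{Q}_n) \le \gamma^n\, d(\mathbf{Q}_1,\mathbf{Q}_0)$, and then the triangle inequality (which does hold for pseudo-metrics) yields, for $m>n$,
\[
d(\mathbf{Q}_m,\mathbf{Q}_n) \;\le\; \sum_{k=n}^{m-1} \gamma^k\, d(\mathbf{Q}_1,\mathbf{Q}_0) \;\le\; \frac{\gamma^n}{1-\gamma}\, d(\mathbf{Q}_1,\mathbf{Q}_0),
\]
which tends to $0$ as $n\to\infty$ since $\gamma\in(0,1)$. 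Hence $\{\mathbf{Q}_n\}$ is Cauchy, and by the completeness established in Theorem~\ref{thm:completeness} it converges in $(\mathcal{Q},d)$ to some $\mathbf{Q}^\ast \in \mathcal{Q}$.

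Next I would verify that $\mathbf{Q}^\ast$ is a fixed point. Since a contraction is Lipschitz-continuous with respect to $d$, the estimate $d(\mathcal{T}\mathbf{Q}^\ast,\mathbf{Q}_{n+1}) = d(\mathcal{T}\mathbf{Q}^\ast,\mathcal{T}\mathbf{Q}_n) \le \gamma\, d(\mathbf{Q}^\ast,\mathbf{Q}_n) \to 0$ shows $\mathbf{Q}_{n+1} \to \mathcal{T}\mathbf{Q}^\ast$; combined with $\mathbf{Q}_{n+1}\to\mathbf{Q}^\ast$ and the triangle inequality this forces $d(\mathcal{T}\mathbf{Q}^\ast,\mathbf{Q}^\ast)=0$, i.e.\ $\mathbf{Q}^\ast$ is a fixed point of $\mathcal{T}$ in the sense appropriate to a pseudo-metric. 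For uniqueness, if $\widetilde{\mathbf{Q}}$ is another fixed point then $d(\mathbf{Q}^\ast,\widetilde{\mathbf{Q}}) = d(\mathcal{T}\mathbf{Q}^\ast,\mathcal{T}\widetilde{\mathbf{Q}}) \le \gamma\, d(\mathbf{Q}^\ast,\widetilde{\mathbf{Q}})$, and $\gamma<1$ forces $d(\mathbf{Q}^\ast,\widetilde{\mathbf{Q}})=0$; equivalently, the scalarized values $\boldsymbol{\omega}^T\mathbf{Q}^\ast(s,a,\boldsymbol{\omega})$ are uniquely determined for all $s,a,\boldsymbol{\omega}$, which is all the pseudo-metric can guarantee and all that the subsequent convergence result requires.

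The only real subtlety — and the step I expect to need the most care — is the bookkeeping forced by $d$ being a pseudo-metric rather than a metric: limits of Cauchy sequences are not unique as set-theoretic objects, so ``convergence'' and ``fixed point'' must be read modulo $d$-distance zero throughout, and the uniqueness conclusion is correspondingly a statement about the scalarized $Q$-values rather than the raw vector-valued functions. Everything else is the textbook Picard/Banach computation and goes through verbatim once Theorems~\ref{thm:completeness} and~\ref{theorem:optimality_contraction_method} are available. I would also remark that the identical argument applies with the evaluation operator $\mathcal{T}_\pi$ in place of $\mathcal{T}$, invoking Theorem~\ref{theorem:evaluation_contraction_method} instead, which yields the companion fixed-point statement for policy evaluation.
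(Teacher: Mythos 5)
Your proposal is correct and follows essentially the same route as the paper: the classical Picard/Banach iteration, the geometric-series bound via the triangle inequality, completeness from Theorem~\ref{thm:completeness} to extract the limit, continuity of the contraction to identify it as a fixed point, and the contraction inequality for uniqueness. If anything, you are more careful than the paper in flagging that $d$ is only a pseudo-metric, so that ``uniqueness'' holds only modulo $d$-distance zero (i.e.\ at the level of the scalarized values $\boldsymbol{\omega}^T\mathbf{Q}$), a caveat the paper's proof leaves implicit.
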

\begin{thm}[Optimal Fixed Point of Optimality Operator]
\label{theorem:optimal_fixed_point_method}
Let $\mathbf{Q}^* \in \mathcal{Q}$ be the optimal multi-objective value function, such that it takes multi-objective Q-value corresponding to the supremum of expected discounted rewards under a policy $\pi$ then $\mathbf{Q^*} =\mathcal{T}(\mathbf{Q^*})$.
\end{thm}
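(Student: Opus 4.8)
The plan is to verify directly that the optimal multi-objective value function $\mathbf{Q}^*$ satisfies the fixed-point equation $\mathbf{Q}^* = \mathcal{T}(\mathbf{Q}^*)$, and then to invoke Theorem~\ref{theorem:fixed_point_method} to conclude that it is in fact \emph{the} (essentially unique) fixed point of $\mathcal{T}$. First I would make the definition of $\mathbf{Q}^*$ precise: for every $(s,a,\boldsymbol{\omega})$ the scalarized quantity $\boldsymbol{\omega}^T\mathbf{Q}^*(s,a,\boldsymbol{\omega})$ equals $\sup_{\pi}\mathbb{E}_{\tau\sim(\mathcal{P},\pi)}\big[\sum_{t\geq 0}\gamma^t\,\boldsymbol{\omega}^T\mathbf{r}(s_t,a_t)\mid s_0=s,\,a_0=a\big]$, and the associated greedy policy $\pi^*$ selects, at every state $s'$, an action attaining $\sup_{a'\in\mathcal{A}}S_c(\boldsymbol{\omega},\mathbf{Q}^*(s',a',\boldsymbol{\omega}))\cdot(\boldsymbol{\omega}^T\mathbf{Q}^*(s',a',\boldsymbol{\omega}))$; existence of such a maximizer follows from boundedness of $\mathbf{Q}^*$ and the finiteness/compactness of $\mathcal{A}$ already implicit in the definition of $\mathcal{T}$ in Equation~\ref{eq:optimality_operator_method}.

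Second, I would unroll one step of the discounted return. By linearity of expectation and the Markov property, $\mathbf{Q}^*(s,a,\boldsymbol{\omega}) = \mathbf{r}(s,a) + \gamma\,\mathbb{E}_{s'\sim\mathcal{P}(\cdot\mid s,a)}\big[\mathbf{Q}^*(s',\pi^*(s'),\boldsymbol{\omega})\big]$. This is exactly the statement that the evaluation operator $\mathcal{T}_{\pi^*}$ fixes $\mathbf{Q}^{\pi^*}=\mathbf{Q}^*$, combined with the identification of $\pi^*(s')$ with the argmax appearing inside $\mathcal{T}$. Substituting the latter into the preceding display shows that $\mathbf{Q}^*(s,a,\boldsymbol{\omega})$ coincides with the right-hand side of Equation~\ref{eq:optimality_operator_method} evaluated at $\mathbf{Q}=\mathbf{Q}^*$, i.e.\ $\mathbf{Q}^* = \mathcal{T}(\mathbf{Q}^*)$ in $(\mathcal{Q},d)$. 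Together with Theorem~\ref{theorem:fixed_point_method}, which provides a unique fixed point of the contraction $\mathcal{T}$, this forces $\mathbf{Q}^*$ to be that fixed point.

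The real work sits in justifying that the greedy action picked by the \emph{product} $S_c(\boldsymbol{\omega},\mathbf{Q}^*)\cdot(\boldsymbol{\omega}^T\mathbf{Q}^*)$ is genuinely optimal, i.e.\ that replacing the bare scalarization $\boldsymbol{\omega}^T\mathbf{Q}^*$ by this product in the action selection does not change the value attained. Here I would appeal to the paper's key structural insight that at the optimal value function the vectorized Q-values are directionally aligned with the preference vector $\boldsymbol{\omega}$: the factor $S_c(\boldsymbol{\omega},\mathbf{Q}^*(s',a',\boldsymbol{\omega}))$ is then maximized precisely at those actions $a'$ that also maximize $\boldsymbol{\omega}^T\mathbf{Q}^*(s',a',\boldsymbol{\omega})$, so the two argmax sets coincide and the product-greedy action achieves $\sup_{a'\in\mathcal{A}}\boldsymbol{\omega}^T\mathbf{Q}^*(s',a',\boldsymbol{\omega})$. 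I expect this alignment step to be the main obstacle, because it must be invoked as a property of the MOMDP (or stated as an assumption) rather than merely as the empirical observation motivating the algorithm; everything else—unrolling the Bellman recursion and quoting Theorem~\ref{theorem:fixed_point_method} for uniqueness—is routine.

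Finally, I would remark that since $d$ is only a pseudo-metric, the phrases ``fixed point'' and ``optimal value function'' are to be read as equality of the scalarized values $\boldsymbol{\omega}^T\mathbf{Q}^*(s,a,\boldsymbol{\omega})$ for all $\boldsymbol{\omega}\in\Omega$, which is exactly the indiscernibility class in which the fixed point of Theorem~\ref{theorem:fixed_point_method} lives; within that class the identification of $\mathbf{Q}^*$ with $\mathcal{T}(\mathbf{Q}^*)$ is unambiguous.
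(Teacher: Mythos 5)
Your proposal matches the paper's own argument: define $\mathbf{Q}^*$ through the scalarized supremum over policies, expand $\boldsymbol{\omega}^T\mathcal{T}(\mathbf{Q}^*)$ with the product-greedy action $a^\prime$, substitute the definition of $\mathbf{Q}^*$ at $(s^\prime,a^\prime)$, and fold $\mathbf{r}(s,a)$ back into the discounted sum to recover $\boldsymbol{\omega}^T\mathbf{Q}^*(s,a,\boldsymbol{\omega})$, with Theorem~\ref{theorem:fixed_point_method} supplying uniqueness and the pseudo-metric caveat read exactly as you state it. The obstacle you single out---that the action maximizing $S_c(\boldsymbol{\omega},\mathbf{Q}^*)\cdot(\boldsymbol{\omega}^T\mathbf{Q}^*)$ must attain the same continuation value as the action maximizing $\boldsymbol{\omega}^T\mathbf{Q}^*$ alone---is precisely the step the paper's proof passes over silently when it ``merges'' the reward into the summation, so your instinct to state the preference--value alignment as an explicit assumption is, if anything, more careful than the published argument.
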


These two theorems
state that applying our multi-objective optimality operator $T$ iteratively on any multi-objective Q-value function $\mathbf{Q}$ converges to the optimal $\mathbf{Q}^*$ under metric $d$ and the Bellman equation holds at the same fixed-point. 
Hence, we use this optimality operator to obtain target values when optimizing the loss function in Algorithm~\ref{algo:moddqn_method}.

\subsection{Preference Driven MO-DDQN-HER and MO-TD3-HER}
\label{sec:modqn}

The main objective of this work is to obtain a single policy network to approximate the Pareto front that covers the entire preference space. For this purpose, we extend double deep Q-network (DDQN)~\citep{van2016deep} to a multi-objective version (MO-DDQN) with the preference-driven optimality operator to obtain a single parameterized function representing the  $\mathcal{Q}\in\mathbb{R}^{{L}^{\mathcal{S}\times\mathcal{A}}}$ with parameters $\theta$. 
This network takes $s, \boldsymbol{\omega}$ as input and outputs $|\mathcal{A}|\times L$ Q-values, as described in Algorithm~\ref{algo:moddqn_method}.
To pull $\mathbf{Q}$ towards $T(\mathbf{Q})$, MO-DDQN minimizes the following loss function at each step $k$:
\begin{equation} \label{eq:loss_function_method}
\begin{aligned}
L_{k}(\theta) = \mathbb{E}_{(s,a,\mathbf{r},s^{\prime}, \boldsymbol{\omega}) \sim \mathcal{D}}\Big[\Big(\mathbf{y} - \mathbf{Q}(s,a,\boldsymbol{\omega};\theta)\Big)^2\Big]
\end{aligned}
\end{equation}
where $\mathcal{D}$ is the experience replay buffer that stores transitions ($s,a,\mathbf{r},s^\prime,\boldsymbol{\omega}$) for every time step, $\mathbf{y} = \mathbf{r} + \gamma \mathbf{Q}(s^\prime,\sup_{\substack{a^{\prime}\in A}}(S_c(\boldsymbol{\omega},\mathbf{Q}(s^\prime,a^{\prime},\boldsymbol{\omega}))\cdot(\boldsymbol{\omega}^T\mathbf{Q}(s^\prime,a^{\prime},\boldsymbol{\omega})));\theta^\prime)$ denotes the preference-driven target value which is obtained using target Q-network's parameters $\theta^\prime$.
Here, $S_c(\boldsymbol{\omega},\mathbf{Q}(s^\prime,a^{\prime},\boldsymbol{\omega}))$ denotes the cosine similarity between the preference vector and the Q-values. Without the cosine similarity term, the supremum operator yields the action that only maximizes $\boldsymbol{\omega}^T\mathbf{Q}(s^\prime,a^{\prime},\boldsymbol{\omega})$. This may pull the target Q-values in the wrong direction, especially when the scales of the objectives are in different orders of magnitude.
For instance, let us assume a multi-objective problem with percent energy efficiency ($\in [0, 1]$) being the first objective and the latency ($\in [1, 10]$) in milliseconds being the second objective.
Let us also assume that the preference vector ($\boldsymbol{\omega} = \{0.9, 0.1\}$) favors the energy efficiency objective and there are two separate actions ($a_1, a_2$) that yields Q-values of $Q_1 =\{0.9, ~1\}, Q_2 = \{0.1, ~10\}$ respectively. The supremum operator chooses action, $a_2$ since $\boldsymbol{\omega}^T Q_2=1.09$ is higher than $\boldsymbol{\omega}^T Q_1=0.91$.
Since the scales of these two objectives are in different orders of magnitudes, $\sup_{\substack{a^{\prime}\in A}}(\boldsymbol{\omega}^T\mathbf{Q}(s^\prime,a^{\prime},\boldsymbol{\omega}))$ always chooses the action that favors the latency objective.
This behavior negatively affects the training since the target Q-values are pulled in the wrong direction.
On the contrary, our novel cosine similarity term enables the optimality operator to choose actions that align the preferences with the Q-values and maximize the target value at the same time.
The supremum operator now chooses action, $a_1$ since $S_c\cdot\boldsymbol{\omega}^T Q_1=0.75\times0.91 = 0.68$ is higher than $S_c\cdot\boldsymbol{\omega}^T Q_2=0.12\times1.09=0.13$.
With this addition, the algorithm disregards the large  $\boldsymbol{\omega}^T\mathbf{Q}(s^\prime,a^{\prime},\boldsymbol{\omega})$ where the preference vector and the Q-values are not aligned since the $S_c(\boldsymbol{\omega},\mathbf{Q}(s^\prime,a^{\prime},\boldsymbol{\omega}))$ takes small values.



For each episode during training, we randomly sample a preference vector ($\boldsymbol{\omega}\in\Omega : \sum_{i=0}^L \omega_{i}=1$) from a uniform distribution.
For example, assume the randomly sampled preference vector ($\boldsymbol{\omega}$) favors energy efficiency rather than speed in multi-objective continuous control tasks.
Increasing energy efficiency (i.e., lower speed) will increase the number of transitions before reaching a terminal condition since it decreases the risk of reaching a terminal state (e.g., fall).
Hence, transitions that favor speed instead of energy efficiency may be underrepresented in the experience replay buffer in this example.
This behavior may create a bias towards overrepresented preferences, and the network cannot learn to cover the entire preference space.
To overcome this issue, we employ hindsight experience replay buffer (HER)~\citep{andrychowicz2017hindsight}, where every transition is also stored with $N_\omega$ randomly sampled preferences ($\boldsymbol{\omega}^\prime\in\Omega : \sum_{i=0}^L \omega^\prime_{i}=1$) different than the original preference of the transition.
\begin{wrapfigure}{r}{0.59\textwidth}
    \begin{minipage}{0.59\textwidth}
      \begin{algorithm}[H] 
\scriptsize
\caption{Preference Driven MO-DDQN-HER} \label{algo:moddqn_method}
\SetAlgoVlined
\SetNoFillComment
\textbf{Input:} Minibatch size $N_m$, Number of time steps $N$, \\
Discount factor $\gamma$, Target network update coefficient $\tau$,\\ Multi-dimensional interpolator $I(\boldsymbol{\omega})$.\\

\textbf{Initialize:} Replay buffer $\mathcal{D}$, Current $\mathbf{Q}_\theta$ and target network \\
$\mathbf{Q}_{\theta^\prime} \leftarrow \mathbf{Q}_\theta$ with parameters $\theta$ and $\theta^\prime$.\\
\For{n = \normalfont{0: $N$}} 
{
\tcp{Child Process}
Initialize $t=0$ and $done=False$. \\
Reset the environment to randomly initialized state $s_0$. \\
Sample a preference vector $\boldsymbol{\omega}$ from the subspace $\Tilde{\Omega}$.\\
\While {$done=False$}{
Observe state $s_t$ and select an action $a_t$ $\epsilon$-greedily:\\
$a_t = 
\begin{cases}
    a\in A & \quad w.p. \quad \epsilon\\
    \max_{a\in \mathcal{A}}\boldsymbol{\omega}\mathbf{Q}(s_t,a,\boldsymbol{\omega};\theta),& \quad w.p. \quad 1-\epsilon
\end{cases}
$\\
Observe $\mathbf{r}$, $s^\prime$, and $done$.\\
Transfer ($s_t,a_t,\mathbf{r}_t,s^\prime,\boldsymbol{\omega},done$) to main process.
}
\tcp{Main Process}
Sample $N_\omega$ preferences $\boldsymbol{\omega}^\prime$ \\
\For{ $j$ = \normalfont{1: $N_\omega$}} {
Store transition ($s_t,a_t,\mathbf{r}_t,s^\prime,\boldsymbol{\omega}^\prime_j,done$) in $\mathcal{D}$
}
Sample $N_m$ transitions from $\mathcal{D}$. \\
$\boldsymbol{\omega}_p \leftarrow I(\boldsymbol{\omega})$ \\
$\mathbf{y} \leftarrow \mathbf{r} + \gamma \mathbf{Q}(s^\prime,\sup_{\substack{a^{\prime}\in A}}(S_c(\boldsymbol{\omega}_p,\mathbf{Q}(s^\prime,a^{\prime},\boldsymbol{\omega}))\cdot(\boldsymbol{\omega}^T\mathbf{Q}(s^\prime,a^{\prime},\boldsymbol{\omega}));\theta^\prime)$ \\
$L_{k}(\theta) = \mathbb{E}_{(s,a,\mathbf{r},s^{\prime}, \boldsymbol{\omega}) \sim \mathcal{D}}\Big[\Big(\mathbf{y} - \mathbf{Q}(s,a,\boldsymbol{\omega};\theta)\Big)^2\Big]$ \\
Update $\theta_k$ by applying SGD to $L_{k}(\theta)$. \\
Update target network parameters $\theta^\prime_k \leftarrow \tau \theta_k + (1-\tau)\theta^\prime_k$\\
}
\end{algorithm}
    \end{minipage}
    \vspace{-7mm}
  \end{wrapfigure}
\textit{To increase the sample efficiency of our algorithm}, we also divide the preference space into $C_p$ sub-spaces ($\Tilde{\Omega}$), where $C_p$ denotes the number of child processes. 
Each child process is responsible for its own preference sub-space. 
The agent, hence the network, is shared among child processes and the main process. 
In each child process, for each episode, we randomly sample a preference vector from child's preference sub-space ($\boldsymbol{\omega}\in\Tilde{\Omega} : \sum_{i=0}^L \omega_{i}=1$).
These $C_p$ child processes run in parallel to collect transitions.
These transitions are stored inside the HER in the main process.
Network architectures and implementation details are available Appendix~\ref{sec:pdmorl_app}. This parallelization provides efficient exploration during training. 
\begin{figure*}[b]
\vspace*{-5mm}
\centering
\includegraphics[width=1\textwidth]{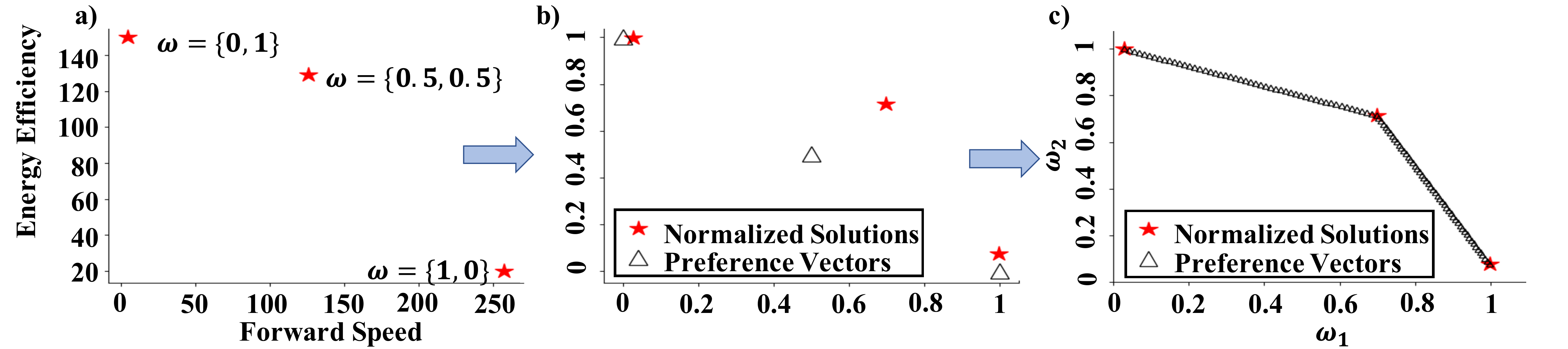} 
\caption{Overview of the interpolation scheme for the MO-Swimmer-v2 problem. (a) Obtained solutions for key preference points. (b) Normalized solution space and corresponding preference vectors. (c) Interpolated preference space.}
\vspace*{-3mm}
\label{fig:interpolation}
\normalsize
\end{figure*}

\noindent\textbf{Preference alignment:} A solution in the Pareto front may not align perfectly with its preference vector.
This may result in a bias in our updating scheme due to the cosine similarity term we introduced.
To mitigate this adverse effect, we fit a multi-dimensional interpolator to project the original preference vectors ($\boldsymbol{\omega}\in\Omega$) to normalized solution space to align preferences with the multi-objective solutions.
We identify the key preferences and obtain solutions for each of these preferences.
We set key preference points as $\boldsymbol{\omega}_j = 1:j=i, \boldsymbol{\omega}_j = 0 : j\neq i~\forall~i,j \in \{0,\hdots, L\}$ where $i^{th}$ element corresponds to the objective we try to maximize. A preference vector of $\boldsymbol{\omega} = \{\frac{1}{L},\hdots,\frac{1}{L}\}$ is also added to this key preference set.
For example, for a two-dimensional objective space, the key preference set becomes $\{[1,~0],[0.5,~0.5],[0,~1]\}$.
We first obtain solutions for each preference in this key preference set by training the agent with a fixed preference vector and without using HER (see Figure~\ref{fig:interpolation}(a)).
We then use these solutions to obtain a normalized solution space, as shown in Figure~\ref{fig:interpolation}(b). Here, normalization is the process of obtaining unit vectors for these solutions. For example, the normalized vector for a solution $f$ is described as $\hat{f} = \frac{f}{|f|}$.
Then, we use the normalized solution space and key preference point to fit a multi-dimensional interpolator $I(\boldsymbol{\omega})$ to project the original preference vectors ($\boldsymbol{\omega}\in\Omega$) to the normalized solution space.
As a result, we obtain projected preference vectors ($\boldsymbol{\omega}_p$) as illustrated in Figure~\ref{fig:interpolation}(c). These projected preference vectors are incorporated in the cosine similarity term of the preference-driven optimality operator. 
This practical modification extends to theoretical results as the proof for Theorem~\ref{theorem:optimality_contraction_method} is also valid for $\boldsymbol{\omega}_p$.
The interpolator is also updated during training as PD-MORL may find new non-dominated solutions for identified key preferences.

\noindent\textbf{Extension to continuous action space:} Finally, we extend the TD3 algorithm to a multi-objective version using PD-MORL for problems with continuous action space. 
In this case, the target values are no longer computed using the optimality operator. 
Instead, the actions for the target Q-value are determined by the target actor network.
The actor network is updated to obtain a policy that maximizes the Q-values generated by the critic network. This update rule eventually boils down to obtaining a policy that maximizes the expected discounted rewards since, by definition, this expectation is the Q-value itself.
To incorporate the relation between preference vectors and Q-values for the multi-objective version of TD3, we include a directional angle term $g(\boldsymbol{\omega},\mathbf{Q}(s,a,\boldsymbol{\omega};\theta))$ to both actor's and critic's loss function. This directional angle term $g(\boldsymbol{\omega},\mathbf{Q}(s,a,\boldsymbol{\omega};\theta)) = cos^{-1}(\frac{\boldsymbol{\omega}^T \mathbf{Q}(s,a,\boldsymbol{\omega};\theta)}{||\boldsymbol{\omega}_p|| ~||\mathbf{Q}(s,a,\boldsymbol{\omega};\theta)||})$ denotes the angle between the preference vector ($\boldsymbol{\omega}$) and multi-objective Q-value $\mathbf{Q}(s,a,\boldsymbol{\omega};\theta)$ and provides an alignment between preferences and the Q-values.
Algorithm, network architectures, and implementation details are available in Appendix~\ref{sec:details_td3_app} and~\ref{sec:training_details_app}.

\vspace{-3mm}
\section{Experiments}
\vspace{-3mm}

This section extensively evaluates the proposed PD-MORL technique using commonly used MORL benchmarks with discrete state-action spaces (Section~\ref{sec:discrete-benchmarks}) and complex MORL environments with continuous state-action spaces (Section~\ref{sec:cont-benchmarks}).
Detailed descriptions of these benchmarks are provided in Appendix~\ref{sec:benchmarks_app}.
Our results are compared to state-of-the-art techniques in terms of hypervolume, sparsity, and the Pareto front plots.
\vspace{-3mm}
\subsection{MORL Benchmarks with Discrete State-Action Spaces} \label{sec:discrete-benchmarks}
This section illustrates the efficacy of PD-MORL in \textit{obtaining a single universal network that covers the entire preference space} in discrete state-action environments. 
We use the following commonly used benchmarks to (i) make a fair comparison with prior work~\citep{yang2019morl}, and (ii) show that PD-MORL scales to more than two objectives: 

\noindent\textbf{Deep Sea Treasure (DST)} has discrete state ($\mathcal{S}\subseteq \mathbb{N}^{2}$) and action spaces ($\mathcal{A}\subseteq \mathbb{N}^{4}$). It has two competing objectives: time penalty and treasure value. 
The treasure values increase as their distance from the starting point $s_0 = (0,0)$ increases. Agent gets a -1 time penalty for every time step.

\noindent\textbf{Fruit Tree Navigation (FTN)} has discrete state ($\mathcal{S} \subseteq \mathbb{N}^{2}$) and action spaces ($\mathcal{A}\subseteq \mathbb{N}^{2}$) with six objectives: different nutrition facts of the fruits on the tree: \{Protein, Carbs, Fats, Vitamins, Minerals, Water\}.
The goal of the agent is to find a path on the tree to collect the fruit that maximizes the nutrition facts for a given preference.
%

%
\begin{figure*}[b]
\vspace*{-3mm}
\centering
\includegraphics[width=0.8\textwidth]{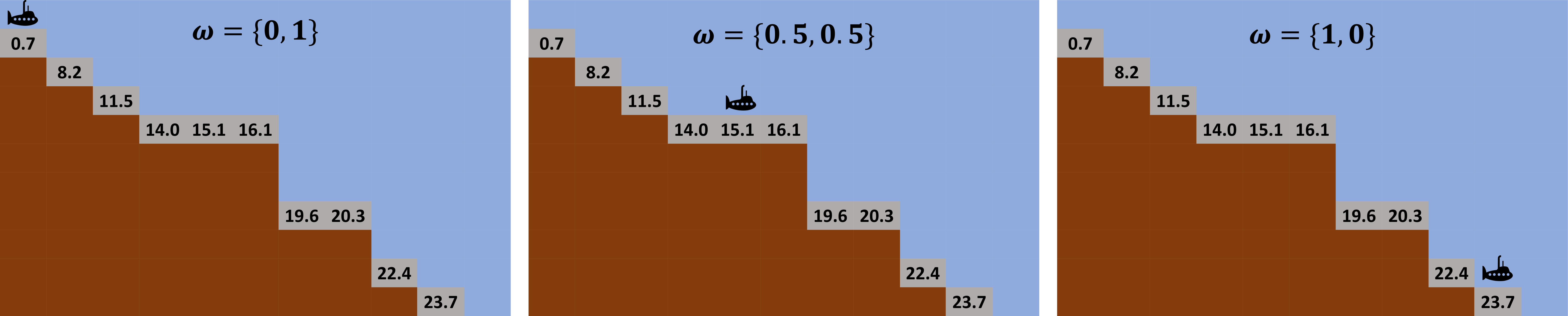} 
\vspace*{-3mm}
\caption{Optimal policies obtained by the trained agent for three corner preferences.}
\vspace*{-7mm}
\label{fig:dst_optimal}
\normalsize
\end{figure*}

Figure~\ref{fig:dst_optimal} illustrates the policies achieved by PD-MORL for three preference vectors capture corner cases.
Although our policy is \textit{not specific} to any preference, the submarine achieves optimal treasure values for the given preference. 
For example, it finds the best trade-off within the shortest time one step) when we only value the time penalty ($w=\{0,1\}$). As the importance of the treasure value increases 
($w=$\{0.5,0.5\}, $w=\{1,0\}$), it spends optimal amount of time (8 steps and 19 steps, respectively) to find a deeper treasure.  
Figure~\ref{fig:PD_progression} provides further insights into the progression of the policies found by PD-MORL. 
Let the first objective be the treasure value and the second objective be the time penalty in this figure.
At the beginning of the training, each child process starts collecting transitions within their own preference sub-space $\Tilde{\Omega}$. 
Since the agent acts $\epsilon$-greedily at early phases and cannot reach the higher treasure values due to terminal condition (time limit), we observe that most of the solutions are stuck in the first-subspace ($\Tilde{\Omega}_1$)  
The transitions collected from child processes are stored in the hindsight experience replay buffer with different preferences other than their original preferences. 
This buffer enables extra exploration for the agent and leads the agent to different sub-spaces, as illustrated in the middle figure.
As the training progresses, HER and our novel optimality operator with the angle term guide the search to expand the Pareto front and cover the entire preference space with a single network.

Among existing algorithms that learns a unified policy~\citep{yang2019morl,abels2019dynamic}, the Envelope algorithm~\citep{yang2019morl} is the superior and a more recent approach.
Hence, we compare and evaluate PD-MORL and the Envelope algorithm on DST and FTN by obtaining a set of preference vectors that covers the entire preference space.
\begin{wrapfigure}{r}{0.6\textwidth}
  \vspace{-5mm}
  \begin{center}
    \includegraphics[width=0.6\textwidth]{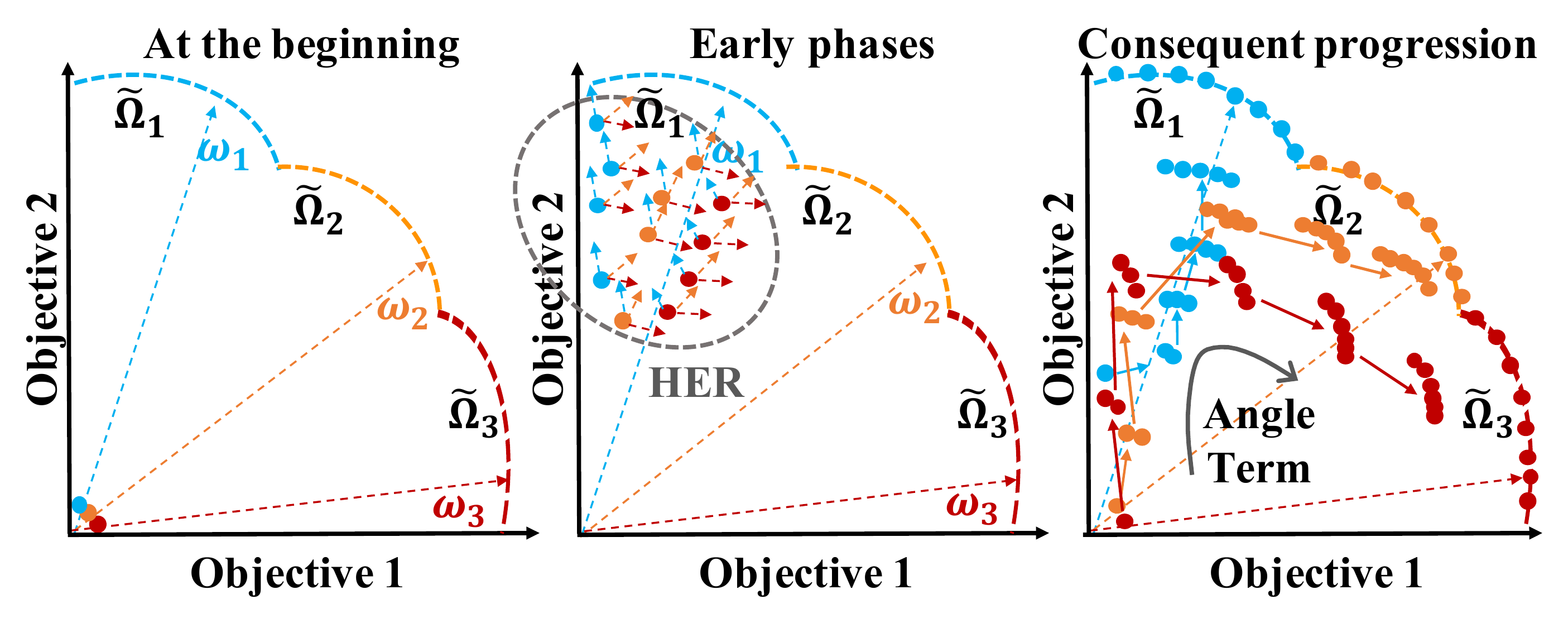}
  \end{center}
  \vspace{-7mm}
  \caption{Policy progression during training.} 
  \label{fig:PD_progression}
  \vspace{-5mm}
\end{wrapfigure}
PD-MORL achieves both 6.1\% larger hypervolume and 56\% lower sparsity than the Envelope algorithm for DST problem, as shown in Table~\ref{tab:toytable}.
Similarly, PD-MORL generates up to 78\% larger hypervolume than the Envelope algorithm for FTN.
Since FTN is a binary tree, the distance between Pareto solutions is the same, and hence, the sparsity metric is the same between the two approaches.
Our experiments show that the improvement of PD-MORL compared to the 
Envelope algorithm increases with 
the problem complexity (i.e., increasing tree depth), as shown in Table~\ref{tab:toytable}. 
PD-MORL provides efficient and robust exploration using HER and our novel optimality operator together. 
Further comparisons against the Envelope algorithm~\citep{yang2019morl} are provided in Appendix~\ref{sec:exp_results_app}.

\begin{table}[h]
\vspace{-5mm}
\caption{Comparison of our approach with prior work~\citep{yang2019morl} using two simple MORL benchmarks in terms of hypervolume and sparsity metrics. Reference point for hypervolume calculation is set to (0,-19) and (0,0) for DST and FTN, respectively.}
\label{tab:toytable}
\resizebox{\textwidth}{!}{%
\begin{tabular}{@{}lcccccc@{}}
\toprule
 & \multicolumn{2}{c}{\textbf{Deep Sea Treasure}} & \multicolumn{2}{c}{\textbf{Fruit Tree Navigation   (d=6)}} & \multicolumn{2}{c}{\textbf{Fruit Tree Navigation   (d=7)}} \\ \midrule
 & \textbf{Hypervolume} & \textbf{Sparsity} & \textbf{Hypervolume} & \textbf{Sparsity} & \textbf{Hypervolume} & \textbf{Sparsity} \\\midrule
\textbf{Envelope~\citep{yang2019morl}} & 227.89 & 2.62 & 8427.51 & N/A & 6395.27 & N/A \\
\textbf{PD-MORL} & 241.73 & 1.14 & 9299.15 & N/A & 11419.58 & N/A 
\\ \bottomrule
\end{tabular}%
}
\vspace{-3mm}
\end{table}

\subsection{MORL Benchmarks with Continuous Control Tasks}
\label{sec:cont-benchmarks}
\begin{wraptable}{r}{0.4\textwidth}
    \begin{minipage}{0.4\textwidth}
    \scriptsize
        \vspace{-7mm}
        \caption{Environment details of continuous control benchmarks.}
        \label{tab:cont_benchmarks_description}
        \begin{tabular}{@{}lcc@{}}
        \toprule
         & \textbf{State Space} & \textbf{Action Space}  \\ \midrule
        \textbf{MO-Walker2d-v2} & $\mathcal{S}\subseteq \mathbb{R}^{17}$ & $\mathcal{A}\subseteq \mathbb{R}^{6}$ \\
        \textbf{MO-HalfCheetah-v2} & $\mathcal{S}\subseteq \mathbb{R}^{17}$ & $\mathcal{A}\subseteq \mathbb{R}^{6}$ \\
        \textbf{MO-Ant-v2} & $\mathcal{S}\subseteq \mathbb{R}^{27}$ & $\mathcal{A}\subseteq \mathbb{R}^{8}$ \\
        \textbf{MO-Swimmer-v2} & $\mathcal{S}\subseteq \mathbb{R}^{8}$ & $\mathcal{A}\subseteq \mathbb{R}^{2}$  \\
        \textbf{MO-Hopper-v2} & $\mathcal{S}\subseteq \mathbb{R}^{11}$ & $\mathcal{A}\subseteq \mathbb{R}^{3}$  \\ \bottomrule
        \end{tabular}%
    \end{minipage}
    \vspace{-4mm}
\end{wraptable}
This section evaluates PD-MORL on popular multi-objective continuous control tasks based on MuJoCo physics engine~\citep{xu2020prediction,todorov2012mujoco}. The environment details of these benchmarks are given in Table~\ref{tab:cont_benchmarks_description}.
The action and state spaces take continuous values and consist of multiple dimensions, making these benchmarks challenging to solve. 
The goal is to tune the amount of torque applied on the hinges/rotors for a given preference $\boldsymbol{\omega}$ while satisfying multiple objectives (e.g., forward speed vs. energy efficiency, forward speed vs. jumping height, etc.).
We compare the performance of PD-MORL against two state-of-the-art approaches~\citep{xu2020prediction,chen2019meta} that use these continuous control benchmarks. 
We emphasize that \textit{these approaches must learn a different policy network for every solution in the Pareto front.
In contrast, PD-MORL learns \textit{a single universal network} that covers the entire preference space.
The Envelope algorithm~\citep{yang2019morl} is not included in these comparisons since it was not evaluated with continuous action spaces, and PD-MORL already outperforms it on simpler problems.}
Training details are reported in Appendix~\ref{sec:training_details_app}.

We first compare PD-MORL to prior work using hypervolume and sparsity metrics. 
Since META~\citep{chen2019meta} and PG-MORL~\citep{xu2020prediction} report the average of six runs, we also ran each benchmark six times with PD-MORL. The average metrics are reported in Table~\ref{tab:cont_benchmark}, while the standard deviations and results of individual runs are given in Appendix~\ref{sec:exp_results_app} Table~\ref{tab:cont_benchmark_app}.
\begin{table}[t]
\caption{Performance comparison of the proposed approach and state-of-the-art algorithms on the continuous control benchmarks in terms of hypervolume and sparsity metrics. Reference point for hypervolume calculation is set to (0,0) point. HV$^*$: Hypervolume}
\label{tab:cont_benchmark}
\resizebox{\textwidth}{!}{%
\begin{tabular}{@{}lcccccccccc@{}}
\toprule
 & \multicolumn{2}{c}{\textbf{MO-Walker2d-v2}} & \multicolumn{2}{c}{\textbf{MO-HalfCheetah-v2}} & \multicolumn{2}{c}{\textbf{MO-Ant-v2}} & \multicolumn{2}{c}{\textbf{MO-Swimmer-v2}} & \multicolumn{2}{c}{\textbf{MO-Hopper-v2}} \\ \midrule
 & \multicolumn{1}{c}{\textbf{HV$^*$}} & \multicolumn{1}{c}{\textbf{Sparsity}} & \multicolumn{1}{c}{\textbf{HV}$^*$} & \multicolumn{1}{c}{\textbf{Sparsity}} & \multicolumn{1}{c}{\textbf{HV}$^*$} & \multicolumn{1}{c}{\textbf{Sparsity}} & \multicolumn{1}{c}{\textbf{HV}$^*$} & \multicolumn{1}{c}{\textbf{Sparsity}} & \multicolumn{1}{c}{\textbf{HV}$^*$} & \multicolumn{1}{c}{\textbf{Sparsity}} \\\midrule
PG-MORL~\citep{xu2020prediction} & $4.82\times10^6$ & $0.04\times10^4$ & $5.77\times10^6$ & $\mathbf{0.44\times10^3}$ & $6.35\times10^6$ 
& $\mathbf{0.37\times10^4}$ & $2.57\times10^4$ & $9.9$ & $\mathbf{2.02\times10^7}$ & $0.5\times10^4$ \\
META~\citep{chen2019meta} & $2.10\times10^6$ & $2.10\times10^4$ & $5.18\times10^6$ & $2.13\times10^3$ & $2.40\times10^4$ & $1.56\times10^4$ & $1.23\times10^4$ & $24.4$ & $1.25\times10^7$ & $4.84\times10^4$ \\
\textbf{PD-MORL (Ours)} & $\mathbf{5.41\times10^6}$ & $\mathbf{0.03\times 10^4}$ & $\mathbf{5.89\times10^6}$ & $0.49\times 10^3$ & $\mathbf{7.48\times10^6}$ & $0.78\times10^4$ & $\mathbf{3.21\times 10^4}$ & $\mathbf{5.7}$ & $1.88\times10^7$ & $\mathbf{0.3\times10^4}$ \\ 
\bottomrule
\end{tabular}%
}
\vspace{-7mm}
\end{table}

The desired Pareto front approximation should have high hypervolume and low sparsity metrics. PD-MORL outperforms the current state-of-the-art techniques both in terms of hypervolume and sparsity on every benchmark except MO-Hopper-v2, as summarized in Table~\ref{tab:cont_benchmark}.
We emphasize that our \emph{PD-MORL technique trains only a single network, while the other methods use customized policies network for each Pareto point.}
For example, we achieve 12\% higher hypervolume and 25\% better sparsity than the most competitive
prior work PG-MORL~\citep{xu2020prediction} on the Walker environment.
They reported 263 Pareto front solutions for this environment. This corresponds to, in total, $2.8\times10^6$ trainable parameters since they have different policy network for each solution.
In contrast, PD-MORL achieves better or comparable result using only $3.4\times10^5$ trainable parameters.
This may enable the deployment of PD-MORL in a real-life scenario where there are dynamic changes in the design constraints and goals.
We also note that PG-MORL achieves better sparsity metric for MO-HalfCheetah-v2 and MO-Ant-v2 environments. However, PG-MORL also achieves a smaller hypervolume compared to PD-MORL. 
This suggests that to determine the quality of a Pareto front, these metrics are rather limited and should be further investigated by the existing literature.
Therefore, we also plot Pareto front plots for all algorithms in Figure~\ref{fig:results}(a)-(e) to have a better understanding of the quality of the Pareto front.
Figure~\ref{fig:results} shows that the proposed PD-MORL technique achieves a broader and denser Pareto front than the state-of-the-art approaches despite using a single unified network.
PD-MORL mostly relies on the interpolation procedure before the actual training. The key solutions obtained during this period determine the initial convergence of the algorithm. However, for MO-Hopper-v2, the obtained key solutions for preference vectors $\{1,0\}, \{0.5,0.5\}, \{0,1\}$ are $\{1778,4971\}, \{1227, 1310\}, \{3533, 3165\}$ respectively.
These key solutions are not a good representative of the Pareto front shown in Figure~\ref{fig:results}(e).
To have representative key solutions is a limitation of PD-MORL; however, it can be solved with hyperparameter tuning. 

Finally, Figure~\ref{fig:results}(f) plots the progression of the hypervolume of the Pareto front for the environments with similar scales. The plots show that PD-MORL effectively pushes the hypervolume by discovering new Pareto solutions with the help of efficient and robust exploration. We ensure this by using HER, our novel directional angle term, and dividing the preference space into sub-spaces to collect transitions in parallel. The progression of the Pareto front, sparsity, and hypervolume of all benchmarks are provided in Appendix~\ref{sec:exp_results_app}.

\begin{figure*}[h]
\vspace*{-3mm}
\centering
\includegraphics[width=0.85\textwidth]{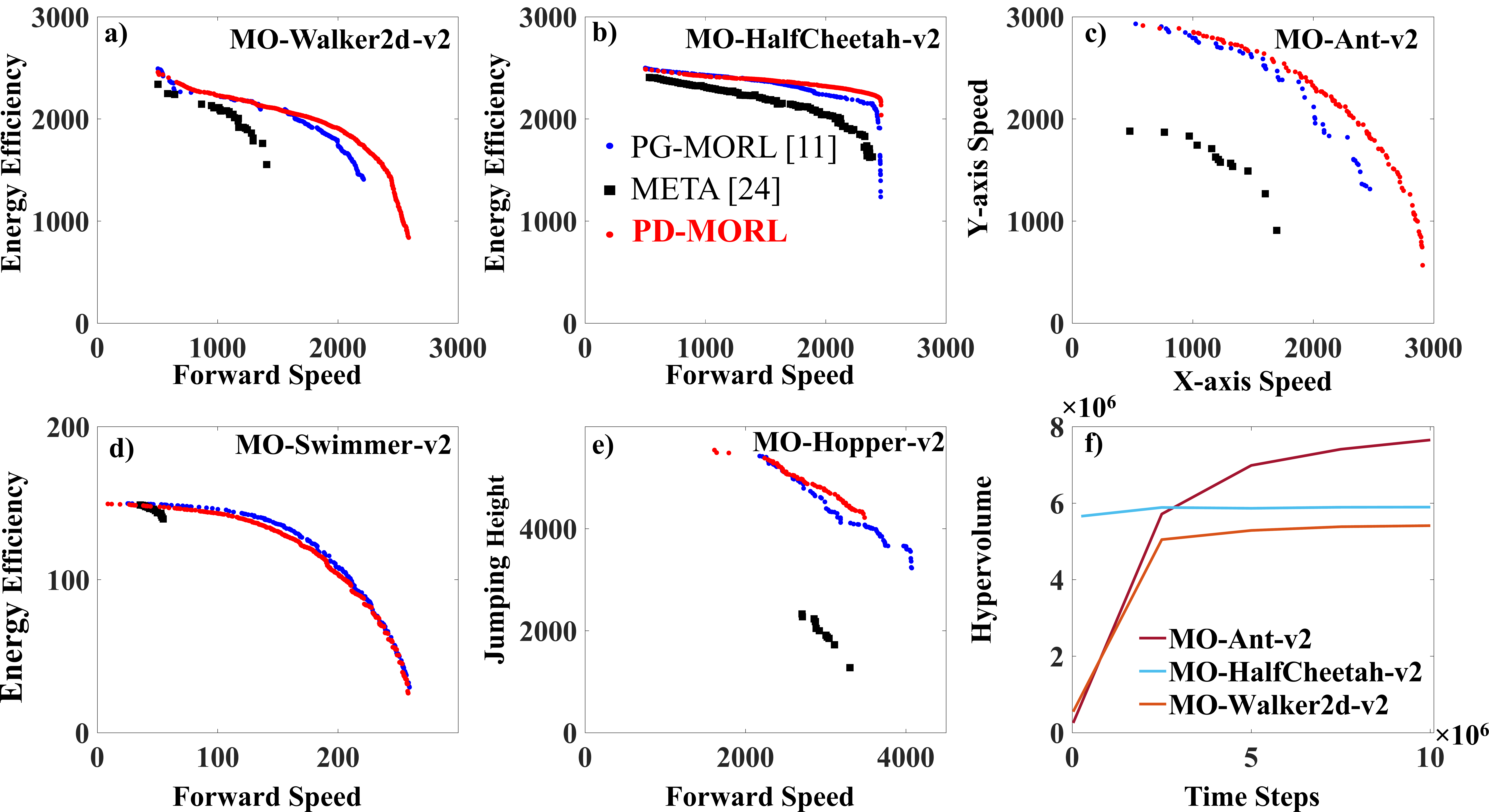} 
\vspace*{-3mm}
\caption{Pareto front comparison for continuous control tasks (a)-(e). Results for META and PG-MORL are obtained from~\citep{xu2020prediction} and code base of PG-MORL. (f) Progression of hypervolume of the Pareto front. Results for environments that have similar scales are given.}
\vspace*{-5mm}
\label{fig:results}
\normalsize
\end{figure*}

\vspace{-1mm}
\section{Conclusions and Limitations}
\vspace{-3mm}
\label{sec:conclusion}
Many real-life applications have multiple conflicting objectives that can change dynamically. 
Existing MORL techniques either learn a different policy for each preference or fail to extend to complex continuous state-action benchmarks.
This paper presented a novel MORL technique that addresses both of these limitations.
It trains a universal network to cover the entire preference space using the preferences as guidance to update the network, hence the name preference-driven MORL (PD-MORL).
We evaluated PD-MORL using both discrete and continuous benchmarks and demonstrated that PD-MORL outperforms the state-of-the-art approaches.

PD-MORL aims to obtain a generalized network that covers the entire preference space. It updates its network parameters by sampling transitions from HER. 
Since these transitions can represent anywhere in the preference space, the updates should be conservative.
Therefore, it may fail to obtain a dense Pareto front on problems with massive action, state, and preference spaces which is the primary concern of any MORL algorithm.

\subsubsection*{Acknowledgments}
This work was supported in part by NSF CAREER award CNS-1651624, and DARPA Young Faculty Award (YFA) Grant D14AP00068.

\section*{Ethics Statement}
This work introduces a novel multi-objective reinforcement learning approach scalable to continuous control tasks. 
While reinforcement learning has several ethical consequences, especially in settings where the automated decision affects humans directly, our algorithmic approach does not address these concerns specifically.
Therefore, we do not expect any ethical issues raised by our work.
Instead, it helps to alleviate these concerns since considering multiple objectives in societal problems enables various trade-offs.

\section*{Reproducibility  Statement}
Proofs of all our theoretical results are given in Appendix~\ref{sec:theoretical_proof_app}. 
Appendix~\ref{sec:pdmorl_app} provides details of the implementation of our approach and additional experimental results.
The source code is attached with the rest of the supplementary material, providing a complete description of the multi-objective RL environments and instructions on reproducing our experiments.

\renewcommand{\bibsection}{}
\section*{References}
{\vspace{0mm}\footnotesize{\bibliography{references/morl}}}

\begin{thebibliography}{31}
\providecommand{\natexlab}[1]{#1}
\providecommand{\url}[1]{\texttt{#1}}
\expandafter\ifx\csname urlstyle\endcsname\relax
  \providecommand{\doi}[1]{doi: #1}\else
  \providecommand{\doi}{doi: \begingroup \urlstyle{rm}\Url}\fi

\bibitem[Abdolmaleki et~al.(2020)Abdolmaleki, Huang, Hasenclever, Neunert,
  Song, Zambelli, Martins, Heess, Hadsell, and
  Riedmiller]{abdolmaleki2020distributional}
Abbas Abdolmaleki, Sandy Huang, Leonard Hasenclever, Michael Neunert, Francis
  Song, Martina Zambelli, Murilo Martins, Nicolas Heess, Raia Hadsell, and
  Martin Riedmiller.
\newblock A distributional view on multi-objective policy optimization.
\newblock In \emph{International Conference on Machine Learning}, pp.\  11--22.
  PMLR, 2020.

\bibitem[Abels et~al.(2019)Abels, Roijers, Lenaerts, Now{\'e}, and
  Steckelmacher]{abels2019dynamic}
Axel Abels, Diederik Roijers, Tom Lenaerts, Ann Now{\'e}, and Denis
  Steckelmacher.
\newblock Dynamic weights in multi-objective deep reinforcement learning.
\newblock In \emph{International Conference on Machine Learning}, pp.\  11--20.
  PMLR, 2019.

\bibitem[Andrychowicz et~al.(2017)Andrychowicz, Wolski, Ray, Schneider, Fong,
  Welinder, McGrew, Tobin, Pieter~Abbeel, and
  Zaremba]{andrychowicz2017hindsight}
Marcin Andrychowicz, Filip Wolski, Alex Ray, Jonas Schneider, Rachel Fong,
  Peter Welinder, Bob McGrew, Josh Tobin, OpenAI Pieter~Abbeel, and Wojciech
  Zaremba.
\newblock Hindsight experience replay.
\newblock \emph{Advances in neural information processing systems}, 30, 2017.

\bibitem[Chen et~al.(2019)Chen, Ghadirzadeh, Bj{\"o}rkman, and
  Jensfelt]{chen2019meta}
Xi~Chen, Ali Ghadirzadeh, M{\aa}rten Bj{\"o}rkman, and Patric Jensfelt.
\newblock Meta-learning for multi-objective reinforcement learning.
\newblock In \emph{2019 IEEE/RSJ International Conference on Intelligent Robots
  and Systems (IROS)}, pp.\  977--983. IEEE, 2019.

\bibitem[Fujimoto et~al.(2018{\natexlab{a}})Fujimoto, Hoof, and Meger]{TD3}
Scott Fujimoto, Herke Hoof, and David Meger.
\newblock Addressing function approximation error in actor-critic methods.
\newblock In \emph{International Conference on Machine Learning}, pp.\
  1587--1596. PMLR, 2018{\natexlab{a}}.

\bibitem[Fujimoto et~al.(2018{\natexlab{b}})Fujimoto, Hoof, and
  Meger]{fujimoto2018addressing}
Scott Fujimoto, Herke Hoof, and David Meger.
\newblock Addressing function approximation error in actor-critic methods.
\newblock In \emph{International conference on machine learning}, pp.\
  1587--1596. PMLR, 2018{\natexlab{b}}.

\bibitem[Gupta et~al.(2020)Gupta, Bhambri, Dhingra, Buduru, and
  Kumaraguru]{gupta2020smarthome}
Saurabh Gupta, Siddhant Bhambri, Karan Dhingra, Arun~Balaji Buduru, and
  Ponnurangam Kumaraguru.
\newblock Multi-objective reinforcement learning based approach for
  user-centric power optimization in smart home environments.
\newblock In \emph{2020 IEEE International Conference on Smart Data Services
  (SMDS)}, pp.\  89--96. IEEE, 2020.

\bibitem[Hayes et~al.(2022)Hayes, R{\u{a}}dulescu, Bargiacchi,
  K{\"a}llstr{\"o}m, Macfarlane, Reymond, Verstraeten, Zintgraf, Dazeley,
  Heintz, et~al.]{hayes2022review}
Conor~F Hayes, Roxana R{\u{a}}dulescu, Eugenio Bargiacchi, Johan
  K{\"a}llstr{\"o}m, Matthew Macfarlane, Mathieu Reymond, Timothy Verstraeten,
  Luisa~M Zintgraf, Richard Dazeley, Fredrik Heintz, et~al.
\newblock A practical guide to multi-objective reinforcement learning and
  planning.
\newblock \emph{Autonomous Agents and Multi-Agent Systems}, 36\penalty0
  (1):\penalty0 1--59, 2022.

\bibitem[Liu et~al.(2014)Liu, Xu, and Hu]{liu2014multiobjective}
Chunming Liu, Xin Xu, and Dewen Hu.
\newblock Multiobjective reinforcement learning: A comprehensive overview.
\newblock \emph{IEEE Transactions on Systems, Man, and Cybernetics: Systems},
  45\penalty0 (3):\penalty0 385--398, 2014.

\bibitem[Mirhoseini et~al.(2020)Mirhoseini, Goldie, Yazgan, Jiang, Songhori,
  Wang, Lee, Johnson, Pathak, Bae, et~al.]{mirhoseini2020chip}
Azalia Mirhoseini, Anna Goldie, Mustafa Yazgan, Joe Jiang, Ebrahim Songhori,
  Shen Wang, Young-Joon Lee, Eric Johnson, Omkar Pathak, Sungmin Bae, et~al.
\newblock Chip placement with deep reinforcement learning.
\newblock \emph{arXiv preprint arXiv:2004.10746}, 2020.

\bibitem[Mnih et~al.(2015)Mnih, Kavukcuoglu, Silver, Rusu, Veness, Bellemare,
  Graves, Riedmiller, Fidjeland, Ostrovski, et~al.]{mnih2015human}
Volodymyr Mnih, Koray Kavukcuoglu, David Silver, Andrei~A Rusu, Joel Veness,
  Marc~G Bellemare, Alex Graves, Martin Riedmiller, Andreas~K Fidjeland, Georg
  Ostrovski, et~al.
\newblock Human-level control through deep reinforcement learning.
\newblock \emph{Nature}, 518\penalty0 (7540):\penalty0 529--533, 2015.

\bibitem[Mossalam et~al.(2016)Mossalam, Assael, Roijers, and
  Whiteson]{mossalam2016multi}
Hossam Mossalam, Yannis~M Assael, Diederik~M Roijers, and Shimon Whiteson.
\newblock Multi-objective deep reinforcement learning.
\newblock \emph{arXiv preprint arXiv:1610.02707}, 2016.

\bibitem[Navon et~al.(2020)Navon, Shamsian, Chechik, and
  Fetaya]{navon2020learning}
Aviv Navon, Aviv Shamsian, Gal Chechik, and Ethan Fetaya.
\newblock Learning the pareto front with hypernetworks.
\newblock \emph{arXiv preprint arXiv:2010.04104}, 2020.

\bibitem[Nguyen \& La(2019)Nguyen and La]{nguyen2019review}
Hai Nguyen and Hung La.
\newblock Review of deep reinforcement learning for robot manipulation.
\newblock In \emph{2019 Third IEEE International Conference on Robotic
  Computing (IRC)}, pp.\  590--595. IEEE, 2019.

\bibitem[Parisi et~al.(2017)Parisi, Pirotta, and Peters]{parisi2017manifold}
Simone Parisi, Matteo Pirotta, and Jan Peters.
\newblock Manifold-based multi-objective policy search with sample reuse.
\newblock \emph{Neurocomputing}, 263:\penalty0 3--14, 2017.

\bibitem[Pirotta et~al.(2015)Pirotta, Parisi, and Restelli]{pirotta2015multi}
Matteo Pirotta, Simone Parisi, and Marcello Restelli.
\newblock Multi-objective reinforcement learning with continuous pareto
  frontier approximation.
\newblock In \emph{Twenty-ninth AAAI conference on artificial intelligence},
  2015.

\bibitem[Roijers et~al.(2013)Roijers, Vamplew, Whiteson, and
  Dazeley]{roijers2013survey}
Diederik~M Roijers, Peter Vamplew, Shimon Whiteson, and Richard Dazeley.
\newblock A survey of multi-objective sequential decision-making.
\newblock \emph{Journal of Artificial Intelligence Research}, 48:\penalty0
  67--113, 2013.

\bibitem[Roijers et~al.(2014)Roijers, Whiteson, Oliehoek,
  et~al.]{roijers2014linear}
Diederik~M Roijers, Shimon Whiteson, Frans~A Oliehoek, et~al.
\newblock Linear support for multi-objective coordination graphs.
\newblock In \emph{AAMAS'14: Proceedings of the 2014 International Conference
  on Autonomous Agents \& Multiagent Systems}, pp.\  1297--1304, 2014.

\bibitem[Schaul et~al.(2015)Schaul, Quan, Antonoglou, and
  Silver]{schaul2015prioritized}
Tom Schaul, John Quan, Ioannis Antonoglou, and David Silver.
\newblock Prioritized experience replay.
\newblock \emph{arXiv preprint arXiv:1511.05952}, 2015.

\bibitem[Shao et~al.(2019)Shao, Tang, Zhu, Li, and Zhao]{shao2019survey}
Kun Shao, Zhentao Tang, Yuanheng Zhu, Nannan Li, and Dongbin Zhao.
\newblock A survey of deep reinforcement learning in video games.
\newblock \emph{arXiv preprint arXiv:1912.10944}, 2019.

\bibitem[Silver et~al.(2016)Silver, Huang, Maddison, Guez, Sifre, Van
  Den~Driessche, Schrittwieser, Antonoglou, Panneershelvam, Lanctot,
  et~al.]{silver2016mastering}
David Silver, Aja Huang, Chris~J Maddison, Arthur Guez, Laurent Sifre, George
  Van Den~Driessche, Julian Schrittwieser, Ioannis Antonoglou, Veda
  Panneershelvam, Marc Lanctot, et~al.
\newblock Mastering the game of go with deep neural networks and tree search.
\newblock \emph{Nature}, 529\penalty0 (7587):\penalty0 484--489, 2016.

\bibitem[Todorov et~al.(2012)Todorov, Erez, and Tassa]{todorov2012mujoco}
Emanuel Todorov, Tom Erez, and Yuval Tassa.
\newblock Mujoco: A physics engine for model-based control.
\newblock In \emph{2012 IEEE/RSJ international conference on intelligent robots
  and systems}, pp.\  5026--5033. IEEE, 2012.

\bibitem[Van~Hasselt et~al.(2016)Van~Hasselt, Guez, and Silver]{van2016deep}
Hado Van~Hasselt, Arthur Guez, and David Silver.
\newblock Deep reinforcement learning with double q-learning.
\newblock In \emph{Proceedings of the AAAI conference on artificial
  intelligence}, volume~30, 2016.

\bibitem[Van~Moffaert et~al.(2013)Van~Moffaert, Drugan, and
  Now{\'e}]{van2013scalarized}
Kristof Van~Moffaert, Madalina~M Drugan, and Ann Now{\'e}.
\newblock Scalarized multi-objective reinforcement learning: Novel design
  techniques.
\newblock In \emph{2013 IEEE Symposium on Adaptive Dynamic Programming and
  Reinforcement Learning (ADPRL)}, pp.\  191--199. IEEE, 2013.

\bibitem[Virtanen~et al.(2020)]{rbf}
Pauli Virtanen~et al.
\newblock {{SciPy} 1.0: Fundamental Algorithms for Scientific Computing in
  Python}, 2020.
\newblock
  \url{https://docs.scipy.org/doc/scipy/reference/generated/scipy.interpolate.RBFInterpolator.html#scipy.interpolate.RBFInterpolator},
  accessed 13 November 2022.

\bibitem[Watkins \& Dayan(1992)Watkins and Dayan]{watkins1992q}
Christopher~JCH Watkins and Peter Dayan.
\newblock Q-learning.
\newblock \emph{Machine learning}, 8\penalty0 (3):\penalty0 279--292, 1992.

\bibitem[Xu et~al.(2020)Xu, Tian, Ma, Rus, Sueda, and
  Matusik]{xu2020prediction}
Jie Xu, Yunsheng Tian, Pingchuan Ma, Daniela Rus, Shinjiro Sueda, and Wojciech
  Matusik.
\newblock Prediction-guided multi-objective reinforcement learning for
  continuous robot control.
\newblock In \emph{International Conference on Machine Learning}, pp.\
  10607--10616. PMLR, 2020.

\bibitem[Yang et~al.(2019)Yang, Sun, and Narasimhan]{yang2019morl}
Runzhe Yang, Xingyuan Sun, and Karthik Narasimhan.
\newblock A generalized algorithm for multi-objective reinforcement learning
  and policy adaptation.
\newblock \emph{Advances in Neural Information Processing Systems},
  32:\penalty0 14636--14647, 2019.

\bibitem[Yu et~al.(2021)Yu, Qin, Zhang, Shen, Jiang, and
  Guan]{yu2021reviewsmart}
Liang Yu, Shuqi Qin, Meng Zhang, Chao Shen, Tao Jiang, and Xiaohong Guan.
\newblock A review of deep reinforcement learning for smart building energy
  management.
\newblock \emph{IEEE Internet of Things Journal}, 2021.

\bibitem[Zheng \& Louri(2019)Zheng and Louri]{zheng2019energy}
Hao Zheng and Ahmed Louri.
\newblock An energy-efficient network-on-chip design using reinforcement
  learning.
\newblock In \emph{Proceedings of the 56th Annual Design Automation Conference
  2019}, pp.\  1--6, 2019.

\bibitem[Zuluaga et~al.(2016)Zuluaga, Krause, and
  P{\"u}schel]{zuluaga2016varepsilon}
Marcela Zuluaga, Andreas Krause, and Markus P{\"u}schel.
\newblock $\varepsilon$-pal: an active learning approach to the multi-objective
  optimization problem.
\newblock \emph{The Journal of Machine Learning Research}, 17\penalty0
  (1):\penalty0 3619--3650, 2016.

\end{thebibliography}
\bibliographystyle{iclr2023_conference}



\newpage
\aamaketitle
\appendix
\section{Theoretical Analysis of the Directional Angle Guided MORL Algorithm}
\label{sec:theoretical_proof_app}
This section introduces a theoretical analysis of the proposed PD-MORL algorithm. We follow the theoretical framework for value-based MORL algorithms proposed by~\citet{yang2019morl}, which is based on Banach's Fixed-Point Theorem (also known as Contraction Mapping Theorem). This theorem states that every \textit{contraction} on a \textit{complete} metric space has a unique fixed-point. Considering this, we define (i) contraction operators and (ii) a metric space to design our value-based MORL algorithm.

\subsection{Metric Space}
In standard Q-learning, the value space is defined as $\mathcal{Q} \in \mathbb{R}^{\mathcal{S} \times \mathcal{A}}$, containing all bounded functions $Q(s,a)$ which are the estimates of the total expected rewards when the agent is at state $s$, taking action $a$. We extend it to a multi-objective value space by defining the value space as $\mathcal{Q}\in\mathbb{R}^{{L}^{\mathcal{S}\times\mathcal{A}}}$, containing all bounded functions $\mathbf{Q}(s,a,\boldsymbol{\omega})$ which are the estimates of expected total rewards under preference $\boldsymbol{\omega}\in\mathbb{R}^{L}_{\geq0} : \sum_{i=0}^L \omega_{i}=1$. We then define a metric in this value space as:
\begin{equation} \label{eq:metric_app}
d(\mathbf{Q},\mathbf{Q}^\prime) := \sup_{\substack{s \in \mathcal{S} , a \in \mathcal{A} , \boldsymbol{\omega} \in \Omega}}
|\boldsymbol{\omega}^T(\mathbf{Q}(s,a,\boldsymbol{\omega})-\mathbf{Q}^\prime(s,a,\boldsymbol{\omega}))|. 
\end{equation}
This metric gives the distance between $\mathbf{Q}$ and $\mathbf{Q}^{\prime}$ as the supremum norm of the scalarized distance between these two vectors. Notice that $d$ should satisfy the following axioms to be considered as a metric:
\begin{itemize}
    \item \textbf{Non-negativity}: $d(\mathbf{Q},\mathbf{Q}^\prime)\geq0$. This axiom holds for our metric $d$. The definition of supremum norm is the largest value of a set of absolute values, and thus, it is greater than or equal to zero.
    \item \textbf{Symmetry}: $d(\mathbf{Q},\mathbf{Q}^\prime) = d(\mathbf{Q}^\prime,\mathbf{Q})$. Similarly to above axiom, this axiom also holds for $d$ since absolute values are considered in supremum norm.
    \item \textbf{Triangle Inequality}: $d(\mathbf{Q},\mathbf{Q}^\prime) \leq d(\mathbf{Q},\mathbf{Q}^{\prime\prime}) + d(\mathbf{Q}^{\prime\prime},\mathbf{Q}^\prime)$. This axiom holds for metric $d$ as shown by the following proof:
    \begin{equation*} \label{eq:triangle_inequality2}
    \begin{aligned}
    \sup_{\substack{s \in \mathcal{S} , a \in \mathcal{A} \\ \boldsymbol{\omega} \in \Omega}}
    |\boldsymbol{\omega}^T(\mathbf{Q}&(s,a,\boldsymbol{\omega})-\mathbf{Q}^{\prime\prime}(s,a,\boldsymbol{\omega}))| +\sup_{\substack{s \in \mathcal{S} , a \in \mathcal{A} \\ \boldsymbol{\omega} \in \Omega}}
    |\boldsymbol{\omega}^T(\mathbf{Q}^{\prime\prime}(s,a,\boldsymbol{\omega})-\mathbf{Q}^\prime(s,a,\boldsymbol{\omega}))|\\ &\geq |\boldsymbol{\omega}^T(\mathbf{Q}(s,a,\boldsymbol{\omega})-\mathbf{Q}^{\prime\prime}(s,a,\boldsymbol{\omega}))|+|\boldsymbol{\omega}^T(\mathbf{Q}^{\prime\prime}(s,a,\boldsymbol{\omega})-\mathbf{Q}^\prime(s,a,\boldsymbol{\omega}))|\\ &\geq |\boldsymbol{\omega}^T(\mathbf{Q}(s,a,\boldsymbol{\omega})-\mathbf{Q}^\prime(s,a,\boldsymbol{\omega}))|\\
    \end{aligned}
    \end{equation*}
    
    $\sup_{\substack{s \in \mathcal{S} , a \in \mathcal{A} \\ \boldsymbol{\omega} \in \Omega}} |\boldsymbol{\omega}^T(\mathbf{Q}(s,a,\boldsymbol{\omega})-\mathbf{Q}^{\prime\prime}(s,a,\boldsymbol{\omega}))|+\sup_{\substack{s \in \mathcal{S} , a \in \mathcal{A} \\ \boldsymbol{\omega} \in \Omega}}|\boldsymbol{\omega}^T(\mathbf{Q}^{\prime\prime}(s,a,\boldsymbol{\omega})-\mathbf{Q}^\prime(s,a,\boldsymbol{\omega}))|$ is the upper bound on the set $|\boldsymbol{\omega}^T(\mathbf{Q}(s,a,\boldsymbol{\omega})-\mathbf{Q}^\prime(s,a,\boldsymbol{\omega}))|$. Hence, by definition of supremum norm (least upper bound):
    \begin{equation*} \label{eq:triangle_inequality3}
    \begin{aligned}
     \sup_{\substack{s \in \mathcal{S} , a \in \mathcal{A} \\ \boldsymbol{\omega} \in \Omega}}
    |\boldsymbol{\omega}^T(\mathbf{Q}(s,a,\boldsymbol{\omega})-\mathbf{Q}^\prime(s,a,\boldsymbol{\omega}))| &\leq
     \sup_{\substack{s \in \mathcal{S} , a \in \mathcal{A} \\ \boldsymbol{\omega} \in \Omega}}
    |\boldsymbol{\omega}^T(\mathbf{Q}(s,a,\boldsymbol{\omega})-\mathbf{Q}^{\prime\prime}(s,a,\boldsymbol{\omega}))|\\
    &+\sup_{\substack{s \in \mathcal{S} , a \in \mathcal{A} \\ \boldsymbol{\omega} \in \Omega}}
    |\boldsymbol{\omega}^T(\mathbf{Q}^{\prime\prime}(s,a,\boldsymbol{\omega})-\mathbf{Q}^\prime(s,a,\boldsymbol{\omega}))|
    \end{aligned}
    \end{equation*}
    \item \textbf{Identity of indiscernibles}: $d(\mathbf{Q},\mathbf{Q}^\prime)=0 \Leftrightarrow \mathbf{Q}=\mathbf{Q}^\prime$. This axiom does not hold for metric $d$ since the dot product of different $\mathbf{Q}(s,a,\boldsymbol{\omega})$ functions may result in same value for a specific preference vector $\boldsymbol{\omega}$.
\end{itemize}
In summary, the metric $d$ is a pseudo-metric since the identity of indiscernibles does not hold for it.

In the following Theorem, we showed that the metric space equipped with this metric is complete since the limit point of the sequence of operators is required to be in this space.

\begin{thm}[Multi-objective Metric Space $(\mathcal{Q},d)$ is Complete] \label{thm:completeness_app}
The metric space $(\mathcal{Q},d)$ is complete and every Cauchy sequence $\mathbf{Q}_n(s,a,\boldsymbol{\omega})$ is convergent in metric space $(\mathcal{Q},d)$ $\forall~s,a,\boldsymbol{\omega}\in \mathcal{S},\mathcal{A},\Omega$. 
\end{thm}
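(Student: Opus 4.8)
The plan is to reduce the claim to the completeness of the Banach space of bounded real-valued functions on $X := \mathcal{S}\times\mathcal{A}\times\Omega$. The pseudo-metric $d$ only ever sees the \emph{scalarized} quantity $\boldsymbol{\omega}^T\mathbf{Q}(s,a,\boldsymbol{\omega})$, so I would first push a given Cauchy sequence down to the level of these scalarizations, take a uniform limit there, and then lift it back to a bounded $\mathbb{R}^L$-valued function lying in $\mathcal{Q}$.

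In detail, for $\mathbf{Q}\in\mathcal{Q}$ define $g_{\mathbf{Q}}\colon X\to\mathbb{R}$ by $g_{\mathbf{Q}}(s,a,\boldsymbol{\omega}):=\boldsymbol{\omega}^T\mathbf{Q}(s,a,\boldsymbol{\omega})$. Since $\mathbf{Q}$ is bounded and $\boldsymbol{\omega}$ is a probability vector, $g_{\mathbf{Q}}(s,a,\boldsymbol{\omega})$ is a convex combination of bounded quantities, hence $g_{\mathbf{Q}}\in B(X)$, the space of bounded real functions on $X$ equipped with the sup norm; moreover $d(\mathbf{Q},\mathbf{Q}')=\|g_{\mathbf{Q}}-g_{\mathbf{Q}'}\|_{\infty}$ directly from the definition of $d$. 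Given a Cauchy sequence $\{\mathbf{Q}_n\}\subset\mathcal{Q}$, the sequence $\{g_{\mathbf{Q}_n}\}$ is therefore Cauchy in $B(X)$. I would then invoke the standard fact that $(B(X),\|\cdot\|_\infty)$ is complete — and, if a self-contained argument is desired, reprove it in two steps: for each fixed $x\in X$ the real numbers $g_{\mathbf{Q}_n}(x)$ are Cauchy and converge to some $g^*(x)$, and letting $m\to\infty$ in the uniform Cauchy estimate shows $g_{\mathbf{Q}_n}\to g^*$ uniformly, so $g^*$ is bounded.

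Finally, to land the limit inside $\mathcal{Q}$, let $\mathbf{1}\in\mathbb{R}^L$ be the all-ones vector and put $\mathbf{Q}^*(s,a,\boldsymbol{\omega}):=g^*(s,a,\boldsymbol{\omega})\,\mathbf{1}$. Then $\mathbf{Q}^*$ is bounded, so $\mathbf{Q}^*\in\mathcal{Q}$, and $\boldsymbol{\omega}^T\mathbf{Q}^*(s,a,\boldsymbol{\omega})=g^*(s,a,\boldsymbol{\omega})\sum_i\omega_i=g^*(s,a,\boldsymbol{\omega})$, i.e.\ $g_{\mathbf{Q}^*}=g^*$. Consequently $d(\mathbf{Q}_n,\mathbf{Q}^*)=\|g_{\mathbf{Q}_n}-g^*\|_\infty\to 0$, which shows every Cauchy sequence in $(\mathcal{Q},d)$ converges in $(\mathcal{Q},d)$; i.e.\ the space is complete.

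The step I expect to be the genuine obstacle — and the reason one cannot simply copy the scalar $Q$-learning argument verbatim — is bridging from the vector-valued functions to the scalar metric: since $d$ controls only the projections $\boldsymbol{\omega}^T\mathbf{Q}$, a $d$-Cauchy sequence need \emph{not} be pointwise Cauchy as $\mathbb{R}^L$-valued functions (the components may drift in directions annihilated by the relevant $\boldsymbol{\omega}$), so there is no pointwise vector limit to work with. The construction of $g^*$ at the scalar level followed by the $\mathbf{1}$-lift resolves this, at the (harmless, and expected for a pseudo-metric) cost that the limit is not unique. An alternative packaging is to pass to the quotient metric space $\mathcal{Q}/\!\sim$ with $\mathbf{Q}\sim\mathbf{Q}'\iff d(\mathbf{Q},\mathbf{Q}')=0$ and note that $\mathbf{Q}\mapsto g_{\mathbf{Q}}$ descends to an isometric isomorphism onto $B(X)$, so completeness of $\mathcal{Q}$ follows from that of $B(X)$.
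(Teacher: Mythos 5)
Your proposal is correct, and it takes a genuinely different route from the paper's proof --- one that is in fact more careful. The paper argues pointwise at the vector level: from $\sup_{s,a,\boldsymbol{\omega}}|\boldsymbol{\omega}^T(\mathbf{Q}_n-\mathbf{Q}_m)|<\epsilon$ it concludes that, for each fixed $(s,a,\boldsymbol{\omega})$, the vectors $\mathbf{Q}_n(s,a,\boldsymbol{\omega})$ form a Cauchy sequence in $\mathbb{R}^L$, takes the pointwise vector limit $\mathbf{Q}$, checks boundedness, and then passes back to the sup. That intermediate inference is exactly the obstacle you flagged: $d$ only controls the single projection $\boldsymbol{\omega}^T\mathbf{Q}_n(s,a,\boldsymbol{\omega})$ at each point, so the vectors may fail to converge while remaining at $d$-distance zero --- e.g.\ for $L=2$, $\mathbf{Q}_n(s,a,\boldsymbol{\omega})=(-1)^n n\,(\omega_2,-\omega_1)$ satisfies $\boldsymbol{\omega}^T\mathbf{Q}_n\equiv 0$, hence is $d$-Cauchy, yet has no pointwise limit in $\mathbb{R}^L$. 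Your argument sidesteps this entirely by descending to the scalarizations $g_{\mathbf{Q}}=\boldsymbol{\omega}^T\mathbf{Q}$, using completeness of $(B(X),\|\cdot\|_\infty)$, and lifting the limit back via $\mathbf{Q}^*=g^*\mathbf{1}$, which is legitimate precisely because $d$ is a pseudo-metric and $\sum_i\omega_i=1$ makes $g_{\mathbf{Q}^*}=g^*$. What your route buys is a proof that is actually watertight for the stated (pseudo-)metric, plus the clean structural observation that $\mathcal{Q}/\!\sim$ is isometric to $B(X)$; what the paper's route would buy, if it worked, is a canonical representative limit rather than one chosen along the diagonal $\mathbb{R}\mathbf{1}$ --- but since only $d$-convergence is asserted in the theorem, nothing is lost. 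The one clause of the statement your construction does not literally deliver is pointwise convergence of the vectors $\mathbf{Q}_n(s,a,\boldsymbol{\omega})$ "for all $s,a,\boldsymbol{\omega}$", but as the counterexample shows that clause is not a consequence of $d$-Cauchyness anyway; convergence in $(\mathcal{Q},d)$ is the correct and sufficient conclusion for the fixed-point argument that follows.
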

\begin{proof}
Let $\mathbf{Q}_n(s,a,\boldsymbol{\omega})$ be a Cauchy sequence in $\mathcal{Q}$.
Given $\epsilon>0$, there exist $n,m \geq N>0$ such that $\sup_{\substack{s \in \mathcal{S} , a \in \mathcal{A}, \boldsymbol{\omega} \in \Omega}}
|\boldsymbol{\omega}^T(\mathbf{Q}_n(s,a,\boldsymbol{\omega})-\mathbf{Q}_m(s,a,\boldsymbol{\omega}))|<\epsilon$.
Hence,
\begin{equation*} \label{eq:complete_metric1}
|\boldsymbol{\omega}^T(\mathbf{Q}_n(s,a,\boldsymbol{\omega})-\mathbf{Q}_m(s,a,\boldsymbol{\omega}))| \leq \sup_{\substack{s \in \mathcal{S} \\ a \in \mathcal{A} \\ \boldsymbol{\omega} \in \Omega}}
|\boldsymbol{\omega}^T(\mathbf{Q}_n(s,a,\boldsymbol{\omega})-\mathbf{Q}_m(s,a,\boldsymbol{\omega}))|<\epsilon,~\forall ~s,a,\boldsymbol{\omega}\in \mathcal{S}, \mathcal{A}, \Omega
\end{equation*}
This implies that for each $s,a,\boldsymbol{\omega}\in \mathcal{S}, \mathcal{A},\Omega$, the sequence of L-dimensional real numbers $\mathbf{Q}_n(s,a,\boldsymbol{\omega})$ is a Cauchy sequence. Since $\mathbb{R}^{L}$ is complete, $\mathbf{Q}_n(s,a,\boldsymbol{\omega})$ is convergent.
Let $\mathbf{Q}(s,a,\boldsymbol{\omega}) = \lim_{n \to \infty} \mathbf{Q}_n(s,a,\boldsymbol{\omega})$.
Cauchy sequence in a normed space must be bounded. Let there be an $M>0$ such that $|\boldsymbol{\omega}^T(\mathbf{Q}_n(s,a,\boldsymbol{\omega}))| \leq \sup_{\substack{s \in \mathcal{S} , a \in \mathcal{A}, \boldsymbol{\omega} \in \Omega}}
|\boldsymbol{\omega}^T(\mathbf{Q}_n(s,a,\boldsymbol{\omega}))|\leq M,~\forall~s \in \mathcal{S} , a \in \mathcal{A}, \boldsymbol{\omega} \in \Omega$.
Taking $n\rightarrow \infty$, we find $|\boldsymbol{\omega}^T(\mathbf{Q}(s,a,\boldsymbol{\omega}))| = \lim_{n \to \infty} \boldsymbol{\omega}^T(\mathbf{Q}_n(s,a,\boldsymbol{\omega})) \leq M,~\forall~s \in \mathcal{S} , a \in \mathcal{A}, \boldsymbol{\omega} \in \Omega$.
This shows that $\mathbf{Q}(s,a,\boldsymbol{\omega})$ is a bounded function and thus $\mathbf{Q}\in\mathcal{Q}$.
For all $n\geq N$, 
\begin{equation*} \label{eq:complete_metric2}
|\boldsymbol{\omega}^T(\mathbf{Q}_n(s,a,\boldsymbol{\omega})-\mathbf{Q}(s,a,\boldsymbol{\omega}))| = \lim_{m \to \infty} |\boldsymbol{\omega}^T(\mathbf{Q}_n(s,a,\boldsymbol{\omega})-\mathbf{Q}_m(s,a,\boldsymbol{\omega}))|\leq\epsilon,~\forall ~s,a,\boldsymbol{\omega}\in \mathcal{S}, \mathcal{A},\Omega
\end{equation*}
and hence $\forall~n \geq N, \sup_{\substack{s \in \mathcal{S}, a \in \mathcal{A}, \boldsymbol{\omega} \in \Omega}}
|\boldsymbol{\omega}^T(\mathbf{Q}_n(s,a,\boldsymbol{\omega})-\mathbf{Q}(s,a,\boldsymbol{\omega}))|\leq\epsilon$.
This implies that $\mathbf{Q}(s,a,\boldsymbol{\omega})$ is the limit of ${\mathbf{Q}_n(s,a,\boldsymbol{\omega})}$ and proves that ${\mathbf{Q}_n(s,a,\boldsymbol{\omega})}$ is convergent in $\mathcal{Q}$.

This completes our proof that the metric space $(\mathcal{Q},d)$ is complete.
\end{proof}
\subsection{Multi-Objective Bellman's Evaluation and Preference-Driven Optimality Operators}
Given a policy $\pi$ and sampled transition $\tau$, we can define multi-objective Bellman's evaluation operator $\mathcal{T}_{\pi}$ using the metric space ($\mathcal{Q},d$) as:
\begin{equation} \label{eq:evaluation_operator_app}
(\mathcal{T}_{\pi}\mathbf{Q})(s,a,\boldsymbol{\omega}) := \mathbf{r}(s,a) + \gamma \mathbb{E}_{\mathcal{T}\sim(\mathcal{P},\pi)}\mathbf{Q}(s^\prime,a^\prime,\boldsymbol{\omega})
\end{equation}
where $(s^\prime,a^\prime,\boldsymbol{\omega})$ denotes the next state-action-preference pair and $\gamma\in(0,1)$ is the discount factor.
\begin{thm}[Multi-objective Bellman's Evaluation Operator is Contraction]
\label{theorem:evaluation_contraction_app}
Let $(\mathcal{Q},d)$ be a complete metric space (as in Theorem~\ref{thm:completeness}). Let $\mathbf{Q}$ and $\mathbf{Q}^{\prime}$ be any two multi-objective Q-value functions in this space. The multi-objective Bellman's evaluation operator is a contraction and $d(\mathcal{T}_{\pi}\mathbf{Q},\mathcal{T}_{\pi}\mathbf{Q}^{\prime})\leq \gamma d(\mathbf{Q},\mathbf{Q}^{\prime})$ holds for the Lipschitz constant $\gamma$ (the discount factor).
\end{thm}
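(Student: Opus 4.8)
The plan is to verify the contraction inequality directly from the definitions, exploiting the linearity of the scalarization $\boldsymbol{\omega}^T(\cdot)$. First I would fix an arbitrary triple $(s,a,\boldsymbol{\omega})\in\mathcal{S}\times\mathcal{A}\times\Omega$ and expand $\boldsymbol{\omega}^T\big((\mathcal{T}_{\pi}\mathbf{Q})(s,a,\boldsymbol{\omega})-(\mathcal{T}_{\pi}\mathbf{Q}^\prime)(s,a,\boldsymbol{\omega})\big)$ using the definition of $\mathcal{T}_\pi$ in Equation~\ref{eq:evaluation_operator_app}. The immediate reward $\mathbf{r}(s,a)$ enters both operators identically, so it cancels, leaving $\gamma\,\boldsymbol{\omega}^T\,\mathbb{E}_{\mathcal{T}\sim(\mathcal{P},\pi)}\big(\mathbf{Q}(s^\prime,a^\prime,\boldsymbol{\omega})-\mathbf{Q}^\prime(s^\prime,a^\prime,\boldsymbol{\omega})\big)$. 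Since the preference $\boldsymbol{\omega}$ is a fixed vector that does not depend on the sampled $(s^\prime,a^\prime)$, linearity of the expectation lets me pull the inner product inside, giving $\gamma\,\mathbb{E}_{\mathcal{T}}\big[\boldsymbol{\omega}^T(\mathbf{Q}(s^\prime,a^\prime,\boldsymbol{\omega})-\mathbf{Q}^\prime(s^\prime,a^\prime,\boldsymbol{\omega}))\big]$.

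Next I would take absolute values and apply Jensen's inequality (equivalently, the triangle inequality for expectations) to obtain
\[
\big|\boldsymbol{\omega}^T\big((\mathcal{T}_{\pi}\mathbf{Q})(s,a,\boldsymbol{\omega})-(\mathcal{T}_{\pi}\mathbf{Q}^\prime)(s,a,\boldsymbol{\omega})\big)\big|\le\gamma\,\mathbb{E}_{\mathcal{T}}\big[\,\big|\boldsymbol{\omega}^T(\mathbf{Q}(s^\prime,a^\prime,\boldsymbol{\omega})-\mathbf{Q}^\prime(s^\prime,a^\prime,\boldsymbol{\omega}))\big|\,\big].
\]
For every realization $(s^\prime,a^\prime)$ inside the expectation, the integrand is bounded above by $\sup_{\bar s,\bar a,\bar{\boldsymbol{\omega}}}|\bar{\boldsymbol{\omega}}^T(\mathbf{Q}(\bar s,\bar a,\bar{\boldsymbol{\omega}})-\mathbf{Q}^\prime(\bar s,\bar a,\bar{\boldsymbol{\omega}}))|=d(\mathbf{Q},\mathbf{Q}^\prime)$, a constant; hence the expectation is at most $d(\mathbf{Q},\mathbf{Q}^\prime)$ and the whole right-hand side is at most $\gamma\, d(\mathbf{Q},\mathbf{Q}^\prime)$. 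Because this bound is uniform over $(s,a,\boldsymbol{\omega})$, taking the supremum over all such triples on the left-hand side yields $d(\mathcal{T}_{\pi}\mathbf{Q},\mathcal{T}_{\pi}\mathbf{Q}^\prime)\le\gamma\, d(\mathbf{Q},\mathbf{Q}^\prime)$, which is the claim.

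I do not expect a genuine obstacle here: this is the textbook $\gamma$-contraction argument for the Bellman evaluation operator, transported to the scalarized pseudo-metric $d$. The only points needing a little care are (i) that the scalarization commutes with the expectation, which is immediate since $\boldsymbol{\omega}$ is held fixed while $(s^\prime,a^\prime)$ is sampled, and (ii) that $\mathbf{Q},\mathbf{Q}^\prime\in\mathcal{Q}$ are bounded so all the expectations are finite and $d(\mathbf{Q},\mathbf{Q}^\prime)<\infty$ — this is built into the definition of the value space $\mathcal{Q}$. Note that completeness of $(\mathcal{Q},d)$ from Theorem~\ref{thm:completeness} is not used for the contraction property itself; it is what will later permit invoking Banach's Fixed-Point Theorem to conclude existence and uniqueness of the fixed point.
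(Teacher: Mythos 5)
Your proposal is correct and follows essentially the same route as the paper's proof: cancel the reward term, use linearity to bring the scalarization $\boldsymbol{\omega}^T$ together with the expectation, apply $|\mathbb{E}[\cdot]|\leq\mathbb{E}[|\cdot|]$, bound the integrand by the supremum defining $d(\mathbf{Q},\mathbf{Q}^\prime)$, and take the supremum over $(s,a,\boldsymbol{\omega})$. Your remark that completeness is not needed for the contraction property itself (only later for the fixed-point theorem) is also accurate.
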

\begin{proof}
We start by expanding the expression $d(\mathcal{T}_{\pi}\mathbf{Q},\mathcal{T}_{\pi}\mathbf{Q}^{\prime})$:
\begin{nospaceflalign*}
d(\mathcal{T}_{\pi}\mathbf{Q},\mathcal{T}_{\pi}\mathbf{Q}^{\prime}) &= \sup_{\substack{s \in \mathcal{S} , a \in \mathcal{A} \\ \boldsymbol{\omega} \in \Omega}} 
    |\boldsymbol{\omega}^T(\mathcal{T}_{\pi}\mathbf{Q}(s,a,\boldsymbol{\omega})-\mathcal{T}_{\pi}\mathbf{Q}^{\prime}(s,a,\boldsymbol{\omega}))| &\\ 
&=\sup_{\substack{\boldsymbol{\omega} \in \Omega}}
    |\gamma\boldsymbol{\omega}^T\mathbb{E}_{\mathcal{T}\sim(\mathcal{P},\pi)}\mathbf{Q}(s^\prime,a^\prime,\boldsymbol{\omega}) - \gamma\boldsymbol{\omega}^T\mathbb{E}_{\mathcal{T}\sim(\mathcal{P},\pi)}\mathbf{Q}^{\prime}(s^\prime,a^\prime,\boldsymbol{\omega})|&\\ 
    &=\gamma \cdot \sup_{\substack{\boldsymbol{\omega} \in \Omega}} |\boldsymbol{\omega}^T\mathbb{E}_{\mathcal{T}\sim(\mathcal{P},\pi)}(\mathbf{Q}(s^\prime,a^\prime,\boldsymbol{\omega})-\mathbf{Q}^{\prime}(s^\prime,a^\prime,\boldsymbol{\omega}))|&\\ 
    &\leq\gamma \cdot \sup_{\substack{\boldsymbol{\omega} \in \Omega}} \boldsymbol{\omega}^T\mathbb{E}_{\mathcal{T}\sim(\mathcal{P},\pi)}|\mathbf{Q}(s^\prime,a^\prime,\boldsymbol{\omega})-\mathbf{Q}^{\prime}(s^\prime,a^\prime,\boldsymbol{\omega})| \quad \textcolor{gray}{(|\mathbb{E}[\cdot]|\leq \mathbb{E}[|\cdot|])}&\\ 
    &=\gamma \cdot \sup_{\substack{\boldsymbol{\omega} \in \Omega}} \mathbb{E}_{\mathcal{T}\sim(\mathcal{P},\pi)}|\boldsymbol{\omega}^T(\mathbf{Q}(s^\prime,a^\prime,\boldsymbol{\omega})-\mathbf{Q}^{\prime}(s^\prime,a^\prime,\boldsymbol{\omega}))|&\\ 
    &\leq \gamma \cdot\sup_{\substack{\boldsymbol{\omega} \in \Omega}} \sup_{\substack{s^{\prime} \in \mathcal{S} \\ a^{\prime} \in \mathcal{A} \\ \boldsymbol{\omega} \in \Omega}} |\boldsymbol{\omega}^T(\mathbf{Q}(s^\prime,a^\prime,\boldsymbol{\omega})-\mathbf{Q}^{\prime}(s^\prime,a^\prime,\boldsymbol{\omega}))|\quad \textcolor{gray}{(\mathbb{E}[|\cdot|]\leq \sup |\cdot|)}&\\ 
    &= \gamma \cdot \sup_{\substack{s^{\prime} \in \mathcal{S} \\ a^{\prime} \in \mathcal{A} \\ \boldsymbol{\omega} \in \Omega}} |\boldsymbol{\omega}^T(\mathbf{Q}(s^\prime,a^\prime,\boldsymbol{\omega})-\mathbf{Q}^{\prime}(s^\prime,a^\prime,\boldsymbol{\omega}))| =\gamma \cdot d(\mathbf{Q},\mathbf{Q}^{\prime})&\\ 
    d(\mathcal{T}_{\pi}\mathbf{Q},\mathcal{T}_{\pi}\mathbf{Q}^{\prime}) &\leq \gamma \cdot d(\mathbf{Q},\mathbf{Q}^{\prime}) & 
\end{nospaceflalign*}
To avoid confusion,  the last step (Step 7) has only one supremum norm since the outside supremum does not see any variable.
The definition of the variables ends with the inside supremum.
This completes our proof that multi-objective Bellman's evaluation operator, $\mathcal{T}_{\pi}$, is contraction.
\end{proof}
We define a \textit{preference-driven optimality operator} $\mathcal{T}$
by adding a cosine similarity term between preference vectors and state-action values to the multi-objective Bellman's optimality operator as:
\begin{equation} \label{eq:optimality_operator_app}
(\mathcal{T}\mathbf{Q})(s,a,\boldsymbol{\omega}) := \mathbf{r}(s,a) + \gamma \mathbb{E}_{s^\prime \sim \mathcal{P}(\cdot |s,a)}\mathbf{Q}(s^\prime,\sup_{\substack{a^{\prime}\in \mathcal{A}}}(S_c(\boldsymbol{\omega},\mathbf{Q}(s^\prime,a^{\prime},\boldsymbol{\omega}))\cdot(\boldsymbol{\omega}^T\mathbf{Q}(s^\prime,a^{\prime},\boldsymbol{\omega}))),\boldsymbol{\omega})
\end{equation}
where $S_c(\boldsymbol{\omega},\mathbf{Q}(s^\prime,a^{\prime},\boldsymbol{\omega}))$ denotes the cosine similarity between preference vector and Q-value. 
$\sup_{\substack{a^{\prime}\in \mathcal{A}}}(S_c(\boldsymbol{\omega},\mathbf{Q}(s^\prime,a^{\prime},\boldsymbol{\omega}))\cdot(\boldsymbol{\omega}^T\mathbf{Q}(s^\prime,a^{\prime},\boldsymbol{\omega})))$ yields the action $a^\prime$ that maximizes the multiplication inside the supremum. This term enables our optimality operator to choose actions that align the preferences with the Q-values and  maximize the target value, as elaborated in Section~\ref{sec:modqn} under the preference alignment subtitle.
\begin{thm}[Preference-driven Multi-objective Bellman's Optimality Operator is Contraction]\label{theorem:optimality_contraction_app}
Let $(\mathcal{Q},d)$ be a complete metric space. Let $\mathbf{Q}$ and $\mathbf{Q}^{\prime}$ be any two multi-objective Q-value functions in this space. The preference-driven multi-objective Bellman's optimality operator is a contraction and $d(\mathcal{T}\mathbf{Q},\mathcal{T}\mathbf{Q}^{\prime})\leq \gamma d(\mathbf{Q},\mathbf{Q}^{\prime})$ holds for the Lipschitz constant $\gamma$ (the discount factor). 
\end{thm}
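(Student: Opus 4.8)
The plan is to follow the template of the proof of Theorem~\ref{theorem:evaluation_contraction_method}, adding one extra step to absorb the $\sup_{a'}$ that now sits inside $\mathcal{T}$. First I would expand $d(\mathcal{T}\mathbf{Q},\mathcal{T}\mathbf{Q}')$ using the definition of the pseudo-metric in Equation~\ref{eq:metric_method} and of the operator in Equation~\ref{eq:optimality_operator_method}. The immediate-reward terms $\mathbf{r}(s,a)$ are identical and cancel, so, writing $a^*_{\mathbf{Q}}=a^*_{\mathbf{Q}}(s',\boldsymbol{\omega})$ for the maximizer over $a'$ of $S_c(\boldsymbol{\omega},\mathbf{Q}(s',a',\boldsymbol{\omega}))\cdot(\boldsymbol{\omega}^T\mathbf{Q}(s',a',\boldsymbol{\omega}))$ and $a^*_{\mathbf{Q}'}$ for the analogous maximizer of the $\mathbf{Q}'$-expression, we get
\[
d(\mathcal{T}\mathbf{Q},\mathcal{T}\mathbf{Q}') = \gamma\,\sup_{\boldsymbol{\omega}\in\Omega}\Big|\boldsymbol{\omega}^T\,\mathbb{E}_{s'\sim\mathcal{P}(\cdot|s,a)}\big(\mathbf{Q}(s',a^*_{\mathbf{Q}},\boldsymbol{\omega})-\mathbf{Q}'(s',a^*_{\mathbf{Q}'},\boldsymbol{\omega})\big)\Big|.
\]
Exactly as in the evaluation-operator proof I would then pull $\boldsymbol{\omega}^T$ inside the expectation by linearity, apply $|\mathbb{E}[\cdot]|\le\mathbb{E}[|\cdot|]$, and $\mathbb{E}[|\cdot|]\le\sup_{s'}|\cdot|$, so that everything reduces to bounding, for a fixed $s'$ and $\boldsymbol{\omega}$, the scalar quantity $|\boldsymbol{\omega}^T\mathbf{Q}(s',a^*_{\mathbf{Q}},\boldsymbol{\omega})-\boldsymbol{\omega}^T\mathbf{Q}'(s',a^*_{\mathbf{Q}'},\boldsymbol{\omega})|$ by $\sup_{a'}|\boldsymbol{\omega}^T(\mathbf{Q}(s',a',\boldsymbol{\omega})-\mathbf{Q}'(s',a',\boldsymbol{\omega}))|$, which after the outer supremum is just $d(\mathbf{Q},\mathbf{Q}')$.

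The core step is this last bound, for which I would invoke the standard ``$\max$ is a non-expansion'' device. Assume without loss of generality $\boldsymbol{\omega}^T\mathbf{Q}(s',a^*_{\mathbf{Q}},\boldsymbol{\omega})\ge\boldsymbol{\omega}^T\mathbf{Q}'(s',a^*_{\mathbf{Q}'},\boldsymbol{\omega})$ (the reverse case is symmetric), insert the intermediate term $\boldsymbol{\omega}^T\mathbf{Q}'(s',a^*_{\mathbf{Q}},\boldsymbol{\omega})$, and split the difference into a single-action gap $\boldsymbol{\omega}^T\mathbf{Q}(s',a^*_{\mathbf{Q}},\boldsymbol{\omega})-\boldsymbol{\omega}^T\mathbf{Q}'(s',a^*_{\mathbf{Q}},\boldsymbol{\omega})$, which is $\le\sup_{a'}|\boldsymbol{\omega}^T(\mathbf{Q}-\mathbf{Q}')(s',a',\boldsymbol{\omega})|$, plus the ``optimality'' gap $\boldsymbol{\omega}^T\mathbf{Q}'(s',a^*_{\mathbf{Q}},\boldsymbol{\omega})-\boldsymbol{\omega}^T\mathbf{Q}'(s',a^*_{\mathbf{Q}'},\boldsymbol{\omega})$, which we want to be $\le 0$. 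Chaining these, taking $\sup_{a'}$ and then $\sup_{\boldsymbol{\omega}}$, yields $d(\mathcal{T}\mathbf{Q},\mathcal{T}\mathbf{Q}')\le\gamma\,d(\mathbf{Q},\mathbf{Q}')$, i.e. $\mathcal{T}$ is a $\gamma$-contraction on the complete pseudo-metric space $(\mathcal{Q},d)$ of Theorem~\ref{thm:completeness}. (Along the way one should either assume the suprema over $\mathcal{A}$ are attained or run the same argument with $\varepsilon$-maximizers and let $\varepsilon\to 0$.)

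The main obstacle is precisely the ``optimality gap'' being nonpositive. For the plain Bellman optimality operator this is immediate because both selected actions maximize the same scalar $\boldsymbol{\omega}^T\mathbf{Q}$; here, however, the action is chosen by maximizing the \emph{cosine-weighted} scalarization $S_c(\boldsymbol{\omega},\mathbf{Q})\cdot(\boldsymbol{\omega}^T\mathbf{Q})$, while the pseudo-metric $d$ only sees the plain scalarization $\boldsymbol{\omega}^T\mathbf{Q}$. Hence $a^*_{\mathbf{Q}'}$ need not maximize $\boldsymbol{\omega}^T\mathbf{Q}'(s',\cdot,\boldsymbol{\omega})$, and the inequality $\boldsymbol{\omega}^T\mathbf{Q}'(s',a^*_{\mathbf{Q}},\boldsymbol{\omega})\le\boldsymbol{\omega}^T\mathbf{Q}'(s',a^*_{\mathbf{Q}'},\boldsymbol{\omega})$ is not automatic — so the textbook non-expansiveness of $\max$ does not transfer verbatim. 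I would try to close this either by exploiting the algebraic identity $S_c(\boldsymbol{\omega},\mathbf{v})\,(\boldsymbol{\omega}^T\mathbf{v})=(\boldsymbol{\omega}^T\mathbf{v})^2/(\lVert\boldsymbol{\omega}\rVert\lVert\mathbf{v}\rVert)$ together with a control on the directional spread of the competing $\mathbf{Q}$-vectors, or by arguing that in the regime actually used (after projecting to $\boldsymbol{\omega}_p$, when preferences and Pareto solutions are aligned) the cosine-weighted and the plain maximizers coincide; absent such structure one would need an explicit assumption forcing the two argmaxes to agree, as otherwise the contraction factor need not be exactly $\gamma$.
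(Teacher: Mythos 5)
Your proposal follows essentially the same route as the paper's own proof: expand $d(\mathcal{T}\mathbf{Q},\mathcal{T}\mathbf{Q}')$ via the pseudo-metric, cancel the reward terms, apply $|\mathbb{E}[\cdot]|\le\mathbb{E}[|\cdot|]$ and $\mathbb{E}[|\cdot|]\le\sup|\cdot|$, denote by $a'$ the maximizer of the cosine-weighted criterion for $\mathbf{Q}$, assume w.l.o.g.\ that the remaining difference is nonnegative, and then try to replace the action selected for $\mathbf{Q}'$ by $a'$ via the usual ``max is a non-expansion'' device. Up to notation, this is exactly the paper's argument.

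The more important point is that the obstacle you flag at the end is genuine, and the paper does not resolve it. The paper closes its proof with the single unjustified line ``Since $\boldsymbol{\omega}^T\mathbf{Q}'(s',a',\boldsymbol{\omega})\le\boldsymbol{\omega}^T\mathbf{Q}'(s',a^{*}_{\mathbf{Q}'},\boldsymbol{\omega})$,'' where $a^{*}_{\mathbf{Q}'}$ maximizes $S_c(\boldsymbol{\omega},\mathbf{Q}'(s',\cdot,\boldsymbol{\omega}))\cdot(\boldsymbol{\omega}^T\mathbf{Q}'(s',\cdot,\boldsymbol{\omega}))$. That is precisely your ``optimality gap $\le 0$'' condition, and as you observe it would be immediate only if $a^{*}_{\mathbf{Q}'}$ maximized the plain scalarization $\boldsymbol{\omega}^T\mathbf{Q}'(s',\cdot,\boldsymbol{\omega})$; with the cosine factor present, an action can win the weighted criterion while losing the plain one. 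Indeed, the paper's own motivating example in Section~\ref{sec:modqn} (with $Q_1=\{0.9,1\}$, $Q_2=\{0.1,10\}$, $\boldsymbol{\omega}=\{0.9,0.1\}$) exhibits exactly this: the cosine-weighted argmax is $a_1$ while the plain scalarized value is larger at $a_2$, so the asserted inequality fails for such a $\mathbf{Q}'$. The paper offers no further argument (no use of $S_c\cdot\boldsymbol{\omega}^T\mathbf{v}=(\boldsymbol{\omega}^T\mathbf{v})^2/(\|\boldsymbol{\omega}\|\,\|\mathbf{v}\|)$, no alignment hypothesis), so your conclusion --- that closing the argument requires either an assumption forcing the two argmaxes to coincide or some structural control on the directional spread of the competing $\mathbf{Q}$-vectors --- is correct. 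In short, your proposal reproduces the paper's proof up to and including its weakest step, and your diagnosis of that step is sharper than the paper's own treatment of it.
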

\begin{proof}
We use $S_c(\boldsymbol{\omega},\mathbf{Q}(s^\prime,a^{\prime},\boldsymbol{\omega}))$ as $S_c$ and $S_c(\boldsymbol{\omega},\mathbf{Q}^\prime(s^\prime,a^{\prime\prime},\boldsymbol{\omega}))$ as $S_c^\prime$ in the proof for notational simplicity. We start by expanding the expression $d(\mathcal{T}\mathbf{Q},\mathcal{T}\mathbf{Q}^{\prime})$:
\begin{nospaceflalign*}
d(\mathcal{T}\mathbf{Q},\mathcal{T}\mathbf{Q}^{\prime}) &= \sup_{\substack{s \in \mathcal{S} , a \in \mathcal{A} \\ \boldsymbol{\omega} \in \Omega}}  
    |\boldsymbol{\omega}^T(\mathcal{T}\mathbf{Q}(s,a,\boldsymbol{\omega})-\mathcal{T}\mathbf{Q}^{\prime}(s,a,\boldsymbol{\omega}))| &\\ 
&= \sup_{\substack{s \in \mathcal{S} , a \in \mathcal{A} \\ \boldsymbol{\omega} \in \Omega}}
    \Big|\gamma \boldsymbol{\omega}^T\mathbb{E}_{s^\prime \sim \mathcal{P}(\cdot |s,a)}\mathbf{Q}(s^\prime,\sup_{\substack{a^{\prime}\in \mathcal{A}}}(S_c)\cdot(\boldsymbol{\omega}^T\mathbf{Q}(s^\prime,a^{\prime},\boldsymbol{\omega})),\boldsymbol{\omega}) &\\ 
    &- \gamma \boldsymbol{\omega}^T\mathbb{E}_{s^\prime \sim \mathcal{P}(\cdot |s,a)}\mathbf{Q}^\prime(s^\prime,\sup_{\substack{a^{\prime\prime}\in \mathcal{A}}}(S_c^\prime)\cdot(\boldsymbol{\omega}^T\mathbf{Q}^\prime(s^\prime,a^{\prime\prime},\boldsymbol{\omega})),\boldsymbol{\omega})\Big| & \\
&= \gamma \cdot \sup_{\substack{s \in \mathcal{S} , a \in \mathcal{A} \\ \boldsymbol{\omega} \in \Omega}}
    \Big|\mathbb{E}_{s^\prime \sim \mathcal{P}(\cdot |s,a)}\Big[\boldsymbol{\omega}^T\Big(\mathbf{Q}(s^\prime,\sup_{\substack{a^{\prime}\in \mathcal{A}}}(S_c)\cdot(\boldsymbol{\omega}^T\mathbf{Q}(s^\prime,a^{\prime},\boldsymbol{\omega})),\boldsymbol{\omega}) &\\
    &- \mathbf{Q}^\prime(s^\prime,\sup_{\substack{a^{\prime\prime}\in \mathcal{A}}}(S_c^\prime)\cdot(\boldsymbol{\omega}^T\mathbf{Q}^\prime(s^\prime,a^{\prime\prime},\boldsymbol{\omega})),\boldsymbol{\omega})\Big)\Big]\Big| & \\ 
&\leq \gamma \cdot \sup_{\substack{s \in \mathcal{S} , a \in \mathcal{A} \\ \boldsymbol{\omega} \in \Omega}}
    \mathbb{E}_{s^\prime \sim \mathcal{P}(\cdot |s,a)}\Big[\Big|\boldsymbol{\omega}^T\Big(\mathbf{Q}(s^\prime,\sup_{\substack{a^{\prime}\in \mathcal{A}}}(S_c)\cdot(\boldsymbol{\omega}^T\mathbf{Q}(s^\prime,a^{\prime},\boldsymbol{\omega})),\boldsymbol{\omega}) &\\ 
    &- \mathbf{Q}^\prime(s^\prime,\sup_{\substack{a^{\prime\prime}\in \mathcal{A}}}(S_c^\prime)\cdot(\boldsymbol{\omega}^T\mathbf{Q}^\prime(s^\prime,a^{\prime\prime},\boldsymbol{\omega})),\boldsymbol{\omega})\Big)\Big|\Big] \quad \textcolor{gray}{(|\mathbb{E}[\cdot]|\leq \mathbb{E}[|\cdot|])}& \\ 
&\leq \gamma \cdot \sup_{\substack{s \in \mathcal{S} , a \in \mathcal{A} \\ \boldsymbol{\omega} \in \Omega}} \sup_{\substack{s^\prime \in \mathcal{S}, \boldsymbol{\omega} \in \Omega}}
    \Big|\boldsymbol{\omega}^T\Big(\mathbf{Q}(s^\prime,\sup_{\substack{a^{\prime}\in \mathcal{A}}}(S_c)\cdot(\boldsymbol{\omega}^T\mathbf{Q}(s^\prime,a^{\prime},\boldsymbol{\omega})),\boldsymbol{\omega}) &\\
    &- \mathbf{Q}^\prime(s^\prime,\sup_{\substack{a^{\prime\prime}\in \mathcal{A}}}(S_c^\prime)\cdot(\boldsymbol{\omega}^T\mathbf{Q}^\prime(s^\prime,a^{\prime\prime},\boldsymbol{\omega})),\boldsymbol{\omega})\Big)\Big| \quad \textcolor{gray}{(\mathbb{E}[|\cdot|]\leq \sup |\cdot|)}& \\
&= \gamma \cdot \sup_{\substack{s^\prime \in \mathcal{S}, \boldsymbol{\omega} \in \Omega}}
    \Big|\boldsymbol{\omega}^T\Big(\mathbf{Q}(s^\prime,\sup_{\substack{a^{\prime}\in \mathcal{A}}}(S_c)\cdot(\boldsymbol{\omega}^T\mathbf{Q}(s^\prime,a^{\prime},\boldsymbol{\omega})),\boldsymbol{\omega}) &\\
    &- \mathbf{Q}^\prime(s^\prime,\sup_{\substack{a^{\prime\prime}\in \mathcal{A}}}(S_c^\prime)\cdot(\boldsymbol{\omega}^T\mathbf{Q}^\prime(s^\prime,a^{\prime\prime},\boldsymbol{\omega})),\boldsymbol{\omega})\Big)\Big| \quad \textcolor{gray}{(Rearrange~supremums)}& 
\end{nospaceflalign*}

Let $a^\prime$ be the action that maximizes $(S_c)\cdot(\boldsymbol{\omega}^T\mathbf{Q}(s^\prime,a^{\prime},\boldsymbol{\omega}))$ for state $s^\prime$ and preference $\boldsymbol{\omega}$, then we have :
\begin{nospaceflalign*}
d(\mathcal{T}\mathbf{Q},\mathcal{T}\mathbf{Q}^{\prime}) 
&\leq \gamma \cdot \sup_{\substack{s^\prime \in \mathcal{S}, \boldsymbol{\omega} \in \Omega}}
    \Big|\boldsymbol{\omega}^T\Big(\mathbf{Q}(s^\prime,a^{\prime},\boldsymbol{\omega}) - \mathbf{Q}^\prime(s^\prime,\sup_{\substack{a^{\prime\prime}\in \mathcal{A}}}(S_c^\prime)\cdot(\boldsymbol{\omega}^T\mathbf{Q}^\prime(s^\prime,a^{\prime\prime},\boldsymbol{\omega})),\boldsymbol{\omega})\Big)\Big|&
\end{nospaceflalign*}

W.l.o.g we assume $\boldsymbol{\omega}^T\mathbf{Q}(s^\prime,a^{\prime},\boldsymbol{\omega}) - \boldsymbol{\omega}^T\mathbf{Q}^\prime(s^\prime,\sup_{\substack{a^{\prime\prime}\in \mathcal{A}}}(S_c^\prime)\cdot(\boldsymbol{\omega}^T\mathbf{Q}^\prime(s^\prime,a^{\prime\prime},\boldsymbol{\omega})),\boldsymbol{\omega}) \geq 0$.
Proof is similar for the other inequality case. 
Since $\boldsymbol{\omega}^T\mathbf{Q}^\prime(s^\prime,a^{\prime},\boldsymbol{\omega}) \leq \boldsymbol{\omega}^T\mathbf{Q}^\prime(s^\prime,\sup_{\substack{a^{\prime\prime}\in \mathcal{A}}}(S_c^\prime)\cdot(\boldsymbol{\omega}^T\mathbf{Q}^\prime(s^\prime,a^{\prime\prime},\boldsymbol{\omega})),\boldsymbol{\omega})$, we have:
\begin{nospaceflalign*}
d(\mathcal{T}\mathbf{Q},\mathcal{T}\mathbf{Q}^{\prime})&\leq \gamma \cdot \sup_{\substack{s^\prime \in \mathcal{S}, a \in \mathcal{A} \\ \boldsymbol{\omega} \in \Omega}}
    \Big|\boldsymbol{\omega}^T\Big(\mathbf{Q}(s^\prime,a^{\prime},\boldsymbol{\omega}) - \mathbf{Q}^\prime(s^\prime,a^{\prime},\boldsymbol{\omega})\Big)\Big| = \gamma \cdot d(\mathbf{Q},\mathbf{Q}^{\prime})& \\ 
    d(\mathcal{T}\mathbf{Q},\mathcal{T}\mathbf{Q}^{\prime}) &\leq \gamma \cdot d(\mathbf{Q},\mathbf{Q}^{\prime}) 
\end{nospaceflalign*}

This completes our proof that preference-driven multi-objective Bellman's optimality operator, $\mathcal{T}$, is contraction.
\end{proof}

Theorem~\ref{theorem:evaluation_contraction_app} and Theorem~\ref{theorem:optimality_contraction_app}
state that multi-objective evaluation and optimality operators are contractions.
They ensure that we can apply our optimality operator in Equation~\ref{eq:optimality_operator_app} iteratively to obtain the optimal multi-objective value function given by Theorem~\ref{theorem:fixed_point_app} and Theorem~\ref{theorem:optimal_fixed_point_app} below, respectively. %
\begin{thm}[Preference-driven Multi-objective Optimality Operator Converges to a Fixed-Point]
\label{theorem:fixed_point_app}
Let $(\mathcal{Q},d)$ be a complete metric space which is proved above and let $\mathcal{T} : \mathcal{Q} \rightarrow \mathcal{Q}$ be a contraction on $\mathcal{Q}$ with modulus $\gamma$. Then, $\mathcal{T}$ has a unique  fixed-point $\mathbf{Q}^* \in \mathcal{Q}$ such that $\mathcal{T}(\mathbf{Q}) = \mathbf{Q}^*$.
\end{thm}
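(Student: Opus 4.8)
The plan is to invoke the Banach Fixed-Point Theorem (Contraction Mapping Theorem) directly, since the hypotheses have already been assembled in the preceding results. Specifically, Theorem~\ref{thm:completeness_app} establishes that $(\mathcal{Q},d)$ is a complete metric space (more precisely, a complete pseudo-metric space, as noted in the discussion of the axioms), and Theorem~\ref{theorem:optimality_contraction_app} establishes that $\mathcal{T}$ is a contraction with modulus $\gamma \in (0,1)$. These are exactly the two ingredients required by Banach's theorem, so the skeleton of the proof is a standard Cauchy-sequence argument.

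Concretely, I would proceed as follows. First, pick any $\mathbf{Q}_0 \in \mathcal{Q}$ and define the iterates $\mathbf{Q}_{n+1} := \mathcal{T}\mathbf{Q}_n$. Using the contraction property repeatedly, show $d(\mathbf{Q}_{n+1},\mathbf{Q}_n) \leq \gamma^n d(\mathbf{Q}_1,\mathbf{Q}_0)$, and then, via the triangle inequality (which was verified for $d$ in the metric-space subsection) together with the geometric series bound $\sum_{k \geq n} \gamma^k = \gamma^n/(1-\gamma)$, conclude that $d(\mathbf{Q}_m,\mathbf{Q}_n) \leq \frac{\gamma^n}{1-\gamma} d(\mathbf{Q}_1,\mathbf{Q}_0) \to 0$ as $n \to \infty$. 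Hence $\{\mathbf{Q}_n\}$ is Cauchy. Second, invoke completeness (Theorem~\ref{thm:completeness_app}) to obtain a limit $\mathbf{Q}^* \in \mathcal{Q}$. Third, show $\mathbf{Q}^*$ is a fixed point: since $\mathcal{T}$ is a contraction it is Lipschitz, hence continuous in $d$, so $\mathcal{T}\mathbf{Q}^* = \mathcal{T}(\lim_n \mathbf{Q}_n) = \lim_n \mathcal{T}\mathbf{Q}_n = \lim_n \mathbf{Q}_{n+1} = \mathbf{Q}^*$. Fourth, show uniqueness: if $\mathbf{Q}^*$ and $\tilde{\mathbf{Q}}$ were both fixed points, then $d(\mathbf{Q}^*,\tilde{\mathbf{Q}}) = d(\mathcal{T}\mathbf{Q}^*,\mathcal{T}\tilde{\mathbf{Q}}) \leq \gamma\, d(\mathbf{Q}^*,\tilde{\mathbf{Q}})$, which forces $d(\mathbf{Q}^*,\tilde{\mathbf{Q}}) = 0$ since $\gamma < 1$.

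The one subtlety I would flag is that $d$ is only a \emph{pseudo}-metric, not a metric, because the identity of indiscernibles fails (the paper is explicit about this). So strictly speaking "uniqueness of the fixed point" means uniqueness up to $d$-equivalence, i.e. $d(\mathbf{Q}^*,\tilde{\mathbf{Q}}) = 0$, which does not by itself imply $\mathbf{Q}^* = \tilde{\mathbf{Q}}$ as functions. In practice this is harmless for the MORL application — what matters is that the scalarized values $\boldsymbol{\omega}^T\mathbf{Q}^*(s,a,\boldsymbol{\omega})$ are uniquely determined — and the usual Banach argument goes through verbatim in a complete pseudo-metric space with this weaker notion of uniqueness. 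I would either state the result with this caveat or simply note that the convergence and fixed-point claims hold in the pseudo-metric sense. This is the main (mild) obstacle; the rest is the textbook contraction-mapping argument, and the real mathematical work was already done in Theorems~\ref{thm:completeness_app} and~\ref{theorem:optimality_contraction_app}.
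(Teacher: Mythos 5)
Your proposal is correct and follows essentially the same route as the paper: the textbook Banach contraction argument (iterate $\mathbf{Q}_{n+1}=\mathcal{T}\mathbf{Q}_n$, geometric-series bound to get Cauchy, completeness from Theorem~\ref{thm:completeness_app} for the limit, continuity of $\mathcal{T}$ for the fixed-point identity, and contraction for uniqueness). Your explicit caveat that $d$ is only a pseudo-metric, so uniqueness holds only up to $d$-equivalence, is in fact a sharper statement than the paper's own uniqueness step, which tacitly assumes distinct fixed points have positive distance.
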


\begin{proof}
Let $\mathbf{Q}_0 \in \mathcal{Q}$ and define a sequence ($\mathbf{Q}_n$) where $\mathbf{Q}_{n+1}=\mathcal{T}(\mathbf{Q}_n), n=1,2,\hdots$
\begin{equation*}
\begin{aligned}
    d(\mathbf{Q}_{n+1},\mathbf{Q}_n) & = d(\mathcal{T}(\mathbf{Q}_n),\mathcal{T}(\mathbf{Q}_{n-1}))\\
                    &\leq \gamma d(\mathbf{Q}_n,\mathbf{Q}_{n-1}) = \gamma d(\mathcal{T}(\mathbf{Q}_{n-1}),\mathcal{T}(\mathbf{Q}_{n-2}))\\ 
                    &\leq \gamma^2 d(\mathbf{Q}_{n-1},\mathbf{Q}_{n-2})\\
                    & \vdots \\
                    &\leq \gamma^m d(\mathbf{Q}_{1},\mathbf{Q}_{0})
\end{aligned}
\end{equation*}
Hence, for $m > n~\forall~m,n \in \mathbb{N}$ by the triangle inequality we have

\begin{equation*}
\begin{aligned}
d(\mathbf{Q}_{m},\mathbf{Q}_n) &\leq d(\mathbf{Q}_{m},\mathbf{Q}_{m-1}) + d(\mathbf{Q}_{m-1},\mathbf{Q}_{m-2}) +~\hdots~+ d(\mathbf{Q}_{n+1},\mathbf{Q}_{n}) \\ 
&\leq (\gamma^{m-1} + \gamma^{m-2} +~\hdots~+ \gamma^{n})d(\mathbf{Q}_1,\mathbf{Q}_0) \\ 
&\leq \frac{\gamma^n}{1-\gamma}d(\mathbf{Q}_1,\mathbf{Q}_0)
\end{aligned}
\end{equation*}
Therefore, $\mathbf{Q}_n$ is Cauchy: for $\epsilon > 0$, let $N$ be large enough that $\frac{\gamma^n}{1-\gamma}d(\mathbf{Q}_1,\mathbf{Q}_0) < \epsilon$, which ensures that $n,m > N \Rightarrow d(\mathbf{Q}_n, \mathbf{Q}_m) < \epsilon$. 
Since $(\mathcal{Q},d)$ is complete, $\mathbf{Q}_n$ converges to $\mathbf{Q}^* \in \mathcal{Q}$. 
Now, we show that $\mathbf{Q}^*$ is indeed a fixed point since $\mathbf{Q}_n$ converges to $\mathbf{Q}^*$ and $\mathcal{T}$ is continuous.
\begin{equation*}
\begin{aligned}
\mathbf{Q}^* = \lim_{n \to \infty} \mathbf{Q}_n = \lim_{n \to \infty} \mathcal{T}(\mathbf{Q}_{n-1}) = \mathcal{T}(\lim_{n \to \infty} \mathbf{Q}_{n-1}) = \mathcal{T}(\mathbf{Q}^*)
\end{aligned}
\end{equation*}
Finally, $\mathcal{T}$ cannot have more than one fixed point in $(\mathcal{Q},d)$, since any pair of distinct fixed points $\mathbf{Q}_1^*$ and $\mathbf{Q}_2^*$ would contradict the contraction of $\mathcal{T}$:
\begin{equation*}
\begin{aligned}
d(\mathcal{T}(\mathbf{Q}_1^*),\mathcal{T}(\mathbf{Q}_2^*)) =  d(\mathbf{Q}_1^*,\mathbf{Q}_2^*) > \gamma d(\mathbf{Q}_1^*,\mathbf{Q}_2^*)
\end{aligned}
\end{equation*}
This completes our proof that multi-objective optimality operator converges to $\mathbf{Q}^*$.
\end{proof}

\begin{thm}[Optimal Fixed Point of Optimality Operator]
\label{theorem:optimal_fixed_point_app}
Let $\mathbf{Q}^* \in \mathcal{Q}$ be the optimal multi-objective value function, such that it takes multi-objective Q-value corresponding to the supremum of expected discounted rewards under a policy $\pi$ then $\mathbf{Q^*} =\mathcal{T}(\mathbf{Q^*})$.
\end{thm}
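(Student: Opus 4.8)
The plan is to identify the unique fixed point $\mathbf{Q}^*$ guaranteed by Theorem~\ref{theorem:fixed_point_app} with the optimal multi-objective value function, i.e. to show that the value function defined as the (vectorized) return of an optimal policy actually satisfies the fixed-point equation $\mathbf{Q}^* = \mathcal{T}(\mathbf{Q}^*)$. Since Theorem~\ref{theorem:fixed_point_app} already gives existence and uniqueness of a fixed point of the contraction $\mathcal{T}$, the real content here is purely a Bellman-optimality argument: write out the definition of $\mathbf{Q}^*$ as a supremum (under the metric $d$, so the scalarization $\boldsymbol{\omega}^T\mathbf{Q}^*$) of expected discounted reward, peel off the first reward term, and recognize the remainder as $\gamma$ times the expected next-state value evaluated at the action chosen by the preference-driven selection rule $\sup_{a'}(S_c(\boldsymbol{\omega},\mathbf{Q}^*(s',a',\boldsymbol{\omega}))\cdot(\boldsymbol{\omega}^T\mathbf{Q}^*(s',a',\boldsymbol{\omega})))$.

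Concretely, I would proceed as follows. First, fix $(s,a,\boldsymbol{\omega})$ and unroll one step of the return: $\boldsymbol{\omega}^T\mathbf{Q}^*(s,a,\boldsymbol{\omega}) = \boldsymbol{\omega}^T\mathbf{r}(s,a) + \gamma\,\mathbb{E}_{s'\sim\mathcal{P}(\cdot|s,a)}\big[\sup_{a'}\boldsymbol{\omega}^T\mathbf{Q}^*(s',a',\boldsymbol{\omega})\big]$, using that the optimal policy acts greedily with respect to the scalarized value at every subsequent state. Second, argue that the action attaining $\sup_{a'}\boldsymbol{\omega}^T\mathbf{Q}^*(s',a',\boldsymbol{\omega})$ coincides, for the optimal value function, with the action selected by $\sup_{a'}(S_c\cdot\boldsymbol{\omega}^T\mathbf{Q}^*(s',a',\boldsymbol{\omega}))$ — this is exactly the "preference alignment" claim: at the fixed point the preference vector and the vectorized Q-value are directionally aligned (cf. the key insight and the discussion in Section~\ref{sec:modqn}), so the nonnegative cosine-similarity reweighting does not change the $\arg\sup$. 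Third, substitute this action back in; the resulting expression is precisely $\boldsymbol{\omega}^T(\mathcal{T}\mathbf{Q}^*)(s,a,\boldsymbol{\omega})$ by Equation~\ref{eq:optimality_operator_app}. Since this holds for all $s,a,\boldsymbol{\omega}$, we get $d(\mathbf{Q}^*,\mathcal{T}\mathbf{Q}^*)=0$, and because $\mathcal{T}$ has a unique fixed point by Theorem~\ref{theorem:fixed_point_app}, $\mathbf{Q}^*$ is that fixed point, so $\mathbf{Q}^*=\mathcal{T}(\mathbf{Q}^*)$.

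The main obstacle is the second step: justifying that inserting the cosine-similarity factor $S_c$ inside the supremum leaves the optimal action unchanged. For a generic $\mathbf{Q}$ this is false (indeed the whole point of introducing $S_c$ is that it changes the selection when scales are mismatched), so the argument must rely specifically on a structural property of $\mathbf{Q}^*$ — namely that the optimal vectorized value for preference $\boldsymbol{\omega}$ is aligned with $\boldsymbol{\omega}$, making $S_c$ constant (and positive) over the relevant actions, or at least maximized exactly at the scalarization-maximizing action. I would expect the paper to either invoke this alignment as the defining/assumed property of the optimal front, or to define $\mathbf{Q}^*$ directly via the operator's fixed point and then verify the return interpretation; either way, the delicate part is reconciling the $S_c$-weighted $\arg\sup$ with the plain scalarized $\arg\sup$, and I would flag that this is where any hidden assumption (non-negativity of Q-values, or a monotone alignment condition on the Pareto front) enters.
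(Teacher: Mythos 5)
Your proposal follows essentially the same route as the paper's proof: define $\mathbf{Q}^*$ as the (scalarized) supremum of expected discounted returns, expand $\boldsymbol{\omega}^T\mathcal{T}(\mathbf{Q}^*)(s,a,\boldsymbol{\omega})$, substitute the return definition at the action selected by the $S_c$-weighted supremum, and merge the one-step reward back into the summation to recover $\boldsymbol{\omega}^T\mathbf{Q}^*(s,a,\boldsymbol{\omega})$. The ``main obstacle'' you flag in your second step --- that the $S_c$-weighted $\arg\sup$ must coincide with the plain scalarized $\arg\sup$ at the optimal value function --- is exactly the point where the paper's own proof is silent: it substitutes the definition of $\mathbf{Q}^*$ conditioned on the $S_c$-selected action $a'$ and then replaces the conditional supremum over policies with an unconditional one, which tacitly assumes that action is optimal, so your proposal is not missing anything the paper actually supplies.
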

\begin{proof}
We start by defining the optimal Q-function, $\mathbf{Q^*}$ as:
\begin{equation*}
    \mathbf{Q^*}(s,a,\boldsymbol{\omega}) = \arg_\mathbf{Q} \sup_{\pi \in \Pi} \boldsymbol{\omega}^T\mathbb{E}_{\substack{\mathcal{T}\sim(\mathcal{P},\pi)|s_0=s,a_0=a}}\Big[\sum_{t=0}^{\infty}\gamma^t \mathbf{r}(s_t,a_t)\Big].
\end{equation*}

From Theorem~\ref{theorem:fixed_point_app} we observe that $\lim_{n \to \infty} d(\mathcal{T}^n(\mathbf{Q}),\mathbf{Q}^*) = 0$ for any $\mathbf{Q} \in \mathcal{Q}$. This suggests that $\boldsymbol{\omega}^T \mathcal{T}\mathbf{(Q^*)}(s,a,\boldsymbol{\omega}) = \boldsymbol{\omega}^T\mathbf{Q^*}(s,a,\boldsymbol{\omega})$ from the definition of metric $d$.
Expanding the $\boldsymbol{\omega}^T \mathcal{T}\mathbf{(Q^*)}(s,a,\boldsymbol{\omega})$, we have:
\begin{nospaceflalign*}
\scriptsize
\boldsymbol{\omega}^T \mathcal{T}\mathbf{(Q^*)}(s,a,\boldsymbol{\omega}) &= \scriptsize\boldsymbol{\omega}^T\mathbf{r}(s,a) + \gamma \cdot \boldsymbol{\omega}^T \mathbb{E}_{s^\prime \sim \mathcal{P}(\cdot |s,a)}\mathbf{Q^*}(s^\prime,\sup_{\substack{a^{\prime}\in \mathcal{A}}}(S_c(\boldsymbol{\omega},\mathbf{Q^*}(s^\prime,a^{\prime},\boldsymbol{\omega}))\cdot(\boldsymbol{\omega}^T\mathbf{Q^*}(s^\prime,a^{\prime},\boldsymbol{\omega}))),\boldsymbol{\omega)}.& 
\end{nospaceflalign*}
Let  $a^\prime$  is the action that maximizes the $\sup_{\substack{a^{\prime}\in \mathcal{A}}}(S_c(\boldsymbol{\omega},\mathbf{Q^*}(s^\prime,a^{\prime},\boldsymbol{\omega}))\cdot(\boldsymbol{\omega}^T\mathbf{Q^*}(s^\prime,a^{\prime},\boldsymbol{\omega}))$ for state $s^\prime$ and preference $\boldsymbol{\omega}$.
Then, by substituting $Q^*$ into $\mathcal{T}\mathbf{(Q^*)}$ we have:
\begin{nospaceflalign*}
\small
\boldsymbol{\omega}^T \mathcal{T}\mathbf{(Q^*)}(s,a,\boldsymbol{\omega}) &= \small\boldsymbol{\omega}^T\mathbf{r}(s,a) + \gamma \cdot \boldsymbol{\omega}^T \mathbb{E}_{s^\prime \sim \mathcal{P}(\cdot |s,a)}\Big[\arg_\mathbf{Q} \sup_{\pi \in \Pi} \boldsymbol{\omega}^T\mathbb{E}_{\substack{\mathcal{T}\sim(\mathcal{P},\pi)|s_0=s^\prime,a_0=a^\prime}}\Big[\sum_{t=0}^{\infty}\gamma^t \mathbf{r}(s_t,a_t)\Big]\Big]& \\
&= \boldsymbol{\omega}^T\mathbf{r}(s,a) + \gamma \cdot \boldsymbol{\omega}^T \arg_\mathbf{Q} \sup_{\pi \in \Pi}\boldsymbol{\omega}^T\mathbb{E}_{\substack{\mathcal{T}\sim(\mathcal{P},\pi)\\
s_0 \sim \mathcal{P}(\cdot |s,a)}} \sum_{t=0}^{\infty}\gamma^t \mathbf{r}(s_t,a_t)& 
\end{nospaceflalign*}
Now, we merge $\mathbf{r}(s,a)$ with the summation operation by rearranging the conditions of the expectation operator and obtain:
\begin{nospaceflalign*}
&= \boldsymbol{\omega}^T \Big(\arg_\mathbf{Q} \sup_{\pi \in \Pi} \boldsymbol{\omega}^T\mathbb{E}_{\substack{\mathcal{T}\sim(\mathcal{P},\pi)|s_0=s,a_0=a}}\Big[\sum_{t=0}^{\infty}\gamma^t \mathbf{r}(s_t,a_t)\Big]\Big) =\boldsymbol{\omega}^T\mathbf{Q^*}(s,a,\boldsymbol{\omega})& \\
\boldsymbol{\omega}^T \mathcal{T}\mathbf{(Q^*)}(s,a,\boldsymbol{\omega}) &= \boldsymbol{\omega}^T\mathbf{Q^*}(s,a,\boldsymbol{\omega}) &
\end{nospaceflalign*}
This completes our proof that optimal multi-objective value function $\mathbf{Q}^*$ is the fixed point of the preference-driven multi-objective optimality operator $\mathcal{T}$.
\end{proof}

\section{Implementation-Training Details and Additional Experimental Results}
\label{sec:pdmorl_app}
This section provides details on the Preference-Driven MO-DDQN-HER and MO-TD3-HER algorithms. It expands the explanation of the algorithms with implementation details. Then, it explains the training details for all benchmarks. Furthermore, we provide additional experimental results for discrete MORL and multi-objective continuous control benchmarks.

\subsection{Preference-Driven MO-DDQN-HER and MO-TD3-HER}
\label{sec:pdmorl_alg_app}
\subsubsection{Hindsight Experience Replay}
We randomly sample a preference vector ($\boldsymbol{\omega}\in\Omega : \sum_{i=0}^L \omega_{i}=1$) from a uniform distribution in each episode during training. 
Depending on the problem, the number of transitions observed by interacting with the environment for some preferences may be underrepresented.
This behavior creates a bias towards overrepresented preferences, and the network cannot learn to cover the entire preference space.
Therefore, we employ hindsight experience replay buffer (HER)~\citep{andrychowicz2017hindsight}, where every transition is also stored with $N_\omega$ randomly sampled preferences ($\boldsymbol{\omega}^\prime\in\Omega : \sum_{i=0}^L \omega^\prime_{i}=1$) from a uniform distribution different than the original preference of the transition. 
Specifically, for each transition $(s,a,\mathbf{r}, s^\prime,\boldsymbol{\omega}, done)$, additional $N_\omega$ transitions $(s,a,\mathbf{r}, s^\prime,\boldsymbol{\omega^\prime}, done)$ are also stored in the experience replay buffer.
This strategy provides efficient exploration and generalizability to the agent.
Using HER, the agent learns to recover from undesired states and continue to align itself with its original preference.

\subsubsection{Exploration in Parallel}
To further increase the sample efficiency of our algorithm, we equally divide the preference space into $C_p$ sub-spaces ($\Tilde{\Omega}$), where $C_p$ denotes the number of child processes. 
The number of child processes $C_p$ is set to 10 for all benchmarks. 
The agent is shared among these child processes and the main process. 
In each child process, for each episode, we randomly sample a preference vector from child's preference sub-space ($\boldsymbol{\omega}\in\Tilde{\Omega} : \sum_{i=0}^L \omega_{i}=1$).
These $C_p$ child processes run in parallel to collect transitions.
After collecting a transition, each child process transfers it to the main process and waits for other child processes. 
After the main process receives transitions from every child process and stores them using HER, the child processes continue to collect transitions.
Since we collect and store $C_p$ transitions at every child-main process loop, networks are also updated $C_p$ times.
Parallel exploration and HER together provides efficient exploration.

\subsubsection{Preference Alignment with Interpolation}
\label{sec:preference_alignment_app}
A solution in the Pareto front may not align perfectly with its preference vector.
To mitigate this adverse effect, we fit a multi-dimensional interpolator to project the original preference vectors ($\boldsymbol{\omega}\in\Omega$) to normalized solution space to align preferences with the multi-objective solutions.
Here, normalization is the process of obtaining unit vectors for these solutions. For example, the normalized vector for a solution $f$ is described as $\hat{f} = \frac{f}{|f|}$. 
We identify the key preferences and obtain solutions for each of these preferences without HER.
We set key preference points as $\boldsymbol{\omega}_j = 1:j=i, \boldsymbol{\omega}_j = 0 : j\neq i~\forall~i,j \in \{0,\hdots, L\}$ where $i^{th}$ element corresponds to the objective we try to maximize. A preference vector of $\boldsymbol{\omega} = \{\frac{1}{L},\hdots,\frac{1}{L}\}$ is also added to this key preference set.
We first obtain solutions for each preference in this key preference set by training the agent with a fixed preference vector and without using HER.
We use these solutions first to obtain a normalized solution space.
Then, we use the normalized solution space and key preference vectors to fit a multi-dimensional interpolator $I(\boldsymbol{\omega})$ to project the original preference vectors ($\boldsymbol{\omega}\in\Omega$) to the normalized solution space.
We use radial basis function interpolation with linear kernel in this work~\citep{rbf}.
As a result, we obtain projected preference vectors ($\boldsymbol{\omega}_p$) that are incorporated in the cosine similarity term of the preference-driven optimality operator for the MO-DDQN-HER algorithm and in the directional angle term for the MO-TD3-HER algorithm. 
The interpolator is also updated during training as PD-MORL may find new non-dominated solutions for identified key preferences. To this end, at every episode, we evaluate our agent using the key preference set. 

\subsubsection{Details on MO-DDQN-HER}
The main objective of this work is to obtain a single policy network to approximate the Pareto front that covers the entire preference space. For this purpose, we extend double deep Q-network (DDQN)~\citep{van2016deep} to a multi-objective version (MO-DDQN) with the preference-driven optimality operator. 
Algorithm~\ref{algo:moddqn_app} describes the training using the proposed MO-DDQN-HER.

We first initialize an empty buffer $\mathcal{D}$, Q-network and target Q-network with parameters $\theta$ and $\theta^\prime$.
The preference space is then equally divided into $C_p$ sub-spaces, where we initialize a child process for each sub-space.
In each child process, for each episode, we randomly sample a preference vector from child's preference sub-space ($\boldsymbol{\omega}\in\Tilde{\Omega} : \sum_{i=0}^L \omega_{i}=1$).
The agent interacts with the environment using $\epsilon$-greedy policy and collect transition $(s,a,\mathbf{r}, s^\prime,\boldsymbol{\omega}, done)$.
This transition is then transferred to the main process.
After main process receives transitions from every child process, it stores transitions inside $\mathcal{D}$. For each transition, we also store $N_\omega$ transitions where a different preference ($\boldsymbol{\omega}^\prime\in\Omega : \sum_{i=0}^L \omega^\prime_{i}=1$) is sampled. 
Specifically, for each transition $(s,a,\mathbf{r}, s^\prime,\boldsymbol{\omega}, done)$, additional $N_\omega$ transitions $(s,a,\mathbf{r}, s^\prime,\boldsymbol{\omega^\prime}, done)$ are also stored in $\mathcal{D}$.
We then sample a minibatch of transitions from $\mathcal{D}$ to update the network.
To calculate the cosine similarity metric, we first project the preferences $\boldsymbol{\omega}$ to normalized solution space and obtain projected preferences $\boldsymbol{\omega}_p$. MO-DDQN-HER minimizes the following loss function at each step $k$:
\begin{equation} \label{eq:loss_function_app}
\begin{aligned}
L_{k}(\theta) = \mathbb{E}_{(s,a,\mathbf{r},s^{\prime}, \boldsymbol{\omega}) \sim \mathcal{D}}\Big[\Big(\mathbf{y} - \mathbf{Q}(s,a,\boldsymbol{\omega};\theta)\Big)^2\Big]
\end{aligned}
\end{equation}
where $\mathbf{y} = \mathbf{r} + \gamma \mathbf{Q}(s^\prime,\sup_{\substack{a^{\prime}\in A}}(S_c(\boldsymbol{\omega},\mathbf{Q}(s^\prime,a^{\prime},\boldsymbol{\omega}))\cdot(\boldsymbol{\omega}^T\mathbf{Q}(s^\prime,a^{\prime},\boldsymbol{\omega})));\theta^\prime)$ denotes the preference-driven target value which is obtained using target Q-network's parameters $\theta^\prime$.
Instead of MSE, Smooth L1 loss may also be used for this loss function.
We soft update the target network at every time step $k$.
Note that each child process runs for $N$ time steps. Hence, in total, we collect $N \times C_p$ transitions.
\vspace{-3mm}
\begin{algorithm}[h] 
\small
\caption{Preference Driven MO-DDQN-HER} \label{algo:moddqn_app}
\SetAlgoVlined
\SetNoFillComment
\textbf{Input:} Minibatch size $N_m$, Number of time steps $N$, \\
Discount factor $\gamma$, Target network update coefficient $\tau$,\\ Multi-dimensional interpolator $I(\boldsymbol{\omega})$.\\

\textbf{Initialize:} Replay buffer $\mathcal{D}$, Current $\mathbf{Q}_\theta$ and target network $\mathbf{Q}_{\theta^\prime} \leftarrow \mathbf{Q}_\theta$ with parameters $\theta$ and $\theta^\prime$.\\
\For{n = \normalfont{0: $N$}} 
{
\tcp{Child Process}
Initialize $t=0$ and $done=False$. \\
Reset the environment to randomly initialized state $s_0$. \\
Sample a preference vector $\boldsymbol{\omega}$ from the subspace $\Tilde{\Omega}$.\\
\While {$done=False$}{
Observe state $s_t$ and select an action $a_t$ $\epsilon$-greedily:\\
$a_t = 
\begin{cases}
    a\in A & \quad w.p. \quad \epsilon\\
    \max_{a\in \mathcal{A}}\boldsymbol{\omega}\mathbf{Q}(s_t,a,\boldsymbol{\omega};\theta),& \quad w.p. \quad 1-\epsilon
\end{cases}
$\\
Observe $\mathbf{r}$, $s^\prime$, and $done$.\\
Transfer ($s_t,a_t,\mathbf{r}_t,s^\prime,\boldsymbol{\omega},done$) to main process.
}
\tcp{Main Process}
Store the transition ($s_t,a_t,\mathbf{r}_t,s^\prime,\boldsymbol{\omega},done$) obtained from every child process in $\mathcal{D}$.\\
Sample $N_\omega$ preferences $\boldsymbol{\omega}^\prime$ \\
\For{ $j$ = \normalfont{1: $N_\omega$}} {
Store transition ($s_t,a_t,\mathbf{r}_t,s^\prime,\boldsymbol{\omega}^\prime_j,done$) in $\mathcal{D}$
}
Sample $N_m$ transitions from $\mathcal{D}$. \\
$\boldsymbol{\omega}_p \leftarrow I(\boldsymbol{\omega})$ \\
$\mathbf{y} \leftarrow \mathbf{r} + \gamma \mathbf{Q}(s^\prime,\sup_{\substack{a^{\prime}\in A}}(S_c(\boldsymbol{\omega}_p,\mathbf{Q}(s^\prime,a^{\prime},\boldsymbol{\omega}))\cdot(\boldsymbol{\omega}^T\mathbf{Q}(s^\prime,a^{\prime},\boldsymbol{\omega}));\theta^\prime)$ \\
$L_{k}(\theta) = \mathbb{E}_{(s,a,\mathbf{r},s^{\prime}, \boldsymbol{\omega}) \sim \mathcal{D}}\Big[\Big(\mathbf{y} - \mathbf{Q}(s,a,\boldsymbol{\omega};\theta)\Big)^2\Big]$ \\
Update $\theta_k$ by applying SGD to $L_{k}(\theta)$. \\
Update target network parameters $\theta^\prime_k \leftarrow \tau \theta_k + (1-\tau)\theta^\prime_k$\\
}
\end{algorithm}
\vspace{-5mm}
\subsubsection{Details on MO-TD3-HER}
\label{sec:details_td3_app}
We extend the TD3 algorithm to a multi-objective version using PD-MORL for problems with continuous action space. 
To incorporate the relation between preference vectors and Q-values for the multi-objective version of TD3, we include a directional angle term $g(\boldsymbol{\omega},\mathbf{Q}(s,a,\boldsymbol{\omega};\theta))$ to both actor's and critic's loss function. This directional angle term $g(\boldsymbol{\omega},\mathbf{Q}(s,a,\boldsymbol{\omega};\theta)) = cos^{-1}(\frac{\boldsymbol{\omega}^T \mathbf{Q}(s,a,\boldsymbol{\omega};\theta)}{||\boldsymbol{\omega}_p|| ~||\mathbf{Q}(s,a,\boldsymbol{\omega};\theta)||})$ denotes the angle between the preference vector ($\boldsymbol{\omega}$) and multi-objective Q-value $\mathbf{Q}(s,a,\boldsymbol{\omega};\theta)$ and provides an alignment between preferences and the Q-values.

Similar to MO-DDQN-HER, we first initialize an empty buffer $\mathcal{D}$, critic networks $\mathbf{Q}_{\theta_1}$,$\mathbf{Q}_{\theta_2}$ and actor network $\pi_{\phi}$ with parameters $\theta_1$, $\theta_2$, and $\phi$. We also initialize critic and actor target networks.
The process of initialization of child processes is the same with MO-DDQN-HER.
The agent interacts with the environment to collect transition $(s,a,\mathbf{r}, s^\prime,\boldsymbol{\omega}, done)$ by selecting actions according to a policy with an exploration noise term $\epsilon$. 
The utilization of HER for collected transitions is also the same as the MO-DDQN-HER.
We then sample a minibatch of transitions from $\mathcal{D}$.
The algorithm computes actions for the next state using the target actor network plus a target smoothing noise.
In the conventional TD3 algorithm, both critics use a single target value to update their parameters calculated using whichever of the two critics gives a smaller $Q$-value.
In multi-objective version of TD3, we modified this clipped double-Q-learning approach such that target $\mathbf{Q}$-value is calculated using whichever of the two critics gives a smaller $\boldsymbol{\omega}^T\mathbf{Q}$.
To calculate the directional angle term, similar to MO-DDQN-HER, we project the preferences $\boldsymbol{\omega}$ to normalized solution space and obtain projected preferences $\boldsymbol{\omega}_p$. 
MO-TD3-HER minimizes the following loss function to update both critic networks at each step $k$:
\begin{equation} \label{eq:loss_function_critic_td3_app}
\begin{aligned}
L_{critic_{k}}(\theta_i) = \mathbb{E}_{(s,a,\mathbf{r},s^{\prime}, \boldsymbol{\omega}) \sim \mathcal{D}}\Big[\Big(\mathbf{y} - \mathbf{Q}(s,a,\boldsymbol{\omega};\theta_i)\Big)^2\Big] + \mathbb{E}_{(s,a,\boldsymbol{\omega}) \sim \mathcal{D}}\Big[g(\boldsymbol{\omega}_p,\mathbf{Q}(s,a,\boldsymbol{\omega};\theta_i))\Big]
\end{aligned}
\end{equation}
where $\mathbf{y} = \mathbf{r} + \gamma \arg_{\mathbf{Q}}\min_{i=1,2}\boldsymbol{\omega}^T\mathbf{Q}(s^\prime,\Tilde{a},\boldsymbol{\omega};\theta_i^\prime)$ denotes the target value which is obtained using target critic network's parameters.
Instead of MSE, Smooth L1 loss may also be used for this loss function.
One of the essential tricks that TD3 has is that it updates actor and target networks less frequently than the critics.
The actor network is updated every $p_{delay}$ step by maximizing the $\boldsymbol{\omega}^T\mathbf{Q}$
while minimizing the directional angle term using the following loss:
\begin{equation} \label{eq:loss_function_actor_td3_app}
\begin{aligned}
\nabla_\phi L_{actor_{k}}(\phi) = \mathbb{E}_{(s,a,\mathbf{r},s^{\prime}, \boldsymbol{\omega}) \sim \mathcal{D}}\Big[\nabla_a~\boldsymbol{\omega}^T\mathbf{Q}(s,a,\boldsymbol{\omega};\theta_1)|_{a=\pi(s,\boldsymbol{\omega};\phi)}\nabla_\phi \pi(s,\boldsymbol{\omega};\phi)\Big] + \\ \alpha \cdot \mathbb{E}_{(s,a,\boldsymbol{\omega}) \sim \mathcal{D}}\Big[\nabla_a~g(\boldsymbol{\omega}_p,\mathbf{Q}(s,a,\boldsymbol{\omega};\theta_1)|_{a=\pi(s,\boldsymbol{\omega};\phi)}\nabla_\phi \pi(s,\boldsymbol{\omega};\phi)\Big]
\end{aligned}
\end{equation}
where $\alpha$ denotes the loss coefficient to scale up the directional angle term to match the $\boldsymbol{\omega}^T\mathbf{Q}$ term.
Here, we also soft update the target networks at every time step $k$.
Similar to MO-DDQN-HER, each child process runs for $N$ time steps. Hence, in total, we collect $N \times C_p$ transitions during training.
\vspace{-2mm}
\begin{algorithm}[h] 
\small
\caption{Preference-Driven MO-TD3-HER} \label{algo:motd3_app}
\SetAlgoVlined
\SetNoFillComment
\textbf{Input:} Minibatch size $N_m$,  Number of time steps $N$\\
Discount factor $\gamma$, Target network update coefficient $\tau$,\\ 
Multi-dimensional interpolator $I(\boldsymbol{\omega})$,\\
Policy update delay $p_{delay}$, Standard deviation for Gaussian exploration noise added to the policy $\sigma$, \\
Standard deviation for smoothing noise added to target policy $\sigma^\prime$,\\ 
Limit for absolute value of target policy smoothing noise $c$.\\
\textbf{Initialize:} Replay buffer $\mathcal{D}$, Critic networks $\mathbf{Q}_{\theta_1}$,$\mathbf{Q}_{\theta_2}$ and actor network $\pi_{\phi}$ with parameters $\theta_1$, $\theta_2$, and $\phi$,\\
Target networks $\mathbf{Q}_{\theta_1^\prime} \leftarrow \mathbf{Q}_{\theta_1}$, $\mathbf{Q}_{\theta_2^\prime} \leftarrow \mathbf{Q}_{\theta_2}$, $\pi_{\phi^\prime} \leftarrow \pi_{\phi}$,\\
\For{n = \normalfont{0: $N$}} 
{
\tcp{Child Process}
Initialize $t=0$ and $done=False$. \\
Reset the environment to randomly initialized state $s_0$. \\
Sample a preference vector $\boldsymbol{\omega}$ from the subspace $\Tilde{\Omega}$\\
\While {$done=False$}{
Observe state $s_t$ and select an action with exploration noise $a_t \sim  \pi(s_t,\boldsymbol{\omega};\phi) + \epsilon : \epsilon \sim \mathcal{N}(0,\sigma)$ \\
Observe reward $\mathbf{r}$, $s^\prime$, and $done$.\\
Transfer ($s_t,a_t,\mathbf{r}_t,s^\prime,\boldsymbol{\omega},done$) to main process.
}
\tcp{Main Process}
Store the transition ($s_t,a_t,\mathbf{r}_t,s^\prime,\boldsymbol{\omega},done$) obtained from every child process in $\mathcal{D}$\\
Sample $N_\omega$ preferences $\boldsymbol{\omega}^\prime$ \\
\For{ $j$ = \normalfont{1: $N_\omega$}} {
Store transition ($s_t,a_t,\mathbf{r}_t,s^\prime,\boldsymbol{\omega}^\prime_j,done$) in $\mathcal{D}$
}

Sample $N_m$ transitions from $\mathcal{D}$. \\
$\Tilde{a} \leftarrow \pi(s^\prime,\boldsymbol{\omega};\phi^\prime) + \epsilon : \epsilon \sim clip(\mathcal{N}(0,\sigma^\prime),-c,c)$\\

$\mathbf{y} \leftarrow \mathbf{r} + \gamma \arg_{\mathbf{Q}}\min_{i=1,2}\boldsymbol{\omega}^T\mathbf{Q}(s^\prime,\Tilde{a},\boldsymbol{\omega};\theta_i^\prime)$\\

$\boldsymbol{\omega}_p \leftarrow I(\boldsymbol{\omega})$ \\

$g(\boldsymbol{\omega}_p,\mathbf{Q}(s,a,\boldsymbol{\omega};\theta_i)) \leftarrow cos^{-1}(\frac{\boldsymbol{\omega}_p^T \mathbf{Q}(s,a,\boldsymbol{\omega};\theta_i)}{||\boldsymbol{\omega}_p|| ~||\mathbf{Q}(s,a,\boldsymbol{\omega};\theta_i)||})$

$L_{critic_{k}}(\theta_i) = \mathbb{E}_{(s,a,\mathbf{r},s^{\prime}, \boldsymbol{\omega}) \sim \mathcal{D}}\Big[\Big(\mathbf{y} - \mathbf{Q}(s,a,\boldsymbol{\omega};\theta_i)\Big)^2\Big] + \mathbb{E}_{(s,a,\boldsymbol{\omega}) \sim \mathcal{D}}\Big[g(\boldsymbol{\omega}_p,\mathbf{Q}(s,a,\boldsymbol{\omega};\theta_i))\Big]$ \\

Update $\theta_{i_k}$ by applying SGD to $L_{critic_{k}}(\theta_i)$. \\
\uIf{$n~mod~p_{delay}$}{ 
$\nabla_\phi L_{actor_{k}}(\phi) = \mathbb{E}_{(s,a,\mathbf{r},s^{\prime}, \boldsymbol{\omega}) \sim \mathcal{D}}\Big[\nabla_a~\boldsymbol{\omega}^T\mathbf{Q}(s,a,\boldsymbol{\omega};\theta_1)|_{a=\pi(s,\boldsymbol{\omega};\phi)}\nabla_\phi \pi(s,\boldsymbol{\omega};\phi)\Big]$ + $\alpha \cdot \mathbb{E}_{(s,a,\boldsymbol{\omega}) \sim \mathcal{D}}\Big[\nabla_a~g(\boldsymbol{\omega}_p,\mathbf{Q}(s,a,\boldsymbol{\omega};\theta_1)|_{a=\pi(s,\boldsymbol{\omega};\phi)}\nabla_\phi \pi(s,\boldsymbol{\omega};\phi)\Big]$\\

Update target critics parameters $\theta^\prime_{i_k} \leftarrow \tau \theta_{i_k} + (1-\tau)\theta^\prime_{i_k}$\\
Update target actor parameters $\phi^\prime_{k} \leftarrow \tau \phi_{k} + (1-\tau)\phi^\prime_{k}$\\
}
}
\end{algorithm}

\subsection{Benchmarks}
\label{sec:benchmarks_app}
We first evaluate PD-MORL's performance on two commonly used discrete MORL benchmarks: Deep Sea Treasure and Fruit Tree Navigation. For these benchmarks, we make use of the codebase provided by~\citet{yang2019morl}. 
We further evaluate PD-MORL on multi-objective continuous control tasks such as MO-Walker2d-v2, MO-HalfCheetah-v2, MO-Ant-v2, MO-Swimmer-v2, and MO-Hopper-v2. 
These benchmarks are presented by~\citet{xu2020prediction} and are licensed under the terms of the MIT license. 
The details for all benchmarks are provided below:

\textbf{Deep Sea Treasure (DST):} A well-studied MORL benchmark\citep{hayes2022review, liu2014multiobjective} where an agent is a submarine trying to collect treasures in a $10 \times 11$ grid-world. The treasure values increase as their distance from the starting point $s_0 = (0,0)$ increase. The submarine has two objectives: the time penalty and the treasure value.
The actions are navigation in four directions and are discrete. The reward is a two-element vector. The first element shows the treasure value, and the second element shows the time penalty.  

\textbf{Fruit Tree Navigation (FTN):} A recent MORL benchmark presented by~\citet{yang2019morl}. It is a binary tree of depth $d$ with randomly assigned reward $\mathbf{r}\in \mathbb{R}^6$ on the leaf nodes. These rewards show the amounts of six different nutrition facts of the fruits on the tree: $\{Protein, Carbs, Fats, Vitamins, Minerals, Water\}$. The goal of the agent is to find a path on the tree to collect the fruit that maximizes the nutrition facts for a given preference.

\textbf{MO-Walker2d-v2:} The state space and action space is defined as $\mathcal{S}\subseteq \mathbb{R}^{17}, \mathcal{A}\subseteq \mathbb{R}^{6}$. 
The agent is a two-dimensional two-legged figure. It has two objectives to consider: forward speed and energy efficiency. The goal is to tune the amount of torque applied on the hinges for a given preference $\boldsymbol{\omega}$ while moving in the forward direction.  

\textbf{MO-HalfCheetah-v2:} The state space and action space is defined as $\mathcal{S}\subseteq \mathbb{R}^{17}, \mathcal{A}\subseteq \mathbb{R}^{6}$.
The agent is a two-dimensional robot that resembles a cheetah. It has two objectives to consider: forward speed and energy efficiency. The goal is to tune the amount of torque applied on the joints for a given preference $\boldsymbol{\omega}$ while running in the forward direction.

\textbf{MO-Ant-v2:} The state space and action space is defined as $\mathcal{S}\subseteq \mathbb{R}^{27}, \mathcal{A}\subseteq \mathbb{R}^{8}$.
The agent is a 3D robot that resembles an ant. It has two objectives to consider: x-axis speed and y-axis speed. The goal is to tune the amount of torque applied on the hinges connecting the legs and the torso for a given preference $\boldsymbol{\omega}$. 

\textbf{MO-Swimmer-v2:} The state space and action space is defined as $\mathcal{S}\subseteq \mathbb{R}^{8}, \mathcal{A}\subseteq \mathbb{R}^{2}$.
The agent is a two-dimensional robot. It has two objectives to consider: forward speed and energy efficiency. The goal is to tune the amount of torque applied on the rotors for a given preference $\boldsymbol{\omega}$ while swimming in the forward direction inside a two-dimensional pool.

\textbf{MO-Hopper-v2:} The state space and action space is defined as $\mathcal{S}\subseteq \mathbb{R}^{11}, \mathcal{A}\subseteq \mathbb{R}^{3}$.
The agent is a two-dimensional one-legged figure. It has two objectives to consider: forward speed and jumping height. The goal is to tune the amount of torque applied on the hinges for a given preference $\boldsymbol{\omega}$ while moving in the forward direction by making hops. 
\subsection{Training Details}
\label{sec:training_details_app}
We run all our experiments on a local server including Intel Xeon Gold 6242R. We do not use any GPU in our implementation.
For Deep Sea Treasure and Fruit Tree Navigation benchmarks, we employ the proposed MO-DDQN-HER algorithm. The network here takes state $s$ and preference $\boldsymbol{\omega}$ as inputs and outputs $|\mathcal{A}| \times L~$ Q-values. The number of hidden layers and hidden neurons among other hyperparameters for MO-DDQN-HER are given in Table~\ref{tab:mo_ddqn_her_hyperparameters}.

For continuous control benchmarks, we employ the proposed MO-TD3-HER algorithm. The critic network takes state $s$, preference $\boldsymbol{\omega}$, and action $a$ as input and outputs $L~$ Q-values for each objective. The actor network takes state $s$ and preference $\boldsymbol{\omega}$ as inputs and outputs $|\mathcal{A}|$ actions.
The number of hidden layers and hidden neurons among other hyperparameters for MO-TD3-HER  for each continuous benchmark are given in Table~\ref{tab:mo_td3_her_hyperparameters}.
The hyperparameters are the same for all benchmarks except the policy update delay for MO-Hopper-v2, which is set to 20.

As it is elaborated in Section~\ref{sec:preference_alignment_app}, we first obtain key solutions for each of the preferences in the key preference set to fit a multi-dimensional interpolator. Since we already know the Pareto front solutions for discrete benchmarks, we directly use key solutions in the Pareto front to fit this interpolator. For continuous control benchmarks, we train an agent utilizing the multi-objective version of TD3 with a fixed key preference vector and without using HER. The number of hidden layers and hidden neurons among other hyperparameters for this approach on each continuous benchmark are given in Table~\ref{tab:mo_td3_hyperparameters}.

\begin{table}[h]
\vspace{-3mm}
\caption{Hyperparameters for MO-DDQN-HER}
\label{tab:mo_ddqn_her_hyperparameters}
\scriptsize
\centering
\begin{tabular}{@{}lcc@{}}
\toprule
\textbf{} & \multicolumn{1}{l}{\textbf{Deep Sea Treasure}} & \multicolumn{1}{l}{\textbf{Fruit Tree Navigation}} \\ \midrule
\textbf{Total number of steps ($N$)} & $1\times10^5$ & $1\times10^5$ \\
\textbf{Minibatch size ($N_m$)} & 32 & 32 \\
\textbf{Discount factor ($\gamma$)} & 0.99 & 0.99 \\
\textbf{Soft update coefficient ($\tau$)} & 0.005 & 0.005 \\
\textbf{Buffer size} & $1\times10^4$ & $1\times10^4$ \\
\textbf{Number of child processes ($C_p$)} & 10 & 10 \\
\textbf{Number of preferences sampled for HER ($N_\omega$)} & 3 & 3 \\
\textbf{Learning rate} & $3\times10^{-4}$ & $3\times10^{-4}$ \\
\textbf{Number of hidden layers} & 3 & 3 \\
\textbf{Number of hidden neurons} & 256 & 512 \\ \bottomrule
\end{tabular}%
\vspace{-3mm}
\end{table}
\vspace{-3mm}
\begin{table}[h]
\caption{Hyperparameters for MO-TD3-HER}
\label{tab:mo_td3_her_hyperparameters}
\resizebox{\textwidth}{!}{%
\begin{tabular}{@{}lccccc@{}}
\toprule
\textbf{} & \multicolumn{1}{l}{\textbf{MO-Walker2d-v2}} & \multicolumn{1}{l}{\textbf{MO-HalfCheetah-v2}} & \multicolumn{1}{l}{\textbf{MO-Ant-v2}} & \multicolumn{1}{l}{\textbf{MO-Swimmer-v2}} & \multicolumn{1}{l}{\textbf{MO-Hopper-v2}} \\ \midrule
\textbf{Total   number of steps} & $1\times10^6$ & $1\times10^6$ & $1\times10^6$ & $1\times10^6$ & $1\times10^6$ \\
\textbf{Minibatch size} & 256 & 256 & 256 & 256 & 256 \\
\textbf{Discount factor} & 0.995 & 0.995 & 0.995 & 0.995 & 0.995 \\
\textbf{Soft update coefficient} & 0.005 & 0.005 & 0.005 & 0.005 & 0.005 \\
\textbf{Buffer size} & $2\times10^6$ & $2\times10^6$ & $2\times10^6$ & $2\times10^6$ & $2\times10^6$ \\
\textbf{Number of child processes} & 10 & 10 & 10 & 10 & 10 \\
\textbf{\begin{tabular}[c]{@{}l@{}}Number of preferences \\ sampled for HER\end{tabular}} & 3 & 3 & 3 & 3 & 3 \\
\textbf{Learning rate-Critic} & $3\times10^{-4}$ & $3\times10^{-4}$ & $3\times10^{-4}$ & $3\times10^{-4}$ & $3\times10^{-4}$ \\
\textbf{Number of hidden layers -   Critic} & 1 & 1 & 1 & 1 & 1 \\
\textbf{Number of hidden neurons -   Critic} & 400 & 400 & 400 & 400 & 400 \\ 
\textbf{Learning rate-Actor} & $3\times10^{-4}$ & $3\times10^{-4}$ & $3\times10^{-4}$ & $3\times10^{-4}$ & $3\times10^{-4}$ \\
\textbf{Number of hidden layers -   Actor} & 1 & 1 & 1 & 1 & 1 \\
\textbf{Number of hidden neurons -   Actor} & 400 & 400 & 400 & 400 & 400 \\
\textbf{Policy update delay} & 10 & 10 & 10 & 10 & 20 \\
\textbf{Exploration noise std.} & 0.1 & 0.1 & 0.1 & 0.1 & 0.1 \\
\textbf{Target policy smoothing noise std.} & 0.2 & 0.2 & 0.2 & 0.2 & 0.2 \\
\textbf{Noise clipping limit} & 0.5 & 0.5 & 0.5 & 0.5 & 0.5 \\
\textbf{Loss coefficient} & 10 & 10 & 10 & 10 & 10 \\ \bottomrule
\end{tabular}%
}
\vspace{-3mm}
\end{table}

\begin{table}[h]
\vspace{-3mm}
\caption{Hyperparameters for MO-TD3 algorithm to obtain key solutions}
\label{tab:mo_td3_hyperparameters}
\resizebox{\textwidth}{!}{%
\begin{tabular}{@{}lccccc@{}}
\toprule
\textbf{} & \multicolumn{1}{l}{\textbf{MO-Walker2d-v2}} & \multicolumn{1}{l}{\textbf{MO-HalfCheetah-v2}} & \multicolumn{1}{l}{\textbf{MO-Ant-v2}} & \multicolumn{1}{l}{\textbf{MO-Swimmer-v2}} & \multicolumn{1}{l}{\textbf{MO-Hopper-v2}} \\ \midrule
\textbf{Total   number of steps} & $2\times10^6$ & $2\times10^6$ & $1\times10^6$ & $1\times10^6$ & $1\times10^6$ \\
\textbf{Minibatch size} & 100 & 100 & 100 & 100 & 100 \\
\textbf{Discount factor} & 0.99 & 0.99 & 0.99 & 0.99 & 0.99 \\
\textbf{Soft update coefficient} & 0.005 & 0.005 & 0.005 & 0.005 & 0.005 \\
\textbf{Buffer size} & $5\times10^5$ & $5\times10^5$ & $5\times10^5$ & $1\times10^6$ & $1\times10^6$ \\
\textbf{Learning rate-Critic} & $3\times10^{-4}$ & $3\times10^{-4}$ & $3\times10^{-4}$ & $3\times10^{-4}$ & $3\times10^{-4}$ \\
\textbf{Number of hidden layers -   Critic} & 1 & 1 & 1 & 1 & 1 \\
\textbf{Number of hidden neurons -   Critic} & 400 & 400 & 400 & 400 & 400 \\ 
\textbf{Learning rate-Actor} & $3\times10^{-4}$ & $3\times10^{-4}$ & $3\times10^{-4}$ & $3\times10^{-4}$ & $3\times10^{-4}$ \\
\textbf{Number of hidden layers -   Actor} & 1 & 1 & 1 & 1 & 1 \\
\textbf{Number of hidden neurons -   Actor} & 400 & 400 & 400 & 400 & 400 \\
\textbf{Policy update delay} & 2 & 2 & 2 & 5 & 10 \\
\textbf{Exploration noise std.} & 0.1 & 0.1 & 0.1 & 0.1 & 0.1 \\
\textbf{Target policy smoothing noise std.} & 0.2 & 0.2 & 0.2 & 0.2 & 0.2 \\
\textbf{Noise clipping limit} & 0.5 & 0.5 & 0.5 & 0.5 & 0.5 \\ \bottomrule
\end{tabular}%
}
\end{table}
\vspace{-3mm}

\subsection{Additional Experimental Results}
\label{sec:exp_results_app}
Since the main objective of this work is to obtain a single universal network that covers the entire preference space, we first obtain a representative set of preference vectors.
For Deep Sea Treasure benchmark, we obtain a preference vector set with a step size of 0.01 ($\{0, 1\}, \{0.01, 0.99\},\hdots, \{0.99, 0.01\}, \{1, 0\}$) 
Similarly, we obtain preference vector sets with a step size of 0.1 and 0.001 for Fruit Tree Navigation and continuous control benchmarks, respectively.

Since the Envelope algorithm~\citep{yang2019morl} is the superior and more recent approach that learns a unified policy, we first compare PD-MORL with this algorithm using discrete MORL benchmarks.
For the Envelope algorithm, we use the authors' codebase with their default hyperparameters mentioned in the paper since we believe that they are already optimized for these two benchmarks.
In addition to hypervolume and sparsity metrics that we provide in Section~\ref{sec:discrete-benchmarks}, we also compare PD-MORL with the Envelope algorithm using the coverage ration F1 (CRF1) metric.
This metric, which is coined by~\citet{yang2019morl}, evaluates the agent's ability to recover optimal solutions in the Pareto front. It is assumed that we have prior knowledge of the optimal solutions for these benchmarks. Let $B$ be the set of solutions obtained by the agent for various preferences, and let $P$ be the Pareto front set of solutions. The intersection between $B$ and $P$ with a tolerance of $\epsilon$ is defined as $B\cap P := {b \in B | \exists~ p\in P : \lVert b-p\rVert _1/\lVert p \rVert_1 \leq \epsilon}$.
The precision in this context is defined as $Precision = \frac{|B\cap P|}{|B|}$ and recall is defined as $Recall = \frac{|B\cap P|}{|P|}$.
Then CRF1 is computed as $CRF1 = 2 \frac{Precision]\times Recall}{Precision + Recall}$.
Table~\ref{tab:toytable_app} summarizes the comparison of PD-MORL with the Envelope algorithm in terms of CRF1, hypervolume, and sparsity metrics on discrete MORL benchmarks.
We emphasize that our evaluation process is more extensive than the work proposed by~\citet{yang2019morl}. In their implementation, they report an average of 2000 evaluation episodes where a random preference is sampled uniformly in each episode.
In contrast, evaluation of PD-MORL sweeps the entire preference space.
The hypervolume and sparsity values for both approaches are obtained through PD-MORL's evaluation process and are discussed in Section~\ref{sec:discrete-benchmarks}. 
In this table, we provide the CRF1 values reported in~\citep{yang2019morl} for the Envelope algorithm.
PD-MORL achieves larger or the same CRF1 values compared to the Envelope algorithm for all benchmarks. Specifically, it achieves up to 12\% higher CRF1 for the Fruit Tree Navigation task with a depth of $d=7$.
Additionally, a comparison with~\citep{abels2019dynamic} is given in Table~\ref{tab:toytable_app} which shows the superiority of both the Envelope algorithm and PD-MORL over the proposed algorithm by~\citet{abels2019dynamic} which also learns a unified policy. 

\vspace{-5mm}
\begin{table}[h]
\caption{Comparison of our approach with prior works~\citep{yang2019morl,abels2019dynamic} using discrete MORL benchmarks in terms CRF1, hypervolume, and sparsity metrics.$^{\ast}$ are the reported values in~\citep{yang2019morl}. }
\label{tab:toytable_app}
\resizebox{\textwidth}{!}{%
\begin{tabular}{@{}lcccccccccccc@{}}
\toprule
 & \multicolumn{3}{c}{\textbf{Deep Sea Treasure}} & \multicolumn{3}{c}{\textbf{Fruit Tree Navigation   (d=5)}} & \multicolumn{3}{c}{\textbf{Fruit Tree Navigation   (d=6)}} & \multicolumn{3}{c}{\textbf{Fruit Tree Navigation   (d=7)}} \\ \midrule
 & \textbf{CRF1} & \textbf{Hypervolume} & \textbf{Sparsity} & \textbf{CRF1} & \textbf{Hypervolume} & \textbf{Sparsity} & \textbf{CRF1} & \textbf{Hypervolume} & \textbf{Sparsity} & \textbf{CRF1} & \textbf{Hypervolume} & \textbf{Sparsity}\\\midrule
\textbf{Envelope~\citep{yang2019morl}} & 0.994$^{\ast}$ & 227.89 & 2.62 & 1 & 6920.58 & N/A & 0.995$^{\ast}$ & 8427.51 & N/A & 0.819$^{\ast}$ & 6395.27 & N/A\\
\textbf{CN+DER~\citep{abels2019dynamic}} & 0.989$^{\ast}$ & -- & -- & 1 & -- & N/A & 0.9258$^{\ast}$ & -- & N/A & 0.6719$^{\ast}$ & -- & N/A\\
\textbf{Ours} & 1 & 241.73 & 1.14 & 1 & 6920.58 & N/A & 1 & 9299.15 & N/A & 0.92 & 11419.58 & N/A\\ \bottomrule
\end{tabular}%
}
\end{table}
\vspace{-1mm}

For the continuous control benchmarks, we provide additional plots on the Pareto front and hypervolume progression.  
Figure~\ref{fig:pareto_progression}(a)-(e) shows the progression of the Pareto front solutions during training. We plot the Pareto front solutions at early stages, mid stages, and late stages of the training process. As the training progresses, the Pareto front expands to a broader and denser curve. 
This behavior is supported by Figure~\ref{fig:hv_progression}(a)-(e), which shows the progression of the hypervolume for all benchmarks. This figure shows that PD-MORL effectively pushes the hypervolume by discovering new Pareto solutions with the help of efficient and robust exploration. 
For the MO-HalfCheetah-v2 problem, a broad and dense Pareto front is already achieved at the early stages of the training. Therefore, we do not observe the progression of the Pareto front and hypervolume for this specific benchmark. 
%
\begin{figure*}[h]
\vspace*{-3mm}
\centering
\includegraphics[width=0.9\textwidth]{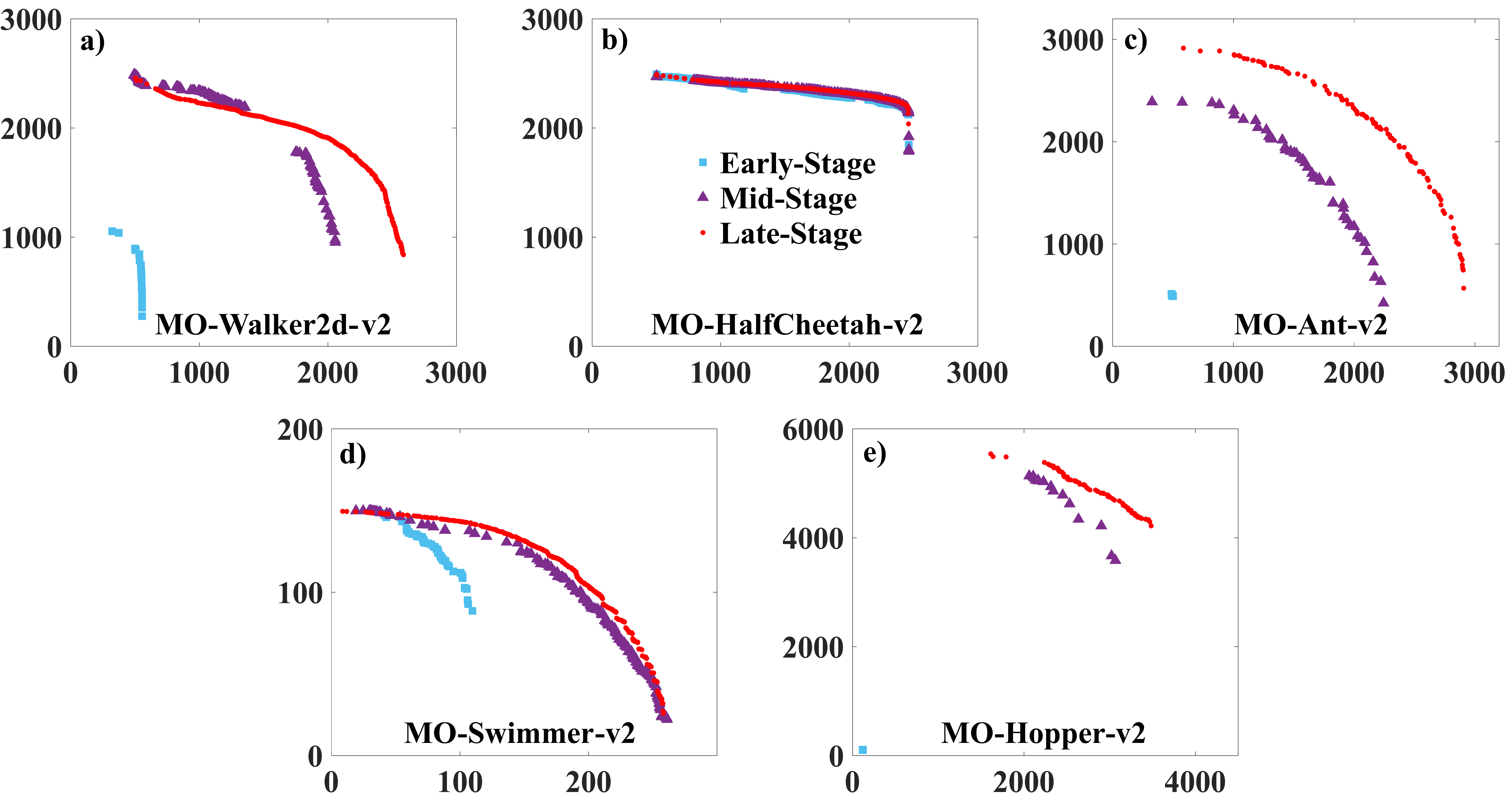} 
\caption{Pareto front progression for (a) MO-Walker-v2, (b) MO-HalfCheetah-v2, (c) MO-Ant-v2, (d) MO-Swimmer-v2, (e) MO-Hopper-v2.}
\vspace*{-3mm}
\label{fig:pareto_progression}
\normalsize
\end{figure*}

We also report standard deviations for all benchmarks.
As explained in the main manuscript, since META~\citep{chen2019meta} and PG-MORL~\citep{xu2020prediction} report the average of six runs, we also ran each benchmark six times with PD-MORL.
Table~\ref{tab:cont_benchmark_app} reports average metrics as well as standard deviations obtained from these six runs. 
The standard deviations obtained from six runs are not reported in the prior work. 
This table shows that the individual differences of different runs are mostly three orders of magnitudes less than the average of these runs. This also suggests that PD-MORL achieves a generalizable network independent of starting state of the agent. 
We also provide more visual results for preferences that captures corner cases for these benchmarks in the supplementary video.

\begin{figure}[t]
\centering
\includegraphics[width=1\textwidth]{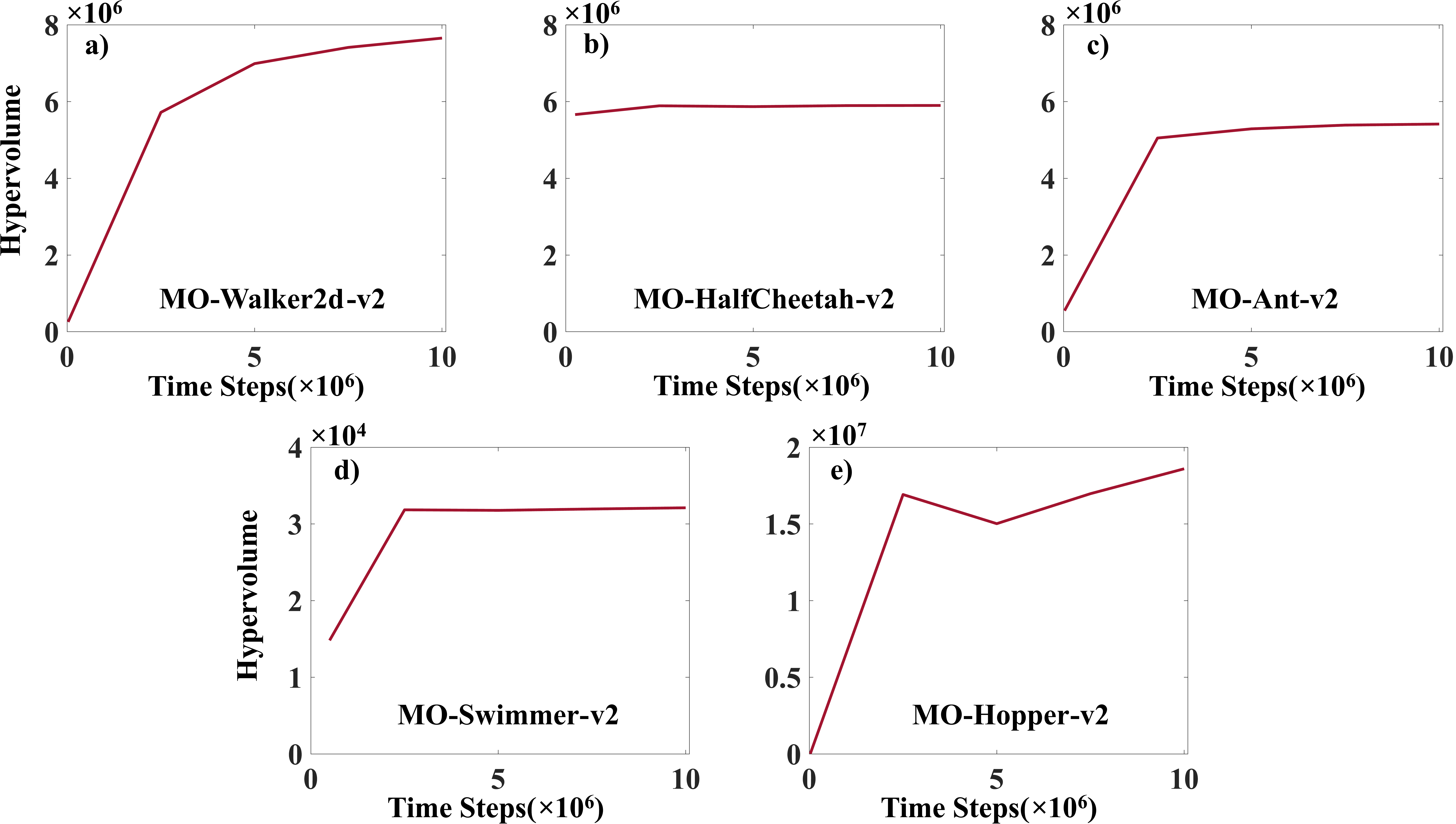} 
\caption{Hypervolume progression for (a) MO-Walker-v2, (b) MO-HalfCheetah-v2, (c) MO-Ant-v2, (d) MO-Swimmer-v2, (e) MO-Hopper-v2.}
\label{fig:hv_progression}
\normalsize
\end{figure}

Additionally, we conduct an ablation study to show the effects of the proposed angle term in the loss function and the proposed parallel exploration in our algorithm on continuous control benchmarks. Table~\ref{tab:ablation} reports average metrics and standard deviations obtained from six runs.
Since we fit a multi-dimensional interpolator using solutions with key preferences, the interpolator may introduce a bias at the beginning of the training depending on how representative these key solutions are. Hence, this bias may have a negative effect on the training. For tasks with a broad and more convex Pareto front, the probability of this bias increases as we choose key preferences from corner cases.

\begin{table}[h]
\caption{Performance comparison of the proposed approach and state-of-the-art algorithms on the continuous control benchmarks in terms of hypervolume and sparsity metrics. Reference point for hypervolume calculation is set to (0,0) point. HV$^*$: Hypervolume}
\label{tab:cont_benchmark_app}
\resizebox{\textwidth}{!}{%
\begin{tabular}{@{}lcccc@{}}
\toprule
\multicolumn{1}{l}{} & \multicolumn{1}{l}{} & \multicolumn{1}{l}{PG-MORL~\citep{xu2020prediction}} & \multicolumn{1}{l}{META~\citep{chen2019meta}} & \multicolumn{1}{l}{\textbf{PD-MORL (Ours)}} \\ \midrule
\multirow{2}{*}{\textbf{MO-Walker2d-v2}} & HV$^*$ & \multicolumn{1}{c}{$4.82\times10^6$} & \multicolumn{1}{c}{$2.10\times10^6$} & \multicolumn{1}{c}{$\mathbf{5.41\pm0.004\times10^6}$} \\
 & Sparsity & $0.04\times10^4$ & $2.10\times10^4$ & $\mathbf{0.03\pm0.005\times 10^4}$ \\\midrule
\multirow{2}{*}{\textbf{MO-HalfCheetah-v2}} & HV$^*$ & $5.77\times10^6$ & $5.18\times10^6$ & $\mathbf{5.89\pm0.002\times10^6}$ \\
 & Sparsity & $\mathbf{0.44\times10^3}$ & $2.13\times10^3$ & $0.49\pm0.041\times 10^3$ \\\midrule
\multirow{2}{*}{\textbf{MO-Ant-v2}} & HV$^*$ & $6.35\times10^6$ & $2.40\times10^4$ & $\mathbf{7.48\pm0.019\times10^6}$ \\
 & Sparsity & $\mathbf{0.37\times10^4}$ & $1.56\times10^4$ & $0.78\pm0.2\times10^4$ \\\midrule
\multirow{2}{*}{\textbf{MO-Swimmer-v2}} & HV$^*$ & $2.57\times10^4$ & $1.23\times10^4$ & $\mathbf{3.21\pm0.001\times 10^4}$ \\
 & Sparsity & $9.9$ & $24.4$ & $\mathbf{5.7\pm0.7}$ \\\midrule
\multirow{2}{*}{\textbf{MO-Hopper-v2}} & HV$^*$ & $\mathbf{2.02\times10^7}$ & $1.25\times10^7$ & $1.88\pm0.005\times10^7$ \\
 & Sparsity & $0.5\times10^4$ & $4.84\times10^4$ & $\mathbf{0.3\pm0.09\times10^4}$ \\ \bottomrule 
\end{tabular}%
}
\end{table}
Therefore, by removing the angle term in the loss function and thus, removing the bias, the performance of our algorithm slightly degrades due to the convex and broad nature of the Pareto front in MO-Walker2d-v2 and MO-Ant-v2 problems. However, removing the angle term has a negative effect for MO-HalfCheetah-v2 and MO-Hopper-v2, where the Pareto front is dense. Figure~\ref{fig:pareto_angle} shows the obtained Pareto front with and without the directional angle term in the loss function. The solutions obtained without the directional angle term are sparse. Specifically, for MO-HalfCheetah-v2, Table~\ref{tab:ablation} shows that both the hypervolume and the sparsity metrics are negatively affected. For MO-Hopper-v2, although the hypervolume metric seems to be increased, the change in the sparsity metric suggests that the algorithm cannot find a dense Pareto front. This, in fact, supports that the hypervolume alone is insufficient to assess the quality of the Pareto front. We further investigate the effects of using parallel exploration. To this end, we collect transitions using a single preference space instead of dividing them into sub-spaces. Parallel exploration guarantees that obtained transitions are not biased towards a preference sub-space. Hence, it increases the sample efficiency of the algorithm. Table~\ref{tab:ablation} shows that removing the parallel exploration significantly reduces the performance of the algorithm. For all tasks, the hypervolume metric decreases, and the sparsity metric increases. We also observe that both PG-MORL and PD-MORL with proposed novel aspects are superior to PD-MORL without the novel aspects. This observation is crucial since the prior work uses PPO, whereas PD-MORL uses TD3 in their algorithm pipeline. Using this observation, we conclude that the superiority of PD-MORL is due to the proposed novel aspects.

\begin{figure}[h]
\centering
\includegraphics[width=0.8\textwidth]{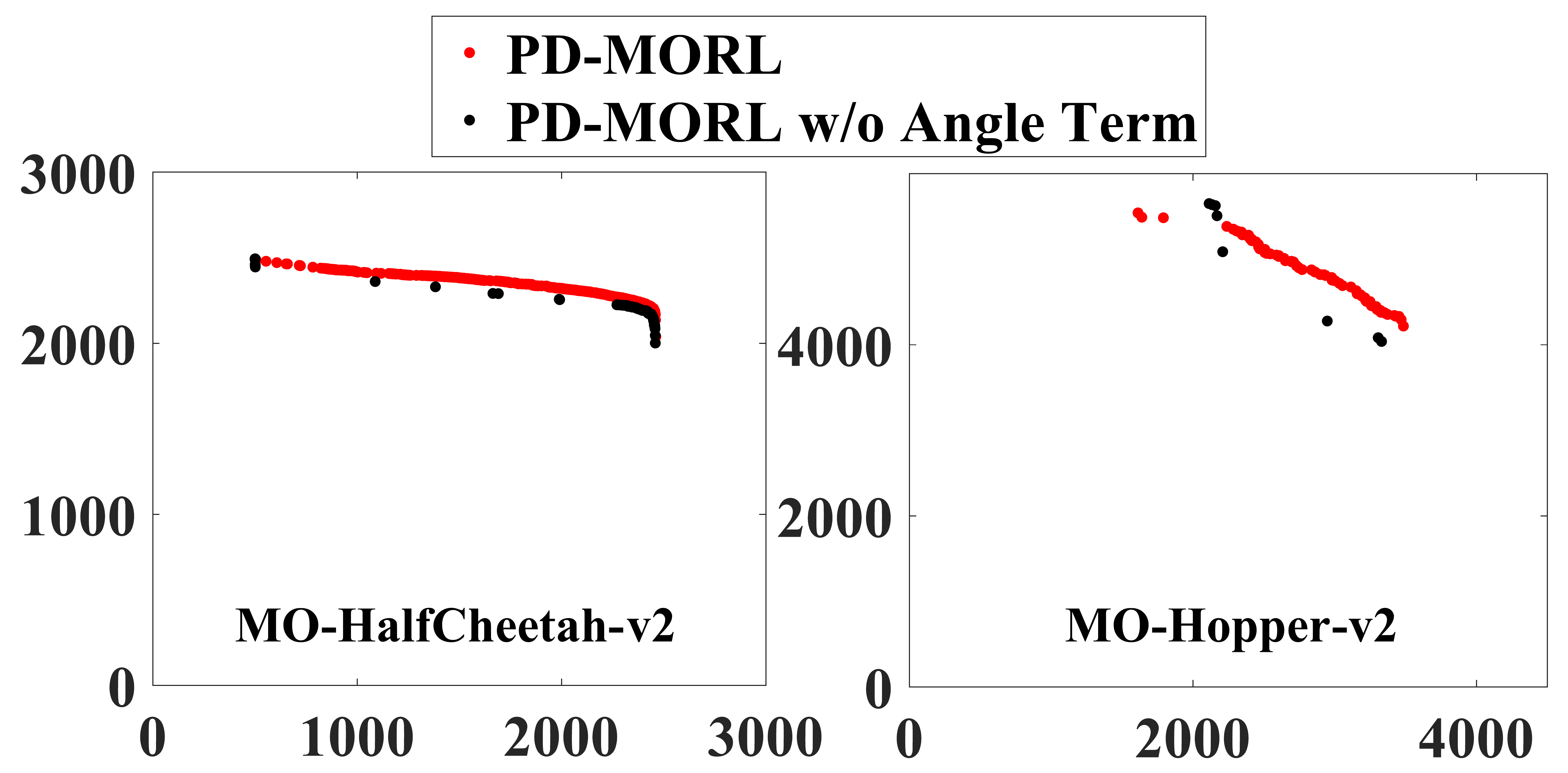} 
\caption{Pareto front with and without the directional angle term in the loss function for MO-HalfCheetah-v2 and MO-Hopper-v2 problems. In the latter case, the solutions are sparse and represented by few points.}
\label{fig:pareto_angle}
\normalsize
\end{figure}

\begin{table}[t]
\caption{Ablation study of the proposed approach with and without proposed terms and improvements on continuous control benchmarks in terms of hypervolume and sparsity metrics. Reference point for hypervolume calculation is set to (0,0) point. HV$^*$: Hypervolume}
\label{tab:ablation}
\resizebox{\textwidth}{!}{%
\begin{tabular}{@{}lccccc@{}}
\toprule
\multirow{2}{*}{}                  & \multirow{2}{*}{} &
\multicolumn{1}{c}{\multirow{2}{*}{\textbf{PG-MORL~\citep{xu2020prediction}}}} &
\multicolumn{1}{c}{\multirow{2}{*}{\begin{tabular}[c]{@{}c@{}}\textbf{PD-MORL w/o } \\ \textbf{Angle Term}\end{tabular}}} & \multicolumn{1}{c}{\multirow{2}{*}{\begin{tabular}[c]{@{}c@{}}\textbf{PD-MORL w/o } \\ \textbf{Parallel Exploration}\end{tabular}}} &  \multicolumn{1}{c}{\multirow{2}{*}{\textbf{PD-MORL}}} \\ & & & & \\ \midrule
\multirow{2}{*}{\textbf{MO-Walker2d-v2}} & HV$^*$ & \multicolumn{1}{c}{$4.82\times10^6$} & \multicolumn{1}{c}{$\mathbf{5.51\pm0.004\times10^6}$} & \multicolumn{1}{c}{$2.81\pm0.009\times10^6$} & \multicolumn{1}{c}{$5.41\pm0.004\times10^6$} \\
 & Sparsity & $0.04\times10^4$ & $\mathbf{0.02\pm0.004\times 10^4}$ &
 $0.22\pm0.026\times 10^4$ & 
 $0.03\pm0.005\times 10^4$ \\\midrule
\multirow{2}{*}{\textbf{MO-HalfCheetah-v2}} & HV$^*$ & $5.77\times10^6$ & 
$5.61\pm0.003\times10^6$ & 
$5.84\pm0.001\times10^6$ & 
$\mathbf{5.89\pm0.002\times10^6}$ \\
 & Sparsity &  $\mathbf{0.44\times10^3}$ & $49.73\pm3.917\times10^3$ &
 $1.218\pm0.172\times10^3$ &
 $0.49\pm0.041\times 10^3$ \\\midrule
\multirow{2}{*}{\textbf{MO-Ant-v2}} & HV$^*$ & $6.35\times10^6$ &
$\mathbf{9.12\pm0.017\times10^6}$ & 
$4.98\pm0.237\times10^6$ & 
$7.48\pm0.019\times10^6$ \\
 & Sparsity & $\mathbf{0.37\times10^4}$ & $\mathbf{0.58\pm0.09\times10^4}$ & 
 $0.91\pm0.25\times10^4$ & 
 $0.78\pm0.2\times10^4$ \\\midrule
\multirow{2}{*}{\textbf{MO-Swimmer-v2}} & HV$^*$ & $2.57\times10^4$ &
$2.78\pm0.001\times 10^4$ & 
$3.21\pm0.001\times 10^4$ & 
$\mathbf{3.21\pm0.001\times 10^4}$ \\
 & Sparsity & $9.9$ & $6.3\pm0.9$ & 
 $9.4\pm1.1$ & 
 $\mathbf{5.7\pm0.7}$ \\\midrule
\multirow{2}{*}{\textbf{MO-Hopper-v2}} & HV$^*$ & $\mathbf{2.02\times10^7}$ &
$1.92\pm0.023\times10^7$ & 
$0.63\pm0.005\times10^7$ & 
$1.88\pm0.005\times10^7$ \\
 & Sparsity & $0.5\times10^4$ & $6.1\pm2.30\times10^4$ & 
 $20.53\pm33.20\times10^4$ & 
 $\mathbf{0.3\pm0.09\times10^4}$ \\ \bottomrule 
\end{tabular}%
}
\end{table}

\end{document}